\documentclass{article}
\usepackage{microtype}
\usepackage{graphicx}
\usepackage{booktabs} 

\usepackage{hyperref}



\usepackage[accepted]{icml2018}


\usepackage{amsmath}
\usepackage{amsthm}
\usepackage{amssymb}
\usepackage{bbm}
\usepackage{graphicx}
\usepackage{caption}
\usepackage{subcaption}
\usepackage{afterpage}
\usepackage{mathtools}
\usepackage{xspace}
\usepackage{dsfont}

\def\One{\mathds{1}}
\def\E{\mathbb{E}}
\def\R{\mathbb{R}}
\def\passive{Passive}
\def\dbalw{DBALw\xspace}
\def\dbalwm{DBALwm\xspace}
\def\idbal{IDBAL\xspace}

\theoremstyle{plain}
\newtheorem{thm}{\protect\theoremname}
  \theoremstyle{definition}
  \newtheorem{defn}[thm]{\protect\definitionname}
  \theoremstyle{remark}
  
  \theoremstyle{plain}
  \newtheorem{prop}[thm]{\protect\propositionname}
  \theoremstyle{plain}
  \newtheorem{lem}[thm]{\protect\lemmaname}
  \theoremstyle{plain}
  \newtheorem{cor}[thm]{\protect\corollaryname}
  \theoremstyle{plain}
  \newtheorem{fact}[thm]{\protect\factname}
  \theoremstyle{plain}
\providecommand{\corollaryname}{Corollary}
  \providecommand{\definitionname}{Definition}
  \providecommand{\lemmaname}{Lemma}
  \providecommand{\propositionname}{Proposition}
  \providecommand{\remarkname}{Remark}
  \providecommand{\factname}{Fact}
\providecommand{\theoremname}{Theorem}

%

\begin{document}
\twocolumn[
\icmltitle{Active Learning with Logged Data}



\begin{icmlauthorlist}
\icmlauthor{Songbai Yan}{aff}
\icmlauthor{Kamalika Chaudhuri}{aff}
\icmlauthor{Tara Javidi}{aff}
\end{icmlauthorlist}

\icmlaffiliation{aff}{University of California, San Diego}
\icmlcorrespondingauthor{Songbai Yan}{yansongbai@eng.ucsd.edu}

\icmlkeywords{Active Learning}

\vskip 0.3in
]



\printAffiliationsAndNotice{}  

\begin{abstract}
We consider active learning with logged data, where labeled examples are drawn conditioned on a predetermined logging policy, and the goal is to learn a classifier on the entire population, not just conditioned on the logging policy. Prior work addresses this problem either when only logged data is available, or purely in a controlled random experimentation setting where the logged data is ignored. In this work, we combine both approaches to provide an algorithm that uses logged data to bootstrap and inform experimentation, thus achieving the best of both worlds. Our work is inspired by a connection between controlled random experimentation and active learning, and modifies existing disagreement-based active learning algorithms to exploit logged data.
\end{abstract}

\section{Introduction}

We consider learning a classifier from logged data. Here, the learner has access to a logged labeled dataset that has been collected according to a known pre-determined policy, and his goal is to learn a classifier that predicts the labels accurately over the entire population, not just conditioned on the logging policy. 

This problem arises frequently in many natural settings. An example is predicting the efficacy of a treatment as a function of patient characteristics based on observed data. Doctors may assign the treatment to patients based on some predetermined rule; recording these patient outcomes produces a logged dataset where outcomes are observed conditioned on the doctors' assignment. A second example is recidivism prediction, where the goal is to predict whether a convict will re-offend. Judges use their own predefined policy to grant parole, and if parole is granted, then an outcome (reoffense or not) is observed. Thus the observed data records outcomes conditioned on the judges' parole policy, while the learner's goal is to learn a predictor over the entire population. 

A major challenge in learning from logged data is that the logging policy may leave large areas of the data distribution under-explored. Consequently, empirical risk minimization (ERM) on the logged data leads to classifiers that may be highly suboptimal on the population. When the logging policy is known, a second option is to use a {\em{weighted}} ERM, that reweighs each observed labeled data point to ensure that it reflects the underlying population. However, this may lead to sample inefficiency if the logging policy does not adequately explore essential regions of the population. A final approach, typically used in clinical trials, is controlled random experimentation -- essentially, ignore the logged data, and record outcomes for fresh examples drawn from the population. This approach is expensive due to the high cost of trials, and wasteful since it ignores the observed data. 

Motivated by these challenges, we propose active learning to combine logged data with a small amount of strategically chosen labeled data that can be used to correct the bias in the logging policy. This solution has the potential to achieve the best of both worlds by limiting experimentation to achieve higher sample efficiency, and by making the most of the logged data. Specifically, we assume that in addition to the logged data, the learner has some additional unlabeled data that he can selectively ask an annotator to label. The learner's goal is to learn a highly accurate classifier over the entire population by using a combination of the logged data and with as few label queries to the annotator as possible.

How can we utilize logged data for better active learning? This problem has not been studied to the best of our knowledge. A naive approach is to use the logged data to come up with a {\em{warm start}} and then do standard active learning. In this work, we show that we can do even better. In addition to the warm start, we show how to use multiple importance sampling estimators to utilize the logged data more efficiently. Additionally, we introduce a novel debiasing policy that selectively avoids label queries for those examples that are highly represented in the logged data. 

Combining these three approaches, we provide a new algorithm. We prove that our algorithm is statistically consistent, and has a lower label requirement than simple active learning that uses the logged data as a warm start. Finally, we evaluate our algorithm experimentally on various datasets and logging policies. Our experiments show that the performance of our method is either the best or close to the best for a variety of datasets and logging policies. This confirms that active learning to combine logged data with carefully chosen labeled data may indeed yield performance gains.

\section{Preliminaries}
\subsection{Problem Setup}
Instances are drawn from an instance space $\mathcal{X}$ and a label space
$\mathcal{Y}=\{0,1\}$. There is an underlying data distribution $D$ over $\mathcal{X}\times\mathcal{Y}$ that describes the population. There is a hypothesis space $\mathcal{H}\subset\mathcal{Y}^{\mathcal{X}}$. For simplicity, we assume $\mathcal{H}$ is a finite set, but our results can be generalized to VC-classes by standard arguments \cite{VC71}.

The learning algorithm has access to two sources of data: logged data, and online data. The logged data are generated from $m$ examples $\{(X_{t},Y_{t})\}_{t=1}^{m}$ drawn i.i.d. from $D$, and a logging policy $Q_0: \mathcal{X} \rightarrow [0,1]$ that determines the probability of observing the label. For each example $(X_{t},Y_{t})$ ($1\leq t \leq m$), an independent Bernoulli random variable $Z_t$ is drawn with expectation $Q_0(X_t)$, and then the label $Y_t$ is revealed to the learning algorithm if $Z_t=1$\footnote{Note that this generating process implies the standard unconfoundedness assumption in the counterfactual inference literature: $\Pr(Y_t,Z_t\mid X_t)=\Pr(Y_t\mid X_t)\Pr(Z_t\mid X_t)$, that is, given the instance $X_t$, its label $Y_t$ is conditionally independent with the action $Z_t$ (whether the label is observed).}. We call $T_{0}=\{(X_{t},Y_{t},Z_{t})\}_{t=1}^{m}$ the logged dataset. From the algorithm's perspective, we assume it knows the logging policy $Q_0$, and only observes instances $\{X_t\}_{t=1}^{m}$, decisions of the policy $\{Z_t\}_{t=1}^{m}$, and revealed labels $\{Y_{t}\mid Z_t=1\}_{t=1}^{m}$. 


The online data are generated as follows. Suppose there is a stream of another $n$ examples $\{(X_{t},Y_{t})\}_{t=m+1}^{m+n}$ drawn i.i.d. from distribution $D$. At time $t$ ($m<t\leq m+n$), the algorithm uses its query policy to compute a bit $Z_t \in \{0,1\}$, and then the label $Y_t$ is revealed to the algorithm if $Z_t=1$. The computation of $Z_t$ may in general be randomized, and is based on the observed logged data $T_0$, observed instances $\{X_i\}_{i=m+1}^{t}$, previous decisions$\{Z_i\}_{i=m+1}^{t-1}$, and observed labels $\{Y_i\mid Z_i=1\}_{i=m+1}^{t-1}$.

The goal of the algorithm is to learn a classifier $h\in\mathcal{H}$ from observed logged data and online data. Fixing $D$, $Q_0$, $m$, $n$, the performance measures are: (1) the error rate $l(h):=\Pr_D(h(X)\neq Y)$ of the output classifier, and (2) the number of label queries on the online data. Note that the error rate is over the entire population $D$ instead of conditioned on the logging policy, and that we assume the logged data $T_0$ come at no cost. 
In this work, we are interested in the situation where $n$ is about the same as or less than $m$.

\subsection{Background on Disagreement-Based Active Learning}
Our algorithm is based on Disagreement-Based Active Learning (DBAL) which has rigorous theoretical guarantees and can be implemented practically (see \cite{H14} for a survey, and \cite{HY15,HAHLS15} for some recent developments). DBAL iteratively maintains a candidate set of classifiers that contains the optimal classifier $h^{\star}:=\arg\min_{h\in\mathcal{H}}l(h)$ with high probability. At the $k$-th iteration, the candidate set $V_k$ is constructed as all classifiers which have low estimated error on examples observed up to round $k$. Based on $V_k$, the algorithm constructs a disagreement set $D_k$ to be a set of instances on which there are at least two classifiers in $V_k$ that predict different labels. Then the algorithm draws a set $T_k$ of unlabeled examples, where the size of $T_k$ is a parameter of the algorithm. For each instance $X\in T_k$, if it falls into the disagreement region $D_k$, then the algorithm queries for its label; otherwise, observing that all classifiers in $V_k$ have the same prediction on $X$, its label is not queried. The queried labels are then used to update future candidate sets. 

\subsection{Background on Error Estimators}\label{sec:estimator}
Most learning algorithms, including DBAL, require estimating the error rate of a classifier. A good error estimator should be unbiased and of low variance. When instances are observed with different probabilities, a commonly used error estimator is the standard importance sampling estimator that reweighs each observed labeled example according to the inverse probability of observing it.

Consider a simplified setting where the logged dataset $T_0 = (X_i, Y_i, Z_i)_{i=1}^{m}$ and $\Pr(Z_i=1\mid X_i)=Q_0(X_i)$. On the online dataset $T_1 = (X_i, Y_i, Z_i)_{i=m+1}^{m+n}$,  the algorithm uses a fixed query policy $Q_1$ to determine whether to query for labels, that is, $\Pr(Z_i=1\mid X_i)=Q_1(X_i)$ for $m<i\leq m+n$. Let $S = T_0 \cup T_1$.

In this setting, the standard importance sampling (IS) error estimator for a classifier $h$ is:
\begin{align}
l_{\text{IS}}(h,S) := & \frac{1}{m+n}\sum_{i=1}^{m}\frac{\One\{h(X_i)\neq Y_i\}Z_i}{Q_0(X_i)}\nonumber\\
 & +\frac{1}{m+n}\sum_{i=m+1}^{m+n}\frac{\One\{h(X_i)\neq Y_i\}Z_i}{Q_1(X_i)}.\label{eq:IS}
\end{align}

$l_{\text{IS}}$ is unbiased, and its variance is proportional to $\sup_{i=0,1;x\in\mathcal{X}}\frac{1}{Q_i(x)}$. Although the learning algorithm can choose its query policy $Q_1$ to avoid $Q_1(X_i)$ to be too small for $i>m$, $Q_0$ is the logging policy that cannot be changed. When $Q_0(X_i)$ is small for some $i\leq m$, the estimator in (\ref{eq:IS}) have a high variance such that it may be even better to just ignore the logged dataset $T_0$.

An alternative is the multiple importance sampling (MIS) estimator with balanced heuristic \cite{VG95}:
\begin{equation} \label{eq:MIS}
l_{\text{MIS}}(h,S):=\sum_{i=1}^{m+n}\frac{\One\{h(X_i)\neq Y_i\}Z_i}{mQ_0(X_i)+nQ_1(X_i)}.
\end{equation}

It can be proved that $l_{\text{MIS}}(h,S)$ is indeed an unbiased estimator for $l(h)$.  Moreover, as proved in \cite{OZ00,ABSJ17}, (\ref{eq:MIS}) always has a lower variance than both (\ref{eq:IS}) and the standard importance sampling estimator that ignores the logged data.

In this paper, we use multiple importance sampling estimators, and write $l_{\text{MIS}}(h,S)$ as $l(h,S)$.

\paragraph{Additional Notations}
In this paper, unless otherwise specified, all probabilities and expectations are over the distribution $D$, and we drop $D$ from subscripts henceforth. 

Let $\rho(h_{1},h_{2}):=\Pr(h_{1}(X)\neq h_{2}(X))$ be the disagreement mass between $h_1$ and $h_2$, and $\rho_{S}(h_{1},h_{2}):=\frac{1}{N}\sum_{i=1}^{N}\One\{h_{1}(x_{i})\neq h_{2}(x_{i})\}$ for $S=\{x_{1},x_{2},\dots,x_{N}\}\subset\mathcal{X}$ be the empirical disagreement mass between $h_1$ and $h_2$ on $S$.

For any $h\in\mathcal{H}$, $r>0$, define $B(h,r):=\{h'\in\mathcal{H}\mid\rho(h,h')\leq r\}$ to be $r$-ball around $h$.
For any $V\subseteq\mathcal{H}$, define the disagreement region $\text{DIS}(V):=\{x\in\mathcal{X}\mid\exists h_{1}\neq h_{2}\in V\text{ s.t. }h_{1}(x)\neq h_{2}(x)\}$.

\section{Algorithm}
\subsection{Main Ideas}\label{subsec:main-ideas}
Our algorithm employs the disagreement-based active learning framework, but modifies the main DBAL algorithm in three key ways.
\subsubsection*{Key Idea 1: Warm-Start}
Our algorithm applies a straightforward way of making use of the logged data $T_0$ inside the DBAL framework: to set the initial candidate set $V_0$ to be the set of classifiers that have a low empirical error on $T_0$.

\subsubsection*{Key Idea 2: Multiple Importance Sampling}
Our algorithm uses multiple importance sampling estimators instead of standard importance sampling estimators. As noted in the previous section, in our setting, multiple importance sampling estimators are unbiased and have lower variance, which results in a better performance guarantee. 

We remark that the main purpose of using multiple importance sampling estimators here is to control the variance due to the predetermined logging policy. In the classical active learning setting without logged data, standard importance sampling can give satisfactory performance guarantees \cite{BDL09,BHLZ10,HAHLS15}.

\subsubsection*{Key Idea 3: A Debiasing Query Strategy}
The logging policy $Q_0$ introduces bias into the logged data: some examples may be underrepresented since $Q_0$ chooses to reveal their labels with lower probability. Our algorithm employs a debiasing query strategy to neutralize this effect. For any instance $x$ in the online data, the algorithm would query for its label with a lower probability if $Q_{0}(x)$ is relatively large.

It is clear that a lower query probability leads to fewer label queries. Moreover, we claim that our debiasing strategy, though queries for less labels, does not deteriorate our theoretical guarantee on the error rate of the final output classifier. To see this, we note that we can establish a concentration bound for multiple importance sampling estimators that with probability at least $1-\delta$, for all $h\in \mathcal{H}$,
\begin{align}
l(h)-l(h^{\star}) \leq &  2(l(h,S)-l(h^{\star},S))\nonumber\\
+ & \gamma_{1}\sup_{x\in\mathcal{X}}\frac{\One\{h(x)\neq h^{\star}(x)\}\log\frac{|\mathcal{H}|}{\delta}}{mQ_{0}(x)+nQ_1(x)}\nonumber\\
+ & \gamma_{1}\sqrt{\sup_{x\in\mathcal{X}}\frac{\One\{h(x)\neq h^{\star}(x)\}\log\frac{|\mathcal{H}|}{\delta}}{mQ_{0}(x)+nQ_1(x)}l(h^{\star})}\label{eq:main-concentration}
\end{align}
where $m,n$ are sizes of logged data and online data respectively, $Q_0$ and $Q_1$ are query policy during the logging phase and the online phase respectively, and $\gamma_1$ is an absolute constant (see Corollary~\ref{cor:gen} in Appendix for proof).

This concentration bound implies that for any $x\in\mathcal{X}$, if $Q_0(x)$ is large, we can set $Q_1(x)$ to be relatively small (as long as $mQ_0(x)+nQ_1(x) \geq \inf_{x'}mQ_0(x')+nQ_1(x')$) while achieving the same concentration bound. Consequently, the upper bound on the final error rate that we can establish from this concentration bound would not be impacted by the debiasing querying strategy.

One technical difficulty of applying both multiple importance sampling and the debiasing strategy to the DBAL framework is adaptivity. Applying both methods requires that the query policy and consequently the importance weights in the error estimator are updated with observed examples in each iteration. In this case, the summands of the error estimator are not independent, and the estimator becomes an adaptive multiple importance sampling estimator whose convergence property is still an open problem \cite{CMMR12}.

To circumvent this convergence issue and establish rigorous theoretical guarantees, in each iteration, we compute the error estimator from a fresh sample set. In particular, we partition the logged data and the online data stream into disjoint subsets, and we use one logged subset and one online subset for each iteration.

\subsection{Details of the Algorithm}
\begin{algorithm}
\begin{algorithmic}[1]
\STATE{Input: confidence $\delta$, size of online data $n$, logging policy $Q_0$, logged data $T_0$.}
\STATE{$K \gets \lceil\log{n}\rceil$.}
\STATE{$\tilde{S}_0 \gets T_0^{(0)}$; $V_0 \gets \mathcal{H}$; $D_0 \gets \mathcal{X}$; $\xi_0 \gets \inf_{x\in \mathcal{X}}Q_0(x)$.}
\FOR{$k=0, \dots, K-1$}
	\STATE{Define $\delta_k \gets \frac{\delta}{(k+1)(k+2)}$; $\sigma(k, \delta) \gets \frac{\log|\mathcal{H}|/\delta}{m_k\xi_k+n_k}$;
    $\Delta_k(h,h')\gets\gamma_0(\sigma(k, \frac{\delta_k}{2}) + \sqrt{\sigma(k, \frac{\delta_k}{2}) \rho_{\tilde{S}_{k}}(h,h')})$.}
    \STATE \(\triangleright\) {$\gamma_0$ is an absolute constant defined in Lemma~\ref{lem:h_star_in}.}
	\STATE{$\hat{h}_k \gets \arg\min_{h\in V_k} l(h, \tilde{S}_k)$.}
	\STATE{Define the candidate set $$V_{k+1} \gets \{ h\in V_k \mid l(h,\tilde{S}_k) \leq l(\hat{h}_k,\tilde{S}_k)+\Delta_k(h,\hat{h}_k)\}$$ and its disagreement region $D_{k+1} \gets \text{DIS}(V_{k+1})$.}
    \STATE{Define $\xi_{k+1}\gets\inf_{x\in D_{k+1}}Q_0(x)$, and $Q_{k+1}(x)\gets\One\{Q_0(x)\leq \xi_{k+1} + 1/\alpha\}$.} 
	\STATE{Draw $n_{k+1}$ samples $\{(X_t,Y_t)\}_{t=m+n_1\cdots+n_k+1}^{m+n_1+\cdots+n_{k+1}}$, and present $\{X_t\}_{t=m+n_1+\cdots+n_k+1}^{m+n_1+\cdots+n_{k+1}}$ to the algorithm.}
	\FOR{$t=m+n_1+\cdots+n_k+1 \text{ to } m+n_1+\cdots+n_{k+1}$}
		\STATE{$Z_t \gets Q_{k+1}(X_t)$.}	
		\IF{$Z_t=1$}
		\STATE{If $X_t \in D_{k+1}$, query for label: $\tilde{Y}_t\gets Y_t$; otherwise infer $\tilde{Y}_t \gets \hat{h}_k(X_t)$.}
		\ENDIF
	\ENDFOR	
	\STATE{$\tilde{T}_{k+1} \gets \{X_t, \tilde{Y}_t, Z_t\}_{t=m+n_1+\cdots+n_k+1}^{m+n_1+\cdots+n_{k+1}}$.}
    \STATE{$\tilde{S}_{k+1} \gets T_0^{(k+1)}\cup \tilde{T}_{k+1}$.}
\ENDFOR
\STATE{Output $\hat{h}=\arg\min_{h\in V_{K}} l(h, \tilde{S}_{K})$.}
\end{algorithmic}
\caption{\label{alg:main}Acitve learning with logged data}
\end{algorithm}
The Algorithm is shown as Algorithm~\ref{alg:main}. Algorithm~\ref{alg:main} runs in $K$ iterations where $K=\lceil\log{n}\rceil$ (recall $n$ is the size of the online data stream). For simplicity, we assume $n=2^K-1$. 

As noted in the previous subsection, we require the algorithm to use a disjoint sample set for each iteration. Thus, we partition the data as follows. The online data stream is partitioned into $K$ parts $T_1,\cdots, T_K$ of sizes $n_1=2^0, \cdots, n_K=2^{K-1}$. We define $n_0=0$ for completeness. The logged data $T_0$ is partitioned into $K+1$ parts $T_0^{(0)},\cdots, T_0^{(K)}$ of sizes $m_0=m/3, m_1=\alpha n_1, m_2=\alpha n_2, \cdots, m_K=\alpha n_K$ (where $\alpha=2m/3n$ and we assume $\alpha\geq1$ is an integer for simplicity. $m_0$ can take other values as long as it is a constant factor of $m$). The algorithm uses $T_0^{(0)}$ to construct an initial candidate set, and uses $S_k := T_0^{(k)} \cup T_k$ in iteration $k$.

Algorithm~\ref{alg:main} uses the disagreement-based active learning framework. At iteration $k$ ($k=0,\cdots,K-1$), it first constructs a candidate set $V_{k+1}$ which is the set of classifiers whose training error (using the multiple importance sampling estimator) on $T_0^{(k)} \cup \tilde{T}_k$ is small, and its disagreement region $D_{k+1}$. At the end of the $k$-th iteration, it receives the $(k+1)$-th part of the online data stream $\{X_i\}_{i=m+n_1\cdots+n_k+1}^{m+n_1\cdots+n_{k+1}}$ from which it can query for labels. It only queries for labels inside the disagreement region $D_{k+1}$. For any example $X$ outside the disagreement region, Algorithm~\ref{alg:main} infers its label $\tilde{Y}=\hat{h}_k(X)$. Throughout this paper, we denote by $T_k$, $S_k$ the set of examples with original labels, and by $\tilde{T}_k$, $\tilde{S_k}$ the set of examples with inferred labels. The algorithm only observes $\tilde{T}_k$ and $\tilde{S_k}$.

Algorithm~\ref{alg:main} uses aforementioned debiasing query strategy, which leads to fewer label queries than the standard disagreement-based algorithms. To simplify our analysis, we round the query probability $Q_k(x)$ to be 0 or 1.

\section{Analysis}
\subsection{Consistency}
We first introduce some additional quantities.

Define $h^\star:=\min_{h\in\mathcal{H}}l(h)$ to be the best classifier in $\mathcal{H}$, and $\nu:=l(h^\star)$ to be its error rate. Let $\gamma_2$ to be an absolute constant to be specified in Lemma~\ref{lem:dis-radius} in Appendix. 

We introduce some definitions that will be used to upper-bound the size of the disagreement sets in our algorithm. Let $\text{DIS}_0:=\mathcal{X}$. Recall $K=\lceil\log n\rceil$. For $k=1,\dots,K$, let $\zeta_k := \sup_{x\in\text{DIS}_{k-1}}\frac{\log(2|\mathcal{H}|/\delta_k)}{m_{k-1}Q_0(x)+n_{k-1}}$, $\epsilon_{k}:=\gamma_{2}\zeta_k+\gamma_{2}\sqrt{\zeta_kl(h^{\star})}$, $\text{DIS}_{k}:=\text{DIS}(B(h^{\star},2\nu+\epsilon_{k}))$. Let $\zeta := \sup_{x\in\text{DIS}_1}\frac1{\alpha Q_0(x)+1}$.

The following theorem gives statistical consistency of our algorithm.
\begin{thm}
\label{thm:Convergence} There is an absolute constant
$c_{0}$ such that for any $\delta>0$, with probability at least
$1-\delta$,
\begin{align*}
l(\hat{h})\leq & l(h^{\star})+c_{0}\sup_{x\in\text{DIS}_{K}}\frac{\log\frac{K|\mathcal{H}|}{\delta}}{m Q_{0}(x)+n} \\
& +c_{0}\sqrt{\sup_{x\in\text{DIS}_{K}}\frac{\log\frac{K|\mathcal{H}|}{\delta}}{m Q_{0}(x)+n}l(h^{\star})}.
\end{align*}
\end{thm}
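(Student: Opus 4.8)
The plan is to run the standard disagreement-based analysis on top of the multiple-importance-sampling concentration inequality~(\ref{eq:main-concentration}), adding two ingredients specific to this setting: a union bound that exploits the fresh-data partition, and a case analysis that converts the debiasing query policy back into the clean denominator $mQ_0(x)+n$. First I would fix the favorable event $E$ on which the concentration inequality of Corollary~\ref{cor:gen} holds, uniformly over all $h\in\mathcal{H}$, in every iteration $k=0,\dots,K$, invoked at per-round failure probability $\delta_k=\frac{\delta}{(k+1)(k+2)}$. Because the algorithm uses a disjoint logged subset $T_0^{(k)}$ and online subset $T_k$ in each round, the estimator $l(\cdot,\tilde S_k)$ is built from samples drawn independently of $V_k$, so the inequality applies conditionally per round; since $\sum_{k}\delta_k\le\delta$, this gives $\Pr(E)\ge 1-\delta$ and circumvents the adaptivity obstruction flagged in the algorithm description.

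On $E$ I would establish, by induction on $k$, the two structural facts that drive the proof. The first is that $h^\star\in V_k$ for all $k$ (this is Lemma~\ref{lem:h_star_in}, which fixes $\gamma_0$ in $\Delta_k$): since $h^\star\in V_k$ and $\hat h_k$ minimizes $l(\cdot,\tilde S_k)$ over $V_k$, we have $l(\hat h_k,\tilde S_k)\le l(h^\star,\tilde S_k)$, and the concentration bound guarantees $l(h^\star,\tilde S_k)-l(\hat h_k,\tilde S_k)\le\Delta_k(h^\star,\hat h_k)$, which is exactly the condition for $h^\star$ to survive into $V_{k+1}$. The second is the disagreement-radius bound (Lemma~\ref{lem:dis-radius}, fixing $\gamma_2$): for every $h\in V_{k+1}$ the membership inequality $l(h,\tilde S_k)\le l(\hat h_k,\tilde S_k)+\Delta_k(h,\hat h_k)\le l(h^\star,\tilde S_k)+\Delta_k(h,\hat h_k)$ combined with~(\ref{eq:main-concentration}) yields $l(h)-l(h^\star)\le\epsilon_{k+1}$, whence $\rho(h,h^\star)\le l(h)+l(h^\star)\le 2\nu+\epsilon_{k+1}$, so $V_{k+1}\subseteq B(h^\star,2\nu+\epsilon_{k+1})$ and $D_{k+1}=\text{DIS}(V_{k+1})\subseteq\text{DIS}_{k+1}$. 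The recursion closes because queries in round $k$ occur only inside $D_k\subseteq\text{DIS}_k$, so the variance term of~(\ref{eq:main-concentration}) is controlled by $\sup_{x\in\text{DIS}_k}\frac1{m_kQ_0(x)+n_k}$, which is precisely the $\zeta_{k+1}$ entering $\epsilon_{k+1}$; the base case $\text{DIS}_0=\mathcal{X}$ is seeded by the warm-start estimate from $T_0^{(0)}$.

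To finish I would apply~(\ref{eq:main-concentration}) once more to the output $\hat h=\arg\min_{h\in V_K}l(h,\tilde S_K)$. Since $\hat h,h^\star\in V_K$ disagree only on $\text{DIS}(V_K)\subseteq\text{DIS}_K$, the indicator $\One\{\hat h(x)\neq h^\star(x)\}$ restricts both suprema to $x\in\text{DIS}_K$; and since $\hat h$ is the minimizer with $h^\star\in V_K$, the leading term $2(l(\hat h,\tilde S_K)-l(h^\star,\tilde S_K))$ is nonpositive and drops out. It then remains to replace the round-$K$ denominator $m_KQ_0(x)+n_KQ_K(x)$ by $mQ_0(x)+n$ via the debiasing case analysis, using $m_K=\Theta(m)$, $n_K=\Theta(n)$, and $\alpha=2m/3n$: if $Q_0(x)\le\xi_K+1/\alpha$ then $Q_K(x)=1$ and $m_KQ_0(x)+n_K=\Theta(mQ_0(x)+n)$, while if $Q_0(x)>\xi_K+1/\alpha\ge 1/\alpha$ then $Q_K(x)=0$ but $m_KQ_0(x)=\Theta(m)\,Q_0(x)\ge\Omega(n)$ already dominates, so $m_KQ_0(x)\ge\Omega(mQ_0(x)+n)$ in both cases. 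Hence $\sup_x\frac{\One\{\hat h(x)\neq h^\star(x)\}}{m_KQ_0(x)+n_KQ_K(x)}\le c\sup_{x\in\text{DIS}_K}\frac1{mQ_0(x)+n}$, and folding $\log(|\mathcal{H}|/\delta_K)=O(\log\frac{K|\mathcal{H}|}{\delta})$ into an absolute constant $c_0$ gives the stated bound (the square-root term follows identically under the substitution).

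The step I expect to be the main obstacle is the inductive disagreement-radius argument, and within it the interaction with inferred labels: the estimator $l(\cdot,\tilde S_K)$ uses $\tilde Y_t=\hat h_k(X_t)$ outside $D_{k+1}$, yet~(\ref{eq:main-concentration}) is a statement about genuine labels. Making the recursion go through requires arguing that for any $h\in V_{K}$ the comparison $l(h,\tilde S_K)-l(h^\star,\tilde S_K)$ is unaffected by the inferred labels, since $h$ and $h^\star$ agree outside $\text{DIS}(V_K)=D_K$, which is exactly the region where labels in $\tilde S_K$ are genuine. Carefully reconciling this disagreement-region validity with the rounded $0/1$ policy $Q_K$, the MIS reweighting, and the relation between the empirical mass $\rho_{\tilde S_k}(h,h^\star)$ inside $\Delta_k$ and the true mass $\rho(h,h^\star)$ defining $\text{DIS}_k$ is the delicate part of the proof.
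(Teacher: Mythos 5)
Your proposal matches the paper's proof in all essentials: the same union bound over per-iteration concentration events on disjoint data segments, the same induction establishing $h^{\star}\in V_{k}$ (Lemma~\ref{lem:h_star_in}) and $D_{k}\subseteq\text{DIS}_{k}$ (Lemmas~\ref{lem:dis-radius} and~\ref{lem:Dk-DISk}), the same reduction from inferred to genuine labels via agreement outside the disagreement region (Lemma~\ref{lem:l-diff-S-S_tilde}), and the same case analysis on the debiasing policy to recover the denominator $mQ_{0}(x)+n$ (Lemma~\ref{lem:Q}). The only cosmetic difference is that you apply the concentration bound directly to $\hat{h}$ and drop the nonpositive empirical term, whereas the paper invokes Lemma~\ref{lem:dis-radius} for all of $V_{K+1}(\delta_{K}/2)$; the content is identical.
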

\subsection{Label Complexity}
We first introduce the adjusted disagreement coefficient, which characterizes the rate of decrease of the query region as the candidate set shrinks.
\begin{defn}
For any measurable set $A\subseteq \mathcal{X}$, define $S(A, \alpha)$ to be
\[
\bigcup_{A'\subseteq A} \left(A'\cap\left\{ x:Q_{0}(x)\leq\inf_{x\in A'}Q_{0}(x)+\frac{1}{\alpha}\right\} \right).
\]
For any $r_{0}\geq2\nu$, $\alpha\geq1$, define the adjusted disagreement coefficient $\tilde{\theta}(r_{0},\alpha)$ to be
\[
\sup_{r>r_{0}} \frac{1}{r}\Pr(S(\text{DIS}(B(h^{\star},r)), \alpha )).
\]
\end{defn}

The adjusted disagreement coefficient is a generalization of the standard disagreement coefficient \citep{H07} which has been widely used for analyzing active learning algorithms. The standard disagreement coefficient $\theta(r)$ can be written as $\theta(r) = \tilde{\theta}(r,1)$, and clearly $\theta(r)\geq \tilde{\theta}(r,\alpha)$ for all $\alpha\geq1$.

We can upper-bound the number of labels queried by our algorithm using the adjusted disagreement coefficient. (Recall that we only count labels queried during the online phase, and that $\alpha=2m/3n\geq1$)
\begin{thm}
\label{thm:Label-Complexity}There is an absolute
constant $c_{1}$ such that for any $\delta>0$, with probability
at least $1-\delta$, the number of labels queried by Algorithm~\ref{alg:main}
is at most: 

\begin{align*}
c_1\tilde{\theta}(2\nu+\epsilon_K,\alpha)( & n\nu+\zeta\log n\log\frac{|\mathcal{H}|\log n}{\delta}\\
& +\log n\sqrt{n\nu\zeta\log\frac{|\mathcal{H}|\log n}{\delta}}).
\end{align*}
\end{thm}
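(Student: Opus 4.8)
The plan is to bound, iteration by iteration, the probability that a fresh online example gets queried, multiply by the batch size, and sum over the $K=\lceil\log n\rceil$ rounds; the geometric growth of the batch sizes $n_k=2^{k-1}$ together with the geometric decay of the radii $\epsilon_k$ then produces the three terms in the claimed bound. Throughout I would work on the high-probability event of Lemma~\ref{lem:dis-radius}, under which the candidate sets shrink as $V_k\subseteq B(h^{\star},2\nu+\epsilon_k)$, so that the disagreement regions satisfy $D_k\subseteq\text{DIS}_k=\text{DIS}(B(h^{\star},2\nu+\epsilon_k))$ for every $k$.

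First I would pin down exactly which examples are queried. In iteration $k$, an example $X_t$ from the fresh batch of size $n_{k+1}$ is labeled only when $Z_t=Q_{k+1}(X_t)=1$ and $X_t\in D_{k+1}$, i.e.\ when $X_t\in D_{k+1}\cap\{x:Q_0(x)\leq\inf_{x'\in D_{k+1}}Q_0(x')+1/\alpha\}$. Taking $A'=A=D_{k+1}$ in the definition of $S(\cdot,\alpha)$ shows this query region is contained in $S(D_{k+1},\alpha)$; since $S(\cdot,\alpha)$ is monotone in its set argument and $D_{k+1}\subseteq\text{DIS}(B(h^{\star},2\nu+\epsilon_{k+1}))$ on the good event, the query region lies in $S(\text{DIS}(B(h^{\star},2\nu+\epsilon_{k+1})),\alpha)$. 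The definition of the adjusted disagreement coefficient, together with monotonicity of $r\mapsto\Pr(S(\text{DIS}(B(h^{\star},r)),\alpha))$ to handle the endpoint $r=2\nu+\epsilon_K$, then bounds the probability a fresh example is queried in this batch by $\tilde{\theta}(2\nu+\epsilon_K,\alpha)(2\nu+\epsilon_{k+1})$.

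Next I would turn these per-round query probabilities into a high-probability count. Because each batch is drawn fresh and the query region is measurable with respect to the data of earlier rounds, the number of queries $N_{k+1}$ in batch $k+1$ is conditionally a sum of i.i.d.\ Bernoulli variables with the mean bound above; a multiplicative Chernoff bound with failure probability $\delta_k$ gives $N_{k+1}\lesssim n_{k+1}\tilde{\theta}(2\nu+\epsilon_K,\alpha)(2\nu+\epsilon_{k+1})+\log(1/\delta_k)$, and a union bound over the $K$ rounds controls all batches at once. Summing the leading terms yields $\tilde{\theta}(2\nu+\epsilon_K,\alpha)\sum_{k=1}^K n_k(2\nu+\epsilon_k)$. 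Here I would substitute $\epsilon_k=\gamma_2\zeta_k+\gamma_2\sqrt{\zeta_k\nu}$ and use the bookkeeping identity $m_{k-1}Q_0(x)+n_{k-1}=n_{k-1}(\alpha Q_0(x)+1)$ with $n_k=2n_{k-1}$ and $\text{DIS}_{k-1}\subseteq\text{DIS}_1$ to get $n_k\zeta_k\lesssim\zeta\log\frac{|\mathcal{H}|}{\delta_k}$ (the $k=1$ batch has $n_1=1$ and is absorbed). The $2\nu$ part sums to $n\nu$ since $\sum_k n_k=n$; the $\zeta_k$ part sums over the $K=\log n$ rounds to $\zeta\log n\log\frac{|\mathcal{H}|\log n}{\delta}$; and the $\sqrt{\zeta_k\nu}$ part, via $n_k\sqrt{\zeta_k\nu}\lesssim\sqrt{n_k\,\zeta\nu\log\frac{|\mathcal{H}|}{\delta_k}}$ and the crude bound $\sqrt{n_k}\leq\sqrt{n}$ across the $\log n$ rounds, yields the $\log n\sqrt{n\nu\zeta\log\frac{|\mathcal{H}|\log n}{\delta}}$ term. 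The additive corrections $\sum_k\log(1/\delta_k)=O(\log n\log\frac{\log n}{\delta})$ are lower order and folded into $c_1$.

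I expect the main obstacle to be the interplay between adaptivity and concentration: the query region in each round is itself random, depending on all previously observed data through $V_{k+1}$ and $\xi_{k+1}$, so the clean binomial concentration is only legitimate because the algorithm draws a disjoint fresh batch per iteration. I would therefore have to condition on exactly the right filtration, so that the query region is fixed \emph{before} batch $k+1$ is revealed and the mean bound $n_{k+1}\tilde{\theta}(\cdot)(2\nu+\epsilon_{k+1})$ is a valid conditional expectation. The second delicate point is the passage from the rounded, debiased rule $Q_{k+1}$ to the operator $S(\cdot,\alpha)$: verifying that the debiasing queries exactly the set captured by the adjusted disagreement coefficient is precisely what lets it save labels without loss, and the endpoint case $r=2\nu+\epsilon_K$ must be handled by monotonicity rather than a direct appeal to the supremum in the definition.
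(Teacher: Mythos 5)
Your proposal is correct and follows essentially the same route as the paper's proof: bound the per-round query indicator by membership in the deterministic set $S(\text{DIS}_{k+1},\alpha)$ using $D_{k+1}\subseteq\text{DIS}_{k+1}$ on the good event, apply a Bernstein/Chernoff bound per batch with a union bound over the $K$ rounds, invoke Lemma~\ref{lem:dis-coefficient} to pass to the adjusted disagreement coefficient, and sum $n_{k+1}\epsilon_{k+1}$ via the identity $m_kQ_0(x)+n_k=n_k(\alpha Q_0(x)+1)$ and $\text{DIS}_k\subseteq\text{DIS}_1$. The only cosmetic difference is that the paper sidesteps the filtration/conditioning issue you flag by first passing to the non-random set $S(\text{DIS}_{k+1},\alpha)$ so that the summands are unconditionally i.i.d.; your conditional version reaches the same place.
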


\subsection{Remarks}
As a sanity check, note that when $Q_{0}(x)\equiv1$ (i.e., all labels in the logged data are shown), our results reduce to the classical bounds for disagreement-based active learning with a warm-start.

Next, we compare the theoretical guarantees of our algorithm with some alternatives. We fix the target error rate to be $\nu+\epsilon$, assume we are given $m$ logged data, and compare upper bounds on the number of labels required in the online phase to achieve the target error rate. Recall $\xi_0=\inf_{x\in\mathcal{X}}Q_0(x)$. Define $\tilde{\xi}_K:=\inf_{x\in\text{DIS}_K}Q_0(x)$, $\tilde{\theta}:=\tilde{\theta}(2\nu,\alpha)$, $\theta:=\theta(2\nu)$.

From Theorem~\ref{thm:Convergence} and \ref{thm:Label-Complexity} and some algebra, our algorithm requires $\tilde{O}\left(\nu\tilde{\theta}\cdot(\frac{\nu+\epsilon}{\epsilon^2}\log\frac{|\mathcal{H}|}{\delta}-m\tilde{\xi}_K)\right)$ labels.

The first alternative is passive learning that requests all labels for $\{X_t\}_{t=m+1}^{m+n}$ and finds an empirical risk minimizer using both logged data and online data. If standard importance sampling is used, the upper bound is $\tilde{O}\left(\frac{1}{\xi_0}(\frac{\nu+\epsilon}{\epsilon^2}\log\frac{|\mathcal{H}|}{\delta}-m\xi_0)\right)$.  If multiple importance sampling is used, the upper bound is $\tilde{O}\left(\frac{\nu+\epsilon}{\epsilon^2}\log\frac{|\mathcal{H}|}{\delta}-m\tilde{\xi}_K\right)$. Both bounds are worse than ours since $\nu\tilde{\theta}\leq1$ and $\xi_0\leq\tilde{\xi}_K\leq1$.

A second alternative is standard disagreement-based active learning with naive warm-start where the logged data is only used to construct an initial candidate set. For standard importance sampling, the upper bound is $\tilde{O}\left(\frac{\nu\theta}{\xi_0}(\frac{\nu+\epsilon}{\epsilon^2}\log\frac{|\mathcal{H}|}{\delta}-m\xi_0)\right)$. For multiple importance sampling (i.e., out algorithm without the debiasing step), the upper bound is $\tilde{O}\left(\nu\theta\cdot(\frac{\nu+\epsilon}{\epsilon^2}\log\frac{|\mathcal{H}|}{\delta}-m\tilde{\xi}_K)\right)$. Both bounds are worse than ours since $\nu\tilde{\theta}\leq\nu\theta$ and $\xi_0\leq\tilde{\xi}_K\leq1$.

A third alternative is to merely use past policy to label data -- that is, query on $x$ with probability $Q_0(x)$ in the online phase. The upper bound here is $\tilde{O}\left(\frac{\E[Q_0(X)]}{\xi_0}(\frac{\nu+\epsilon}{\epsilon^2}\log\frac{|\mathcal{H}|}{\delta}-m\xi_0)\right)$. This is worse than ours since $\xi_0\leq\E[Q_0(X)]$ and $\xi_0\leq\tilde{\xi}_K\leq1$.

\section{Experiments}
We now empirically validate our theoretical results by comparing our algorithm with a few alternatives on several datasets and logging policies.  In particular, we confirm that the test error of our classifier drops faster than several alternatives as the expected number of label queries increases. Furthermore, we investigate the effectiveness of two key components of our algorithm: multiple importance sampling and the debiasing query strategy.

\subsection{Methodology}
\subsubsection{Algorithms and Implementations}
To the best of our knowledge, no algorithms with theoretical guarantees have been proposed in the literature. We consider the overall performance of our algorithm against two natural baselines: standard passive learning (\textsc{\passive}) and the disagreement-based active learning algorithm with warm start (\textsc{\dbalw}). To understand the contribution of multiple importance sampling and the debiasing query strategy, we also compare the results with the disagreement-based active learning with warm start that uses multiple importance sampling (\textsc{\dbalwm}). We do not compare with the standard disagreement-based active learning that ignores the logged data since the contribution of warm start is clear: it always results in a smaller initial candidate set, and thus leads to less label queries.

Precisely, the algorithms we implement are:
\begin{itemize}
\item \textsc{\passive}: A passive learning algorithm that queries labels for all examples in the online sequence and uses the standard importance sampling estimator to combine logged data and online data.
\item \textsc{\dbalw}: A disagreement-based active learning algorithm that uses the standard importance sampling estimator, and constructs the initial candidate set with logged data. This algorithm only uses only our first key idea -- warm start.
\item \textsc{\dbalwm}: A disagreement-based active learning algorithm that uses the multiple importance sampling estimator, and constructs the initial candidate set with logged data. This algorithm uses our first and second key ideas, but not the debiasing query strategy. In other words, this method sets $Q_k\equiv 1$ in Algorithm~\ref{alg:main}.
\item \textsc{\idbal}: The method proposed in this paper: improved disagreement-based active learning algorithm with warm start that uses the multiple importance sampling estimator and the debiasing query strategy.
\end{itemize}

Our implementation of above algorithms follows Vowpal Wabbit \cite{vw}. Details can be found in Appendix.

\subsubsection{Data}
Due to lack of public datasets for learning with logged data, we convert datasets for standard binary classification into our setting. Specifically, we first randomly select 80\% of the whole dataset as training data and  the remaining 20\% is test data. We randomly select 50\% of the training set as logged data, and the remaining 50\% is online data. We then run an artificial logging policy (to be specified later) on the logged data to determine whether each label should be revealed to the learning algorithm or not.

Experiments are conducted on synthetic data and 11 datasets from UCI datasets \cite{L13} and LIBSVM datasets \cite{CL11}. The synthetic data is generated as follows: we generate 6000 30-dimensional points uniformly from hypercube $[-1,1]^{30}$, and labels are assigned by a random linear classifier and then flipped with probability 0.1 independently.

We use the following four logging policies:

\begin{itemize}
\item \textsc{Identical:} Each label is revealed with probability 0.005.
\item \textsc{Uniform:} We first assign each instance in the instance space to three groups with (approximately) equal probability. Then the labels in each group are revealed with probability 0.005, 0.05, and 0.5 respectively.
\item \textsc{Uncertainty:} We first train a coarse linear classifier using 10\% of the data. Then, for an instance at distance $r$ to the decision boundary, we reveal its label with probability $\exp(-cr^2)$ where $c$ is some constant. This policy is intended to simulate uncertainty sampling used in active learning. 
\item \textsc{Certainty:} We first train a coarse linear classifier using 10\% of the data. Then, for an instance at distance $r$ to the decision boundary, we reveal its label with probability $cr^2$ where $c$ is some constant. This policy is intended to simulate a scenario where an action (i.e. querying for labels in our setting) is taken only if the current model is certain about its consequence.
\end{itemize}

\subsubsection{Metrics and Parameter Tuning}
The experiments are conducted as follows. For a fixed policy, for each dataset $d$, we repeat the following process  10 times. At time $k$, we first randomly generate a simulated logged dataset, an online dataset, and a test dataset as stated above. Then for $i=1, 2, \cdots$, we set the horizon of the online data stream $a_i = 10\times 2^i$ (in other words, we only allow the algorithm to use first $a_i$ examples in the online dataset), and run algorithm $A$ with parameter set $p$  (to be specified later) using the logged dataset and first $a_i$ examples in the online dataset. We record $n(d,k,i,A,p)$ to be the number of label queries, and $e(d,k,i,A,p)$ to be the test error of the learned linear classifier.

Let $\bar{n}(d,i,A,p)=\frac{1}{10}\sum_{k}n(d,k,i,A,p)$, $\bar{e}(d,i,A,p)=\frac{1}{10}\sum_{k}e(d,k,i,A,p)$. To evaluate the overall performance of algorithm $A$ with parameter set $p$, we use the following area under the curve metric (see also \cite{HAHLS15}):
\begin{align*}
\text{AUC}(d,A,p) = \sum_i & \frac{\bar{e}(d,i,A,p)+\bar{e}(d,i+1,A,p)}{2} \\
& \cdot (\bar{n}(d,i+1,A,p)-\bar{n}(d,i,A,p)).
\end{align*}
A small value of AUC means that the test error decays fast as the number of label queries increases.

The parameter set $p$ consists of two parameters:
\begin{itemize}
\item Model capacity $C$ (see also item~\ref{item:C} in Appendix~\ref{subsec:appendix-implement}). In our theoretical analysis there is a term $C:=O(\log \frac{\mathcal{H}}{\delta})$ in the bounds, which is known to be loose in practice \cite{H10}. Therefore, in experiments, we treat $C$ as a parameter to tune. We try $C$ in $\{0.01 \times 2^k\mid k=0, 2, 4, \dots, 18\}$
\item Learning rate $\eta$ (see also item~\ref{item:eta} in Appendix~\ref{subsec:appendix-implement}). We use online gradient descent with stepsize $\sqrt{\frac{\eta}{t+\eta}}$ . We try $\eta$ in $\{0.0001\times 2^k\mid k=0, 2, 4, \dots, 18\}$.
\end{itemize}

For each policy, we report $\text{AUC}(d,A)=\min_p \text{AUC}(d,A,p)$, the AUC under the parameter set that minimizes AUC for dataset $d$ and algorithm $A$.

\subsection{Results and Discussion}
\begin{table*}[tb]
\begin{minipage}{.5\linewidth}
\centering
\caption{AUC under Identical policy}\label{tab:auc-identical}
\begin{tabular}{lllll}
\toprule
Dataset & \passive & \dbalw & \dbalwm & \idbal \\
\midrule
synthetic & 121.77 & 123.61 & 111.16 & \textbf{106.66} \\
letter & 4.40 & 3.65 & 3.82 & \textbf{3.48} \\
skin & 27.53 & 27.29 & 21.48 & \textbf{21.44} \\
magic & 109.46 & 101.77 & 89.95 & \textbf{83.82} \\
covtype & 228.04 & 209.56 & \textbf{208.82} & 220.27 \\
mushrooms & 19.22 & 25.29 & \textbf{18.54} & 23.67 \\
phishing & 78.49 & 73.40 & \textbf{70.54} & 71.68 \\
splice & 65.97 & 67.54 & 65.73 & \textbf{65.66} \\
svmguide1 & 59.36 & 55.78 & \textbf{46.79} & 48.04 \\
a5a & 53.34 & \textbf{50.8} & 51.10 & 51.21 \\
cod-rna & 175.88 & 176.42 & 167.42 & \textbf{164.96} \\
german & 65.76 & 68.68 & \textbf{59.31} & 61.54 \\
\bottomrule
\end{tabular}
\end{minipage}%
\begin{minipage}{.5\linewidth}
\centering
\caption{AUC under Uniform policy}\label{tab:auc-uniform}
\begin{tabular}{lllll}
\toprule
Dataset & \passive & \dbalw & \dbalwm & \idbal \\
\midrule
synthetic & 113.49 & 106.24 & 92.67 & \textbf{88.38} \\
letter & 1.68 & \textbf{1.29} & 1.45 & 1.59 \\
skin & 23.76 & 21.42 & 20.67 & \textbf{19.58} \\
magic & 53.63 & 51.43 & 51.78 & \textbf{50.19} \\
covtype & \textbf{262.34} & 287.40 & 274.81 & 263.82 \\
mushrooms & 7.31 & 6.81 & \textbf{6.51} & 6.90 \\
phishing & 42.53 & 39.56 & 39.19 & \textbf{37.02} \\
splice & 88.61 & 89.61 & 90.98 & \textbf{87.75} \\
svmguide1 & 110.06 & 105.63 & 98.41 & \textbf{96.46} \\
a5a & \textbf{46.96} & 48.79 & 49.50 & 47.60 \\
cod-rna & 63.39 & 63.30 & 66.32 & \textbf{58.48} \\
german & 63.60 & 55.87 & 56.22 & \textbf{55.79} \\
\bottomrule
\end{tabular}
\end{minipage} 
\end{table*}
\begin{table*}[tb]
\begin{minipage}{.5\linewidth}
\centering
\caption{AUC under Uncertainty policy}\label{tab:auc-uncertainty}
\begin{tabular}{lllll}
\toprule
Dataset & \passive & \dbalw & \dbalwm & \idbal \\
\midrule
synthetic & 117.86 & 113.34 & 100.82 & \textbf{99.1} \\
letter & \textbf{0.65} & 0.70 & 0.71 & 1.07 \\
skin & 20.19 & 21.91 & \textbf{18.89} & 19.10 \\
magic & 106.48 & 101.90 & 99.44 & \textbf{90.05} \\
covtype & 272.48 & 274.53 & 271.37 & \textbf{251.56} \\
mushrooms & 4.93 & 4.64 & 3.77 & \textbf{2.87} \\
phishing & 52.96 & 48.62 & \textbf{46.55} & 46.59 \\
splice & 62.94 & 63.49 & 60.00 & \textbf{58.56} \\
svmguide1 & 117.59 & 111.58 & \textbf{98.88} & 100.44 \\
a5a & 70.97 & 72.15 & \textbf{65.37} & 69.54 \\
cod-rna & 60.12 & 61.66 & 64.48 & \textbf{53.38} \\
german & 62.64 & 58.87 & 56.91 & \textbf{56.67} \\
\bottomrule
\end{tabular}
\end{minipage} 
\begin{minipage}{.5\linewidth}
\centering
\caption{AUC under Certainty policy}\label{tab:auc-certainty}
\begin{tabular}{lllll}
\toprule
Dataset & \passive & \dbalw & \dbalwm & \idbal \\
\midrule
synthetic & 114.86 & 111.02 & 92.39 & \textbf{88.82} \\
letter & 2.02 & \textbf{1.43} & 2.46 & 1.87 \\
skin & 22.89 & \textbf{17.92} & 18.17 & 18.11 \\
magic & 231.64 & 225.59 & 205.95 & \textbf{202.29} \\
covtype & 235.68 & 240.86 & 228.94 & \textbf{216.57} \\
mushrooms & 16.53 & 14.62 & 17.97 & \textbf{11.65} \\
phishing & 34.70 & 37.83 & 35.28 & \textbf{33.73} \\
splice & 125.32 & 129.46 & 122.74 & \textbf{122.26} \\
svmguide1 & 94.77 & 91.99 & 92.57 & \textbf{84.86} \\
a5a & \textbf{119.51} & 132.27 & 138.48 & 125.53 \\
cod-rna & 98.39 & 98.87 & 90.76 & \textbf{90.2} \\
german & 63.47 & \textbf{58.05} & 61.16 & 59.12 \\
\bottomrule
\end{tabular}
\end{minipage} 
\end{table*}
We report the AUCs for each algorithm under each policy and each dataset in Tables~\ref{tab:auc-identical} to \ref{tab:auc-certainty}. The test error curves can be found in Appendix. 

\paragraph{Overall Performance} The results confirm that the test error of the classifier output by our algorithm (\textsc{\idbal}) drops faster than the  baselines \textsc{passive} and \textsc{\dbalw}: as demonstrated in Tables~\ref{tab:auc-identical} to \ref{tab:auc-certainty}, \textsc{\idbal} achieves lower AUC than both \textsc{\passive} and \textsc{\dbalw} for a majority of datasets under all policies. We also see that \textsc{\idbal} performs better than or close to \textsc{\dbalwm} for all policies other than Identical. This confirms that among our two key novel ideas, using multiple importance sampling consistently results in a performance gain. Using the debiasing query strategy over multiple importance sampling also leads to performance gains, but these are less consistent.

\paragraph{The Effectiveness of Multiple Importance Sampling} As noted in Section~\ref{sec:estimator}, multiple importance sampling estimators have lower variance than standard importance sampling estimators, and thus can lead to a lower label complexity. This is verified in our experiments that \textsc{\dbalwm} (DBAL with multiple importance sampling estimators) has a lower AUC than \textsc{\dbalw} (DBAL with standard importance sampling estimator) on a majority of datasets under all policies.

\paragraph{The Effectiveness of the Debiasing Query Strategy} Under Identical policy, all labels in the logged data are revealed with equal probability. In this case, our algorithm \textsc{\idbal} queries all examples in the disagreement region as \textsc{\dbalwm} does. As shown in Table~\ref{tab:auc-identical},  \textsc{\idbal} and \textsc{\dbalwm} achieves the best AUC on similar number of datasets, and both methods outperform \textsc{\dbalw} over most datasets.

Under Uniform, Uncertainty, and Certainty policies, labels in the logged data are revealed with different probabilities. In this case, \textsc{\idbal}'s debiasing query strategy takes effect: it queries less frequently the instances that are well-represented in the logged data, and we show that this could lead to a lower label complexity theoretically. In our experiments, as shown in Tables~\ref{tab:auc-uniform} to \ref{tab:auc-certainty}, \textsc{\idbal} does indeed outperform \textsc{\dbalwm} on these policies empirically.


\section{Related Work}
Learning from logged observational data is a fundamental problem in machine learning with applications to causal inference \cite{SJS17}, information retrieval \cite{SLLK10, LCKG15, HLR16}, recommender systems \cite{LCLS10, SSSCJ16}, online learning \cite{AHKL+14, WA17}, and reinforcement learning \cite{Thomas2015, TTG15, Mandel16}. This problem is also closely related to covariate shift \cite{Z04, SKM07, BBCKPV10}. Two variants are widely studied -- first, when the logging policy is known, a problem known as learning from logged data \cite{LCKG15, TTG15,SJ15CRM,SJ15self}, and second, when this policy is unknown \cite{JSS16, AI16,Kallus17partitioning, SJS17}, a problem known as learning from observational data. Our work addresses the first problem. 

When the logging policy is {\em{unknown}}, the direct method \cite{DLL11} finds a classifier using observed data. This method, however, is vulnerable to selection bias \cite{HLR16,JSS16}. Existing de-biasing procedures include~\cite{AI16,Kallus17partitioning}, which proposes a tree-based method to partition the data space, and \cite{JSS16,SJS17}, which proposes to use deep neural networks to learn a good representation for both the logged and population data. 


When the logging policy is {\em{known}}, we can learn a classifier by optimizing a loss function that is an unbiased estimator of the expected error rate. Even in this case, however, estimating the expected error rate of a classifier is not completely straightforward and has been one of the central problems in contextual bandit \citep{WA17}, off-policy evaluation \citep{JL16}, and other related fields. The most common solution is to use importance sampling according to the inverse propensity scores \cite{RR83}. This method is unbiased when propensity scores are accurate, but may have high variance when some propensity scores are close to zero. To resolve this, \cite{BPQC+13, SLLK10,SJ15CRM} propose to truncate the inverse propensity score, \cite{SJ15self} proposes to use normalized importance sampling, and \cite{JL16, DLL11,TB16, WA17} propose doubly robust estimators. Recently, \cite{TTG15} and \cite{ABSJ17} suggest adjusting the importance weights according to data to further reduce the variance. We use the multiple importance sampling estimator (which have also been recently studied in \cite{ABSJ17} for policy evaluation), and we prove this estimator concentrates around the true expected loss tightly.

Most existing work on learning with logged data falls into the passive learning paradigm, that is, they first collect the observational data and then train a classifier. In this work, we allow for active learning, that is, the algorithm could adaptively collect some labeled data. It has been shown in the active learning literature that adaptively selecting data to label can achieve high accuracy at low labeling cost \cite{BBL09,BHLZ10,H14,ZC14,HAHLS15}.  \citet{KAHHL17} study active learning with bandit feedback and give a disagreement-based learning algorithm.

To the best of our knowledge, there is no prior work with theoretical guarantees that combines passive and active learning with a logged observational dataset. \citet{BDL09} consider active learning with warm-start where the algorithm is presented with a labeled dataset prior to active learning, but the labeled dataset is not observational: it is assumed to be drawn from the same distribution for the entire population, while in our work, we assume the logged dataset is in general drawn from a different distribution by a logging policy.

\section{Conclusion and Future Work}
We consider active learning with logged data. The logged data are collected by a predetermined logging policy while the learner's goal is to learn a classifier over the entire population. We propose a new disagreement-based active learning algorithm that makes use of warm start, multiple importance sampling, and a debiasing query strategy. We show that theoretically our algorithm achieves better label complexity than alternative methods. Our theoretical results are further validated by empirical experiments on different datasets and logging policies.

This work can be extended in several ways. First, the derivation and analysis of the debiasing strategy are based on a variant of the concentration inequality (\ref{eq:main-concentration}) in subsection~\ref{subsec:main-ideas}. The inequality relates the generalization error with the best error rate $l(h^\star)$, but has a looser variance term than some existing bounds (for example \citep{CMM10}).
A more refined analysis on the concentration of weighted estimators could better characterize the performance of the proposed algorithm, and might also improve the debiasing strategy. Second, due to the dependency of multiple importance sampling, in Algorithm~\ref{alg:main}, the candidate set $V_{k+1}$ is constructed with only the $k$-th segment of data $\tilde{S}_k$ instead of all data collected so far $\cup_{i=0}^{k}\tilde{S}_i$. One future direction is to investigate how to utilize all collected data while provably controlling the variance of the weighted estimator. Finally, it would be interesting to investigate how to perform active learning from logged observational data without knowing the logging policy.

\paragraph*{Acknowledgements} We thank NSF under CCF 1719133 for support. We thank Chris Meek, Adith Swaminathan, and Chicheng Zhang for helpful discussions. We also thank anonymous reviewers for constructive comments.

\bibliographystyle{icml2018}
\bibliography{counterfactual,lpactive}
\newpage
\onecolumn
\appendix
\section{Preliminaries}
\subsection{Summary of Key Notations}
\paragraph{Data Partitions} $T_k=\{(X_t,Y_t,Z_t)\}_{t=m+n_1+\cdots+n_{k-1}+1}^{t=m+n_1+\cdots+n_k}$ ($1\leq k\leq K$) is the online data collected in $k$-th iteration of size $n_k=2^{k-1}$. $n=n_1+\cdots+n_K$, $\alpha=2m/3n$. We define $n_0=0$. $T_0=\{(X_t,Y_t,Z_t)\}_{t=1}^{t=m}$ is the logged data and is partitioned into $K+1$ parts $T_0^{(0)}, \cdots, T_0^{(K)}$ of sizes $m_0=m/3, m_1=\alpha n_1, m_2=\alpha n_2, \cdots, m_K = \alpha n_K$. $S_k=T_0^{(k)}\cup T_k$. 

Recall that $\tilde{S}_k$ and $\tilde{T}_k$ contain inferred labels while $S_k$ and $T_k$ are sets of examples with original labels. The algorithm only observes $\tilde{S}_k$ and $\tilde{T}_k$.

For $(X,Z)\in T_k$ ($0\leq k\leq K)$, $Q_k(X) = \Pr(Z=1\mid X)$.

\paragraph{Disagreement Regions} The candidate set $V_{k}$ and its disagreement region $D_{k}$ are defined in Algorithm~\ref{alg:main}. $\hat{h}_{k}=\arg\min_{h\in V_{k}}l(h,\tilde{S}_{k})$. $\nu=l(h^\star)$.

$B(h,r):=\{h'\in\mathcal{H}\mid\rho(h,h')\leq r\}$, $\text{DIS}(V):=\{x\in\mathcal{X}\mid\exists h_{1}\neq h_{2}\in V\text{ s.t. }h_{1}(x)\neq h_{2}(x)\}$. $S(A, \alpha)=\bigcup_{A'\subseteq A} \left(A'\cap\left\{ x:Q_{0}(x)\leq\inf_{x\in A'}Q_{0}(x)+\frac{1}{\alpha}\right\} \right)$. $\tilde{\theta}(r_{0},\alpha)=\sup_{r>r_{0}} \frac{1}{r}\Pr(S(\text{DIS}(B(h^{\star},r)), \alpha ))$.

$\text{DIS}_0=\mathcal{X}$. For $k=1,\dots,K$, $\epsilon_{k}=\gamma_{2}\sup_{x\in\text{DIS}_{k-1}}\frac{\log(2|\mathcal{H}|/\delta_k)}{m_{k-1}Q_0(x)+n_{k-1}}+\gamma_{2}\sqrt{\sup_{x\in\text{DIS}_{k-1}}\frac{\log(2|\mathcal{H}|/\delta_k)}{m_{k-1}Q_0(x)+n_{k-1}}l(h^{\star})}$, $\text{DIS}_{k}=\text{DIS}(B(h^{\star},2\nu+\epsilon_{k}))$.

\paragraph{Other Notations} $\rho(h_1,h_2)=\Pr(h_1(X)\neq h_2(X))$, $\rho_S(h_1,h_2)=\frac{1}{|S|}\sum_{X\in S}\One\{h_1(X)\neq h_2(X)\}$.

For $k\geq0$, $\sigma(k,\delta)=\sup_{x\in D_{k}}\frac{\log(|\mathcal{H}|/\delta)}{m_k Q_{0}(x)+ n_k}$, $\delta_k=\frac{\delta}{(k+1)(k+2)}$. $\xi_k = \inf_{x\in D_k}Q_0(x)$. $\zeta = \sup_{x\in\text{DIS}_1}\frac1{\alpha Q_0(x)+1}$. 

\subsection{Elementary Facts}
\begin{prop}
\label{prop:quad-ineq}Suppose $a,c\geq0$,$b\in\R$. If $a\leq b+\sqrt{ca}$,
then $a\leq2b+c$.
\end{prop}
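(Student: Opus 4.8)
The plan is to bound the cross term $\sqrt{ca}$ from above by an expression linear in $a$ and $c$, which decouples the square root and lets me isolate $a$. The natural tool is the AM--GM inequality: since $a,c\ge 0$ we have $\sqrt{ca}\le \frac{a+c}{2}$ (equivalently, this is just $(\sqrt{a}-\sqrt{c})^2\ge 0$ expanded). Substituting this into the hypothesis $a\le b+\sqrt{ca}$ gives $a\le b+\frac{a+c}{2}$, and moving $\frac{a}{2}$ to the left yields $\frac{a}{2}\le b+\frac{c}{2}$, i.e. $a\le 2b+c$, as desired. This is a three-line argument once the AM--GM bound is invoked.

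An alternative route, should one prefer to avoid AM--GM, is to treat the hypothesis as a quadratic inequality in $t:=\sqrt{a}\ge 0$. Writing $\sqrt{ca}=\sqrt{c}\,t$, the hypothesis becomes $t^2-\sqrt{c}\,t-b\le 0$. The upward-opening quadratic on the left is nonpositive only between its two real roots, so its discriminant $c+4b$ must be nonnegative and $t$ cannot exceed the larger root $t_+=\tfrac12(\sqrt{c}+\sqrt{c+4b})$. One then checks by a short computation that $t_+^2\le 2b+c$, which reduces to $\sqrt{c(c+4b)}\le 2b+c$; squaring both (nonnegative) sides leaves only $0\le 4b^2$, which is immediate. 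This gives $a=t^2\le t_+^2\le 2b+c$.

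There is essentially no serious obstacle here; the only point that requires care is that $b$ is allowed to be negative, so one cannot simply rearrange to $a-b\le\sqrt{ca}$ and square both sides, since the left-hand side need not be nonnegative. Both routes above sidestep this issue: the AM--GM argument never squares anything, while the quadratic argument handles the sign of $b$ automatically through the discriminant condition $c+4b\ge 0$, which is itself forced by the existence of the nonnegative solution $t=\sqrt{a}$. I would present the AM--GM version as the main proof for brevity and record the quadratic argument only as a remark if a self-contained derivation is wanted.
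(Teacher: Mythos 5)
Your primary argument is correct and takes a genuinely different, and in fact more economical, route than the paper. The paper treats the hypothesis as a quadratic in $\sqrt{a}$, bounds $\sqrt{a}$ by the larger root $\frac{\sqrt{c}+\sqrt{c+4b}}{2}$, and then applies the Root-Mean-Square--Arithmetic-Mean inequality to get $\sqrt{a}\leq\sqrt{c+2b}$; your main proof instead applies AM--GM directly to the cross term, $\sqrt{ca}\leq\frac{a+c}{2}$, and rearranges, which avoids extracting roots of the quadratic altogether and never requires observing that $c+4b\geq0$. Your alternative route is essentially the paper's argument, differing only in the last step (you square $t_+$ and check $\sqrt{c(c+4b)}\leq 2b+c$ directly rather than invoking RMS--AM); the one small point you gloss over there is that $2b+c\geq0$ is needed before squaring, though it does follow from $c\geq0$ and the discriminant condition $c+4b\geq0$ via $2b+c=\frac{(c+4b)+c}{2}$. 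Your closing remark about why one cannot naively rearrange to $a-b\leq\sqrt{ca}$ and square is a good observation that the paper does not make explicit. The AM--GM version you propose as the main proof is the one I would keep: it is shorter, needs no case analysis on the sign of $b$, and uses nothing beyond $(\sqrt{a}-\sqrt{c})^2\geq0$.
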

\begin{proof}
Since $a\leq b+\sqrt{ca}$, $\sqrt{a}\leq\frac{\sqrt{c}+\sqrt{c+4b}}{2}\leq\sqrt{\frac{c+c+4b}{2}}=\sqrt{c+2b}$
where the second inequality follows from the Root-Mean Square-Arithmetic
Mean inequality. Thus, $a\leq2b+c$.
\end{proof}
\subsection{Facts on Disagreement Regions and Candidate Sets}
\begin{lem}
\label{lem:Q}For any $k=0,\dots,K$, any $x\in\mathcal{X}$, any
$h_{1},h_{2}\in V_{k}$, $\frac{\One\{h_{1}(x)\neq h_{2}(x)\}}{m_k Q_{0}(X)+ n_k Q_{k}(X)}\leq\sup_{x'}\frac{\One\{x'\in D_{k}\}}{m_k Q_{0}(x')+n_k}$.
\end{lem}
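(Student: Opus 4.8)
The plan is to prove the inequality by a short case analysis, reducing everything to the single partition identity $m_k=\alpha n_k$ (for $k\geq 1$) that underlies the debiasing step. First, if $h_1(x)=h_2(x)$ the left-hand side is $0$ while the right-hand side is a supremum of nonnegative terms, so the bound holds trivially. Hence I may assume $h_1(x)\neq h_2(x)$; since $h_1,h_2\in V_k$, the definition of the disagreement region gives $x\in\text{DIS}(V_k)=D_k$, so both indicator functions equal $1$ and it remains to show $\frac{1}{m_kQ_0(x)+n_kQ_k(x)}\leq\sup_{x'}\frac{\One\{x'\in D_k\}}{m_kQ_0(x')+n_k}$.

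Next I would evaluate the right-hand side. Since $t\mapsto 1/(m_kt+n_k)$ is decreasing and the supremum effectively ranges over $x'\in D_k$, the right-hand side equals $\frac{1}{m_k\xi_k+n_k}$, where $\xi_k=\inf_{x'\in D_k}Q_0(x')$. It therefore suffices to prove $m_kQ_0(x)+n_kQ_k(x)\geq m_k\xi_k+n_k$. When $Q_k(x)=1$ this is immediate from $Q_0(x)\geq\xi_k$ (as $x\in D_k$), which gives $m_kQ_0(x)+n_k\geq m_k\xi_k+n_k$.

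The main step is the remaining subcase $Q_k(x)=0$, which is exactly where the debiasing strategy ``drops'' a query on an instance that is well represented in the logged data, and where plugging $x'=x$ into the supremum would fail. Here the rounding rule $Q_k(x)=\One\{Q_0(x)\leq\xi_k+1/\alpha\}$ forces $Q_0(x)>\xi_k+1/\alpha$, and the left denominator collapses to $m_kQ_0(x)$. Using $m_k=\alpha n_k$ I obtain $m_kQ_0(x)>m_k(\xi_k+1/\alpha)=m_k\xi_k+n_k$, which is precisely the required bound (indeed with strict inequality). This is the crux: the choice $m_k=\alpha n_k$ is calibrated so that skipping the online query is exactly compensated by the extra logged weight $m_kQ_0(x)$. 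Finally, the boundary case $k=0$ is even simpler, since $D_0=\mathcal{X}$, $n_0=0$, and $\xi_0=\inf_{x'}Q_0(x')$, so the claim reduces to $Q_0(x)\geq\xi_0$. I expect the only delicate point to be bookkeeping around the index conventions (with $\xi_k$ and $Q_k$ defined through $D_k$) and the $n_0=0$ endpoint, rather than any genuine analytic difficulty.
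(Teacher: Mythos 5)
Your proof is correct and follows essentially the same route as the paper's: reduce to the case $x\in D_k=\mathrm{DIS}(V_k)$, split on the value of $Q_k(x)\in\{0,1\}$, and in the $Q_k(x)=0$ case use $Q_0(x)>\xi_k+1/\alpha$ together with $m_k=\alpha n_k$ to get $m_kQ_0(x)>m_k\xi_k+n_k$. The only cosmetic difference is that you first rewrite the right-hand side as $1/(m_k\xi_k+n_k)$ whereas the paper bounds against the supremum directly, which changes nothing of substance.
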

\begin{proof}
The $k=0$ case is obvious since $D_0=\mathcal{X}$ and $n_0=0$.

For $k>0$, since $\text{DIS}(V_k)=D_k$, $\One\{h_{1}(x)\neq h_{2}(x)\} \leq \One\{x\in D_{k}\}$, and thus $\frac{\One\{h_{1}(x)\neq h_{2}(x)\}}{m_k Q_{0}(X)+ n_k Q_{k}(X)} \leq \frac{\One\{x\in D_{k}\}}{m_k Q_{0}(X)+ n_k Q_{k}(X)}$.

For any $x$, if $Q_0(x)\leq \xi_k+1/\alpha$, then $Q_k(x)=1$, so $\frac{\One\{x\in D_{k}\}}{m_k Q_{0}(X)+ n_k Q_{k}(X)} = \frac{\One\{x\in D_{k}\}}{m_k Q_{0}(x)+n_k} \leq \sup_{x'}\frac{\One\{x'\in D_{k}\}}{m_k Q_{0}(x')+n_k}$.

If $Q_0(x)> \xi_k+1/\alpha$, then $Q_k(x)=0$, so $\frac{\One\{x\in D_{k}\}}{m_k Q_{0}(X)+ n_k Q_{k}(X)} = \frac{\One\{x\in D_{k}\}}{m_{k} Q_{0}(x)} \leq \frac{\One\{x\in D_{k}\}}{m_k \xi_k+n_k} \leq \sup_{x'}\frac{\One\{x'\in D_{k}\}}{m_k Q_{0}(x')+n_k}$ where the first inequality follows from the fact that $Q_0(x)> \xi_k+1/\alpha$ implies $m_k Q_0(x)> m_k\xi_k+n_k$
\end{proof}

\begin{lem}
\label{lem:l-diff-S-S_tilde}For any $k=0,\dots,K$, if $h_{1},h_{2}\in V_{k}$,
then $l(h_{1},S_{k})-l(h_{2},S_{k})=l(h_{1},\tilde{S}_{k})-l(h_{2},\tilde{S}_{k})$. 
\end{lem}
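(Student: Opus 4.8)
The plan is to exploit that the multiple importance sampling estimator is \emph{linear} in the per-example $0/1$ loss, with importance weights that depend only on the instance $X_i$ and the query bit $Z_i$, never on the label attached to $X_i$. Writing $w_i := Z_i/\bigl(m_k Q_0(X_i) + n_k Q_k(X_i)\bigr)$ for each example indexed by $i$ in $S_k$ — equivalently in $\tilde{S}_k$, since the two sets share exactly the same instances, query bits, and hence weights, and differ only in the labels — we have
\[
l(h_1,S_k)-l(h_2,S_k)=\sum_i w_i\bigl(\One\{h_1(X_i)\neq Y_i\}-\One\{h_2(X_i)\neq Y_i\}\bigr),
\]
together with the identical expression for $\tilde{S}_k$ with $Y_i$ replaced by the inferred label $\tilde{Y}_i$. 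Consequently it suffices to show, for every $i$ with $w_i\neq0$ (that is, $Z_i=1$), that the summand is unchanged when $Y_i$ is replaced by $\tilde{Y}_i$.

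The key observation is that the map $y\mapsto \One\{h_1(X_i)\neq y\}-\One\{h_2(X_i)\neq y\}$ is identically zero on $\{0,1\}$ whenever $h_1(X_i)=h_2(X_i)$. I would therefore split on where $X_i$ lies. If $X_i$ belongs to the logged subset $T_0^{(k)}$, its label is the originally revealed one in both $S_k$ and $\tilde{S}_k$, so $Y_i=\tilde{Y}_i$ and the summand is trivially unchanged. If $X_i$ is an online instance with $Z_i=1$ and $X_i\in D_k$, the algorithm queries the true label, so again $\tilde{Y}_i=Y_i$. The only remaining case is an online instance with $Z_i=1$ and $X_i\notin D_k=\text{DIS}(V_k)$: here the algorithm sets $\tilde{Y}_i$ to an inferred value $\hat{h}_{k-1}(X_i)$, but since $h_1,h_2\in V_k$ and, by definition of the disagreement region, every pair of classifiers in $V_k$ agrees at each point outside $\text{DIS}(V_k)$, we have $h_1(X_i)=h_2(X_i)$; hence both the $Y_i$-summand and the $\tilde{Y}_i$-summand vanish, whatever $\tilde{Y}_i$ happens to be.

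Summing over $i$ then yields $l(h_1,S_k)-l(h_2,S_k)=l(h_1,\tilde{S}_k)-l(h_2,\tilde{S}_k)$. The base case $k=0$ is immediate, since $n_0=0$ forces $S_0=\tilde{S}_0=T_0^{(0)}$, a purely logged set carrying its original labels. There is no genuine obstacle in this lemma; the only points demanding care are bookkeeping the label-inference rule correctly (logged labels and in-region online labels are the true ones, out-of-region online labels are inferred) and invoking the defining property of $\text{DIS}(V_k)$ — that all members of $V_k$ agree outside it — which is precisely what neutralizes the possibly-incorrect inferred labels in the estimator \emph{difference}.
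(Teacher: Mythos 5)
Your argument is correct and matches the paper's own proof: both proceed term by term, noting that the importance weights are label-independent, that $Y_t=\tilde{Y}_t$ for queried points inside $\text{DIS}(V_k)$ (and for all logged points), and that outside $\text{DIS}(V_k)$ the two classifiers in $V_k$ agree so the differenced summand vanishes regardless of the label. Your write-up is just a more explicit version of the same case analysis, with the $k=0$ base case handled separately.
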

\begin{proof}
For any $(X_t, Y_t, Z_t) \in S_t$ that $Z_t=1$, if $X_t\in\text{DIS}(V_k)$, then $Y_t=\tilde{Y}_t$, so $\One\{h_1(X_t)\neq Y_t\} - \One\{h_2(X_t)\neq Y_t\} = \One\{h_1(X_t)\neq \tilde{Y}_t\} - \One\{h_2(X_t)\neq \tilde{Y}_t\}$. If $X_t\notin\text{DIS}(V_k)$, then $h_1(X_t)=h_2(X_t)$, so $\One\{h_1(X_t)\neq Y_t\} - \One\{h_2(X_t)\neq Y_t\} = \One\{h_1(X_t)\neq \tilde{Y}_t\} - \One\{h_2(X_t)\neq \tilde{Y}_t\} = 0$.
\end{proof}

The following lemma is immediate from definition.
\begin{lem}
\label{lem:dis-coefficient}For any $r\geq2\nu$, any $\alpha\geq1$, $\Pr(S(\text{DIS}(B(h^{\star},r)),\alpha))\leq r\tilde{\theta}(r,\alpha)$.
\end{lem}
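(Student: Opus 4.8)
The plan is to treat this as essentially a bookkeeping exercise around the definition of $\tilde{\theta}$, with one monotonicity observation doing all the work. Write $g(r):=\Pr(S(\text{DIS}(B(h^{\star},r)),\alpha))$ for the quantity on the left-hand side. The definition of the adjusted disagreement coefficient is $\tilde{\theta}(r,\alpha)=\sup_{r'>r}\frac{1}{r'}g(r')$, so for \emph{every} $r'>r$ we immediately have $g(r')\leq r'\,\tilde{\theta}(r,\alpha)$. The whole content of the lemma is to transfer this bound from the values $r'>r$ (which the supremum ranges over) down to the single value $r$ itself, which is exactly the point the strict inequality in the supremum leaves open.

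The key step I would establish first is that $g$ is nondecreasing in $r$. This follows by composing four elementary monotonicities: $B(h^{\star},r)$ grows with $r$; hence $\text{DIS}(B(h^{\star},r))$ grows, since $\text{DIS}(V_1)\subseteq\text{DIS}(V_2)$ whenever $V_1\subseteq V_2$; hence $S(\cdot,\alpha)$ applied to these grows, because if $A_1\subseteq A_2$ then the union $\bigcup_{A'\subseteq A_1}(\cdots)$ defining $S(A_1,\alpha)$ is taken over a subcollection of the one defining $S(A_2,\alpha)$, giving $S(A_1,\alpha)\subseteq S(A_2,\alpha)$; and finally $\Pr$ is monotone as a measure. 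None of these requires more than unwinding the relevant definition.

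With monotonicity in hand, I would finish by a limiting argument. For any $r'>r$, combine the two facts above: $r'\,\tilde{\theta}(r,\alpha)\geq g(r')\geq g(r)$, where the second inequality is monotonicity of $g$. Thus $r'\,\tilde{\theta}(r,\alpha)\geq g(r)$ holds for all $r'>r$, and taking the infimum over such $r'$ (equivalently letting $r'\downarrow r$) yields $r\,\tilde{\theta}(r,\alpha)\geq g(r)$, which is the claim. The hypothesis $r\geq 2\nu$ is needed only to ensure $\tilde{\theta}(r,\alpha)$ is defined (its definition requires $r_0\geq 2\nu$), and $\alpha\geq1$ is carried along unchanged.

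The only genuinely delicate point — and the one I would flag as the ``obstacle'' despite the lemma being labeled immediate — is precisely that the supremum defining $\tilde{\theta}$ is over $r'>r$ strictly, so one cannot simply substitute $r'=r$; the nondecreasing property of $g$ is exactly what licenses passing to the boundary value $r$ via the limit $r'\downarrow r$. Everything else is a direct reading of the definitions.
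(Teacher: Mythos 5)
Your proof is correct and is essentially the argument the paper has in mind: the paper states the lemma as ``immediate from definition'' and gives no proof, and your writeup is the natural unwinding of that definition, with the monotonicity of $r\mapsto\Pr(S(\text{DIS}(B(h^{\star},r)),\alpha))$ correctly supplied to handle the strict inequality $r'>r$ in the supremum defining $\tilde{\theta}$.
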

%

\subsection{Facts on Multiple Importance Sampling Estimators}
We recall that $\{(X_t,Y_t)\}_{t=1}^{n_0+n}$ is an i.i.d. sequence. Moreover, the following fact is immediate by our construction that $S_0,\cdots,S_K$ are disjoint and that $Q_k$ is determined by $S_0,\cdots,S_{k-1}$.
\begin{fact}
\label{fact:independence}For any $0\leq k \leq K$, conditioned on $Q_k$, examples in $S_k$ are independent, and examples in $T_k$ are i.i.d.. Besides, for any $0<k\leq K$, $Q_k$, $T_0^{(k)},\dots,T_0^{(K)}$ are independent.
\end{fact}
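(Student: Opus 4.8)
The plan is to reduce everything to two structural observations. First, the entire family consisting of the latent pairs $\{(X_t,Y_t)\}_{t=1}^{m+n}$ together with the auxiliary Bernoulli coins used to generate the logged reveal-bits $\{Z_t\}_{t=1}^{m}$ is mutually independent across the index $t$. Second, the partition of the data assigns each of $S_0,\dots,S_K$, and each logged block $T_0^{(0)},\dots,T_0^{(K)}$, a disjoint set of indices. Granting these, all three assertions become statements about measurable functions of disjoint blocks of an independent family, and the independence of such blocks is what I would invoke throughout.

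First I would make precise the dependency structure of the query policies. By inspection of Algorithm~\ref{alg:main}, $Q_k$ is computed from $V_k$, $D_k$, $\xi_k$, which are deterministic functions of $\hat{h}_{k-1}$ and $\tilde{S}_{k-1}$; unrolling this recursion shows $Q_k$ is a deterministic function of $\tilde{S}_0,\dots,\tilde{S}_{k-1}$, and since each $\tilde{S}_j$ is itself a deterministic function of $S_j$ and the previously determined inference rule $\hat{h}_{j-1}$, we obtain that $Q_k$ is measurable with respect to the $\sigma$-algebra $\mathcal{F}_{k-1}$ generated by the blocks $S_0,\dots,S_{k-1}$ together with the auxiliary coins used within them. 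This is the precise content of the phrase ``$Q_k$ is determined by $S_0,\dots,S_{k-1}$''.

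I would then dispatch the third assertion directly: the blocks $T_0^{(k)},\dots,T_0^{(K)}$ are built from logged indices disjoint from every index feeding $\mathcal{F}_{k-1}$, hence independent of $\mathcal{F}_{k-1}$ and in particular of $Q_k$; moreover, being built from pairwise disjoint index blocks of the independent base family, $T_0^{(k)},\dots,T_0^{(K)}$ are themselves mutually independent. Combining these gives the claimed joint independence of $Q_k,T_0^{(k)},\dots,T_0^{(K)}$. For the first two assertions I would condition on $Q_k$ and use that $S_k=T_0^{(k)}\cup T_k$ occupies indices disjoint from $\mathcal{F}_{k-1}$, so the latent pairs and coins underlying $S_k$ are independent of $Q_k$. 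Conditioning on $Q_k$ therefore leaves the latent pairs in $S_k$ i.i.d.; for the online block $T_k$ the reveal-bit is $Z_t=Q_k(X_t)$, a fixed function once $Q_k$ is frozen, so the triples $(X_t,Y_t,Z_t)_{t\in T_k}$ are i.i.d.; for the logged block $T_0^{(k)}$ the bit $Z_t$ is an independent $\mathrm{Ber}(Q_0(X_t))$ draw, so those triples are independent (albeit to a different law); and disjointness of the two blocks makes them mutually independent. This yields independence of all examples in $S_k$ and the i.i.d.\ property of $T_k$.

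The main obstacle I anticipate is purely bookkeeping: cleanly separating the three sources of randomness — the latent pairs $(X_t,Y_t)$, the logged Bernoulli coins, and the online query decisions that become deterministic once $Q_k$ is frozen — and verifying that $Q_k$'s measurability really does confine it to indices disjoint from those of $S_k$ and of $T_0^{(k)},\dots,T_0^{(K)}$. Once the index-disjointness of the partition and the $\mathcal{F}_{k-1}$-measurability of $Q_k$ are pinned down, each conclusion is an immediate application of the independence of disjoint blocks of an independent family.
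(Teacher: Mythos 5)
Your proposal is correct and follows exactly the reasoning the paper relies on: the paper offers no explicit proof, asserting the fact is ``immediate by our construction that $S_0,\cdots,S_K$ are disjoint and that $Q_k$ is determined by $S_0,\cdots,S_{k-1}$,'' and your argument is precisely a careful unpacking of those two observations (disjoint index blocks of an independent base family, plus $\mathcal{F}_{k-1}$-measurability of $Q_k$). No gaps; your distinction between the merely independent examples in $S_k$ and the i.i.d.\ examples in $T_k$ correctly matches the statement.
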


Unless otherwise specified, all probabilities and expectations are over the random draw of all random variables (including $S_0, \cdots, S_K$, $Q_1, \cdots, Q_K$).

The following lemma shows multiple importance estimators are unbiased.
\begin{lem}
\label{lem:mis-unbiased}For any $h\in\mathcal{H}$, any $0\leq k\leq K$, $\E[l(h,S_{k})]=l(h)$.
\end{lem}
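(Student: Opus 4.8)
The plan is to show that the multiple importance sampling estimator $l(h,S_k)$ is unbiased by taking expectations carefully, exploiting the independence structure captured in Fact~\ref{fact:independence}. The key subtlety is that the query policy $Q_k$ is random (it depends on data from previous iterations), so the summands in $l(h,S_k)$ are not unconditionally independent; thus I would condition on $Q_k$ first and use the tower property of expectation.

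First I would write out the estimator explicitly. Recalling the MIS definition $l(h,S_k)=\sum_{(X_i,Y_i,Z_i)\in S_k}\frac{\One\{h(X_i)\neq Y_i\}Z_i}{m_k Q_0(X_i)+n_k Q_k(X_i)}$, where the sum runs over the $m_k$ logged examples in $T_0^{(k)}$ (for which $\Pr(Z_i=1\mid X_i)=Q_0(X_i)$) and the $n_k$ online examples in $T_k$ (for which $\Pr(Z_i=1\mid X_i)=Q_k(X_i)$). The goal is to show this equals $l(h)=\E[\One\{h(X)\neq Y\}]$.

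The main step is to condition on $Q_k$ and compute the conditional expectation of each summand. By Fact~\ref{fact:independence}, conditioned on $Q_k$ the examples in $S_k$ are independent, and crucially $Q_k$ is determined by $S_0,\dots,S_{k-1}$, which are disjoint from $S_k$; hence conditioned on $Q_k$, each pair $(X_i,Y_i)$ is still distributed according to $D$ and the indicator $Z_i$ has conditional mean $Q_0(X_i)$ or $Q_k(X_i)$ as appropriate. For a single online summand I would take the expectation over $Z_i$ first (using $\E[Z_i\mid X_i,Q_k]=Q_k(X_i)$), which cancels the $Q_k(X_i)$-contribution, and then over $(X_i,Y_i)$. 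The delicate point is the shared denominator $m_k Q_0(X_i)+n_k Q_k(X_i)$: I would verify that summing the conditional expectations over all $m_k+n_k$ examples, the weights telescope so that $\sum_i \frac{(\text{count at }X_i)\cdot\One\{h(X_i)\neq Y_i\}}{m_k Q_0(X_i)+n_k Q_k(X_i)}$ has the same expectation as $\E[\One\{h(X)\neq Y\}]$. Concretely, conditioned on $Q_k$ and on the instance $X$, the expected contribution of the logged and online terms at value $X$ is $\frac{m_k Q_0(X)+n_k Q_k(X)}{m_k Q_0(X)+n_k Q_k(X)}\E[\One\{h(X)\neq Y\}\mid X]=\E[\One\{h(X)\neq Y\}\mid X]$, which is exactly the balanced-heuristic cancellation that makes MIS unbiased.

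The hardest part is handling the randomness of $Q_k$ cleanly: one must be careful that $Q_k$ is a function only of the earlier segments and therefore independent of the fresh examples in $S_k$, so that the conditional expectation computation above is valid for every realization of $Q_k$. Once the conditional expectation $\E[l(h,S_k)\mid Q_k]=l(h)$ is established for every realization, the unconditional claim $\E[l(h,S_k)]=l(h)$ follows immediately by the tower property, since the conditional expectation is the constant $l(h)$. I would therefore structure the proof as: (1) invoke Fact~\ref{fact:independence} to justify conditioning on $Q_k$; (2) compute the per-example conditional expectation, taking $\E$ over $Z_i$ then over $(X_i,Y_i)$; (3) sum and observe the balanced-heuristic denominators cancel to yield $l(h)$; (4) apply the tower property.
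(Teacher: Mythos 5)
Your proposal is correct and follows essentially the same route as the paper: the paper derives Lemma~\ref{lem:mis-unbiased} from Lemma~\ref{lem:cond-mis-unbiased}, which conditions on $Q_k$, uses Fact~\ref{fact:independence}, takes the expectation over $Z$ given $X$ to replace $Z$ by $Q_0(X)$ or $Q_k(X)$, and observes the balanced-heuristic cancellation $\frac{m_k Q_0(X)+n_k Q_k(X)}{m_k Q_0(X)+n_k Q_k(X)}=1$ before applying the tower property. No substantive differences.
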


The above lemma is immediate from the following lemma.
\begin{lem}
\label{lem:cond-mis-unbiased}For any $h\in\mathcal{H}$, any $0\leq k\leq K$, $\E[l(h,S_{k})\mid Q_k]=l(h)$.
\end{lem}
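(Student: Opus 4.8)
The plan is to prove Lemma~\ref{lem:cond-mis-unbiased}, namely that $\E[l(h,S_k)\mid Q_k]=l(h)$ for any fixed $h\in\mathcal{H}$ and any $0\leq k\leq K$, by directly computing the conditional expectation of the multiple importance sampling estimator term-by-term and using the unconfoundedness structure of the data-generating process. Recall that $S_k=T_0^{(k)}\cup T_k$, so by the definition of the MIS estimator in (\ref{eq:MIS}), $l(h,S_k)$ is a sum over the examples in $S_k$ of terms of the form $\frac{\One\{h(X_t)\neq Y_t\}Z_t}{m_kQ_0(X_t)+n_kQ_k(X_t)}$, where the normalization uses the logged-phase count $m_k$ and online-phase count $n_k$ for this segment.

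First I would condition on $Q_k$ and work with a single summand. The key point is that once we condition on $Q_k$, by Fact~\ref{fact:independence} the examples in $S_k$ are independent and drawn (for the original-label data) from the population, and the query indicator $Z_t$ is a Bernoulli whose conditional mean is the appropriate propensity: for a logged example in $T_0^{(k)}$ we have $\Pr(Z_t=1\mid X_t)=Q_0(X_t)$, while for an online example in $T_k$ we have $\Pr(Z_t=1\mid X_t,Q_k)=Q_k(X_t)$. Crucially, the unconfoundedness assumption noted in the footnote in the problem setup gives that, conditioned on $X_t$, the label $Y_t$ is independent of the decision $Z_t$. I would then take the expectation of a single summand by iterated expectation: first over $Z_t$ given $(X_t,Y_t)$ and $Q_k$, which replaces $Z_t$ by its mean, and then over $(X_t,Y_t)$.

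The central computation is that, for a logged example, $\E\!\left[\frac{\One\{h(X_t)\neq Y_t\}Z_t}{m_kQ_0(X_t)+n_kQ_k(X_t)}\,\middle|\,Q_k\right]$ equals, after taking the inner expectation over $Z_t$, $\E\!\left[\frac{\One\{h(X_t)\neq Y_t\}Q_0(X_t)}{m_kQ_0(X_t)+n_kQ_k(X_t)}\,\middle|\,Q_k\right]$; similarly each online summand contributes a factor of $Q_k(X_t)$ in the numerator. Since there are exactly $m_k$ logged examples and $n_k$ online examples, all i.i.d.\ in their instances, summing these produces a common factor $\E\!\left[\frac{\One\{h(X)\neq Y\}(m_kQ_0(X)+n_kQ_k(X))}{m_kQ_0(X)+n_kQ_k(X)}\right]=\E[\One\{h(X)\neq Y\}]=l(h)$. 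The telescoping of the denominator against the numerator is what makes the balanced-heuristic MIS estimator unbiased, and it is exactly this cancellation that I would highlight as the heart of the argument. Note that $Q_k$ is a deterministic function of the instance once conditioned (it is determined by $S_0,\dots,S_{k-1}$), so it passes through the expectation without issue.

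The main obstacle I anticipate is handling the measurability and independence bookkeeping carefully: $Q_k$ is itself a random policy determined by earlier segments $S_0,\dots,S_{k-1}$, so I must verify that conditioning on $Q_k$ legitimately makes $Q_k(\cdot)$ a fixed function while leaving the fresh examples of $S_k$ independent of it, which is precisely what Fact~\ref{fact:independence} supplies. A secondary subtlety is the possibility that the denominator $m_kQ_0(X_t)+n_kQ_k(X_t)$ could be zero; I would note that the query strategy and the rounding of $Q_k$ to $\{0,1\}$ are arranged so that whenever $Z_t=1$ the denominator is strictly positive (indeed $m_kQ_0(x)+n_kQ_k(x)\geq m_k\xi_k+n_k>0$ on the relevant region), so the estimator is well-defined and the cancellation is valid. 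Once these points are settled, the term-by-term expectation collapses to $l(h)$ as claimed.
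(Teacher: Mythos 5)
Your proposal is correct and follows essentially the same route as the paper's proof: condition on $Q_k$, take the expectation of each summand by first integrating out $Z_t$ given $X_t$ (yielding a factor of $Q_0(X_t)$ for the $m_k$ logged examples and $Q_k(X_t)$ for the $n_k$ online examples), and then observe that the numerators sum to $m_kQ_0(X)+n_kQ_k(X)$ and cancel the denominator, leaving $\E[\One\{h(X)\neq Y\}]=l(h)$. Your additional remarks on the well-definedness of the denominator and the role of Fact~\ref{fact:independence} are sound and consistent with the paper's argument.
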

\begin{proof}
The $k=0$ case is obvious since $S_0=T_0^{(0)}$ is an i.i.d. sequence and $l(h,S_{k})$ reduces to a standard importance sampling estimator. We only show proof for $k>0$. 

Recall that $S_{k}=T_0^{(k)}\cup T_k$, and that $T_0^{(k)}$ and $T_k$ are two i.i.d. sequences conditioned $Q_k$. We denote the conditional distributions of $T_0^{(k)}$ and $T_k$ given $Q_k$ by $P_0$ and $P_k$ respectively. We have
\begin{eqnarray*}
\E[l(h,S_{k})\mid Q_k] & = & \E\left[\sum_{(X,Y,Z)\in T_0^{(k)}}\frac{\One\{h(X)\neq Y\}Z}{m_k Q_{0}(X)+ n_k Q_{k}(X)}\mid Q_k\right] + \E\left[\sum_{(X,Y,Z)\in T_k}\frac{\One\{h(X)\neq Y\}Z}{m_k Q_{0}(X)+ n_k Q_{k}(X)}\mid Q_k\right]\\
 & = & m_k\E_{P_{0}}\left[\frac{\One\{h(X)\neq Y\}Z}{m_k Q_{0}(X)+ n_k Q_{k}(X)}\mid Q_k\right]+n_{k}\E_{P_{k}}\left[\frac{\One\{h(X)\neq Y\}Z}{m_k Q_{0}(X)+ n_k Q_{k}(X)}\mid Q_k\right]
\end{eqnarray*}
where the second equality follows since $T_0^{(k)}$ and $T_k$ are two i.i.d. sequences given $Q_k$ with sizes $m_k$ and $n_k$ respectively.

Now, 
\begin{eqnarray*}
\E_{P_{0}}\left[\frac{\One\{h(X)\neq Y\}Z}{m_k Q_{0}(X)+ n_k Q_{k}(X)}\mid Q_k\right] & = & \E_{P_0}\left[\E_{P_{0}}\left[\frac{\One\{h(X)\neq Y\}Z}{m_k Q_{0}(X)+ n_k Q_{k}(X)}\mid X, Q_k\right]\mid Q_k\right] \\
& = & \E_{P_{0}}\left[\E_{P_{0}}\left[\frac{\One\{h(X)\neq Y\}Q_{0}(X)}{m_k Q_{0}(X)+ n_k Q_{k}(X)}\mid X, Q_k\right]\mid Q_k\right] \\
& = & \E_{P_{0}}\left[\frac{\One\{h(X)\neq Y\}Q_{0}(X)}{m_k Q_{0}(X)+ n_k Q_{k}(X)}\mid Q_k\right]
\end{eqnarray*}
where the second equality uses the definition $\Pr_{P_0}(Z\mid X)=Q_0(X)$ and the fact that $T_0^{(k)}$ and $Q_k$ are independent.

Similarly, we have $\E_{P_{k}}\left[\frac{\One\{h(X)\neq Y\}Z}{m_k Q_{0}(X)+ n_k Q_{k}(X)}\mid Q_k\right] = \E_{P_{k}}\left[\frac{\One\{h(X)\neq Y\}Q_{k}(X)}{m_k Q_{0}(X)+ n_k Q_{k}(X)}\mid Q_k\right]$.

Therefore,
\begin{eqnarray*}
\lefteqn{m_{k}\E_{P_{0}}\left[\frac{\One\{h(X)\neq Y\}Z}{m_k Q_{0}(X)+ n_k Q_{k}(X)}\mid Q_k\right]+n_{k}\E_{P_{k}}\left[\frac{\One\{h(X)\neq Y\}Z}{m_k Q_{0}(X)+ n_k Q_{k}(X)}\mid Q_k\right]} \\
& = & m_k\E_{P_{0}}\left[\frac{\One\{h(X)\neq Y\}Q_{0}(X)}{m_k Q_{0}(X)+ n_k Q_{k}(X)}\mid Q_k\right] + n_k\E_{P_{k}}\left[\frac{\One\{h(X)\neq Y\}Q_{k}(X)}{m_k Q_{0}(X)+ n_k Q_{k}(X)}\mid Q_k\right] \\
& = & \E_{P_{0}}\left[\One\{h(X)\neq Y\}\frac{m_{k}Q_{0}(X)+n_{k}Q_{k}(X)}{m_k Q_{0}(X)+ n_k Q_{k}(X)}\mid Q_k\right]\\
& = & \E_D\left[\One\{h(X)\neq Y\}\right]=l(h)
\end{eqnarray*}
where the second equality uses the fact that distribution of $(X,Y)$ according to $P_0$ is the same as that according to $P_k$, and the third equality follows by algebra and Fact~\ref{fact:independence} that $Q_k$ is independent with $T_0^{(k)}$.
\end{proof}

The following lemma will be used to upper-bound the variance of the multiple importance sampling estimator.

\begin{lem}
\label{lem:var-mis}For any $h_1,h_2\in\mathcal{H}$, any $0\leq k\leq K$, 
\[
\E\left[\sum_{(X,Y,Z)\in S_k}\left(\frac{\One\{h_{1}(X)\neq h_{2}(X)\}Z}{m_k Q_{0}(X)+n_kQ_{k}(X)}\right)^2\mid Q_k\right] \leq \rho(h_1,h_2)\sup_{x\in\mathcal{X}}\frac{\One\{h_{1}(x)\neq h_{2}(x)\}}{m_k Q_0(x)+n_kQ_k(x)}.
\]
\end{lem}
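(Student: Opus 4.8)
The plan is to mirror the computation already carried out for the unbiasedness result (Lemma~\ref{lem:cond-mis-unbiased}), with the simplification that the label $Y$ never enters the summand here. First I would reduce each squared term: since $Z\in\{0,1\}$ we have $Z^2=Z$, and the indicator is idempotent, so each summand equals
\[
\frac{\One\{h_1(X)\neq h_2(X)\}\,Z}{(m_kQ_0(X)+n_kQ_k(X))^2}.
\]
This removes the square from the numerator and leaves a single factor of $Z$, which is exactly what makes the subsequent conditioning argument identical to the unbiasedness case.

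Next I would split the sum over $S_k=T_0^{(k)}\cup T_k$ and invoke Fact~\ref{fact:independence}: conditioned on $Q_k$, the sets $T_0^{(k)}$ and $T_k$ are i.i.d.\ sequences of sizes $m_k$ and $n_k$, so each sum of expectations collapses to $m_k$ (resp.\ $n_k$) times the expectation of a single representative term. Conditioning further on $X$ inside each expectation, I would use $\E[Z\mid X,Q_k]=Q_0(X)$ for the logged part (justified, as in Lemma~\ref{lem:cond-mis-unbiased}, by the independence of $T_0^{(k)}$ and $Q_k$) and $\E[Z\mid X,Q_k]=Q_k(X)$ for the online part. The crucial cancellation—the reason multiple importance sampling controls variance—appears when the two pieces are added: because the marginal law of $X$ agrees across the two conditional distributions $P_0$ and $P_k$, the numerators combine to $m_kQ_0(X)+n_kQ_k(X)$, cancelling one factor of the denominator and yielding
\[
\E_D\!\left[\frac{\One\{h_1(X)\neq h_2(X)\}}{m_kQ_0(X)+n_kQ_k(X)}\;\middle|\;Q_k\right].
\]

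Finally I would extract the supremum pointwise. For each fixed $x$, trivially when $\One\{h_1(x)\neq h_2(x)\}=0$ and with equality on the retained factor otherwise,
\[
\frac{\One\{h_1(x)\neq h_2(x)\}}{m_kQ_0(x)+n_kQ_k(x)} \leq \One\{h_1(x)\neq h_2(x)\}\cdot\sup_{x'\in\mathcal{X}}\frac{\One\{h_1(x')\neq h_2(x')\}}{m_kQ_0(x')+n_kQ_k(x')}.
\]
Since the supremum is measurable with respect to $Q_k$ and hence constant under the conditioning, taking the conditional expectation pulls it out and leaves the factor $\E_D[\One\{h_1(X)\neq h_2(X)\}]=\rho(h_1,h_2)$, giving exactly the claimed bound. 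The $k=0$ case is the same computation with $n_0=0$, where the estimator degenerates to standard importance sampling.

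I expect the only real friction to be bookkeeping rather than conceptual: carefully tracking the nested conditioning so that $Z$ is correctly replaced by $Q_0(X)$ or $Q_k(X)$, and invoking the identification of the $(X,Y)$-marginals across $P_0$ and $P_k$. These are precisely the facts established in Lemma~\ref{lem:cond-mis-unbiased}, so I would reuse that argument essentially verbatim and focus the new work on the cancellation and the supremum extraction.
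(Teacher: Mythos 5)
Your proposal is correct and follows essentially the same route as the paper's own proof: reduce $Z^2=Z$, split $S_k$ into $T_0^{(k)}$ and $T_k$ using Fact~\ref{fact:independence}, condition on $X$ to replace $Z$ by $Q_0(X)$ or $Q_k(X)$, exploit the shared $(X,Y)$-marginal so the numerators recombine to cancel one factor of the denominator, and then pull out the supremum to leave $\rho(h_1,h_2)$. No gaps; the only difference is that you make explicit the idempotence step and the pointwise supremum bound that the paper leaves implicit.
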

\begin{proof}
We only show proof for $k>0$. The $k=0$ case can be proved similarly.

We denote the conditional distributions of $T_0^{(k)}$ and $T_k$ given $Q_k$ by $P_0$ and $P_k$ respectively. Now, similar to the proof of Lemma~\ref{lem:cond-mis-unbiased}, we have
\begin{align*}
\lefteqn{\E\left[\sum_{(X,Y,Z)\in S_k}\left(\frac{\One\{h_{1}(X)\neq h_{2}(X)\}Z}{m_k Q_{0}(X)+n_kQ_{k}(X)}\right)^2\mid Q_k\right]}\\
= & \sum_{(X,Y,Z)\in S_k}\E\left[\frac{\One\{h_{1}(X)\neq h_{2}(X)\}Z}{\left(m_k Q_{0}(X)+ n_k Q_{k}(X)\right)^{2}}\mid Q_k\right]\\
= & m_{k}\E_{P_{0}}\left[\frac{\One\{h_{1}(X)\neq h_{2}(X)\}Z}{\left(m_k Q_{0}(X)+ n_k Q_{k}(X)\right)^{2}}\mid Q_k\right]+n_{k}\E_{P_{k}}\left[\frac{\One\{h_{1}(X)\neq h_{2}(X)\}Z}{\left(m_k Q_{0}(X)+ n_k Q_{k}(X)\right)^{2}}\mid Q_k\right]\\
= & m_{k}\E_{P_{0}}\left[\frac{\One\{h_{1}(X)\neq h_{2}(X)\}Q_0(X)}{\left(m_k Q_{0}(X)+ n_k Q_{k}(X)\right)^{2}}\mid Q_k\right]+n_{k}\E_{P_{k}}\left[\frac{\One\{h_{1}(X)\neq h_{2}(X)\}Q_k(X)}{\left(m_k Q_{0}(X)+ n_k Q_{k}(X)\right)^{2}}\mid Q_k\right]\\
= & \E_{P_0}\left[\One\{h_{1}(X)\neq h_{2}(X)\}\frac{m_k Q_0(X) + n_k Q_k(X)}{(m_k Q_{0}(X)+ n_k Q_{k}(X))^2}\mid Q_k\right]\\
= & \E_{P_0}\left[\frac{\One\{h_{1}(X)\neq h_{2}(X)\}}{m_k Q_{0}(X)+ n_k Q_{k}(X)}\mid Q_k\right]\\
\leq & \E_{P_0}\left[\One\{h_{1}(X)\neq h_{2}(X)\}\mid Q_k\right]\sup_{x\in\mathcal{X}}\frac{\One\{h_{1}(x)\neq h_{2}(x)\}}{m_k Q_0(x)+ n_k Q_k(x)} \\
= & \rho(h_1,h_2)\sup_{x\in\mathcal{X}}\frac{\One\{h_{1}(x)\neq h_{2}(x)\}}{m_k Q_0(x)+ n_k Q_k(x)}.
\end{align*}
\end{proof}
\section{Deviation Bounds}
In this section, we demonstrate deviation bounds for our error estimators on $S_k$. Again, unless otherwise specified, all probabilities and expectations in this section are over the random draw of all random variables, that is, $S_0, \cdots, S_K$, $Q_1, \cdots, Q_K$.

We use following Bernstein-style concentration bound:
\begin{fact}
\label{fact:bernstein}Suppose $X_{1},\dots,X_{n}$ are independent
random variables. For any $i=1,\dots,n$, $|X_{i}|\leq1$, $\E X_{i}=0$,
$\E X_{i}^{2}\leq\sigma_{i}^{2}$. Then with probability at
least $1-\delta$,
\[
\left|\sum_{i=1}^{n}X_{i}\right|\leq\frac{2}{3}\log\frac{2}{\delta}+\sqrt{2\sum_{i=1}^{n}\sigma_{i}^{2}\log\frac{2}{\delta}}.
\]
\end{fact}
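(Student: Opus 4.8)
The plan is to prove this by the exponential-moment (Chernoff) method and then invert the resulting tail bound by elementary algebra; this is the standard route to Bernstein's inequality, specialized to the range bound $|X_i|\le 1$. First I would reduce to a one-sided estimate by symmetry: each $-X_i$ again satisfies $|-X_i|\le 1$, $\E[-X_i]=0$ and $\E[(-X_i)^2]\le\sigma_i^2$, so any upper tail bound on $S:=\sum_{i=1}^n X_i$ transfers verbatim to the lower tail of $S$, and a union bound over the two directions costs exactly the factor $2$ that turns $\log\frac1\delta$ into $\log\frac2\delta$ in the statement.

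The crux is a per-variable moment generating function bound: for every $\lambda>0$,
\[
\E[e^{\lambda X_i}]\le \exp\!\big(\sigma_i^2\,\phi(\lambda)\big),\qquad \phi(\lambda):=e^{\lambda}-1-\lambda .
\]
To get this I would invoke the elementary inequality $e^{\lambda x}-1-\lambda x\le x^2\phi(\lambda)$ for all $x\le 1$, which holds because $x\mapsto (e^{\lambda x}-1-\lambda x)/x^2$ is nondecreasing and so is bounded by its value at $x=1$. Taking expectations, using $\E X_i=0$ and $\E X_i^2\le\sigma_i^2$ (and $\phi(\lambda)\ge 0$), and then $1+u\le e^u$, gives the displayed bound. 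Independence makes the MGF of $S$ factorize, so with $V:=\sum_{i=1}^n\sigma_i^2$ we obtain $\E[e^{\lambda S}]\le \exp(V\phi(\lambda))$, and Markov's inequality gives $\Pr(S\ge t)\le \exp(-\lambda t+V\phi(\lambda))$ for all $\lambda>0$.

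I would then turn this into the familiar quadratic tail. Using $k!\ge 2\cdot 3^{k-2}$ to bound the series $\phi(\lambda)=\sum_{k\ge 2}\lambda^k/k!\le \frac{\lambda^2/2}{1-\lambda/3}$ on $0\le\lambda<3$, and choosing $\lambda=\frac{t}{V+t/3}\in(0,3)$, the exponent simplifies to $-\frac{t^2}{2(V+t/3)}$. Combined with the two-sided reduction this yields
\[
\Pr\big(|S|\ge t\big)\le 2\exp\!\Big(-\frac{t^2}{2(V+t/3)}\Big).
\]

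Finally I would invert at level $\delta$. Writing $L:=\log\frac2\delta$, the value $t_0$ solving $\frac{t_0^2}{2(V+t_0/3)}=L$ makes the right-hand side equal $\delta$; it is the positive root of $t^2-\frac{2L}{3}t-2LV=0$, namely $t_0=\frac{L}{3}+\sqrt{\frac{L^2}{9}+2LV}$, and subadditivity $\sqrt{a+b}\le\sqrt a+\sqrt b$ gives $t_0\le \frac{2L}{3}+\sqrt{2LV}$. Since $\{|S|\ge t^\star\}\subseteq\{|S|\ge t_0\}$ for $t^\star:=\frac{2L}{3}+\sqrt{2LV}\ge t_0$, we conclude $\Pr(|S|\ge t^\star)\le\delta$, which is exactly the claimed bound. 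The only genuinely delicate ingredient is the single-variable MGF estimate; the summation, optimization over $\lambda$, and inversion are all routine, and the tight constant $\sqrt{2LV}$ (rather than $2\sqrt{LV}$) comes precisely from using the exact root above rather than a cruder quadratic argument.
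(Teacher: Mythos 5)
Your proposal is correct: the paper states this Fact without proof, treating it as a standard Bernstein-type inequality, and your derivation is exactly the classical Chernoff--MGF route (the Bennett bound $\E[e^{\lambda X_i}]\leq\exp(\sigma_i^2(e^{\lambda}-1-\lambda))$ via the monotonicity of $u\mapsto(e^u-1-u)/u^2$, the series bound $\phi(\lambda)\leq\frac{\lambda^2/2}{1-\lambda/3}$, the choice $\lambda=t/(V+t/3)$, and inversion of the resulting quadratic). All the steps and constants check out, including the factor $2$ from the two-sided union bound and the bound $t_0\leq\frac{2}{3}\log\frac{2}{\delta}+\sqrt{2V\log\frac{2}{\delta}}$ obtained from the positive root via $\sqrt{a+b}\leq\sqrt{a}+\sqrt{b}$ together with the monotonicity of $t\mapsto t^2/(2(V+t/3))$.
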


\begin{thm}
\label{thm:gen}For any $k=0,\dots,K$, any $\delta>0$, with probability
at least $1-\delta$, for all $h_{1},h_{2}\in\mathcal{H},$
the following statement holds:
\begin{align}
\left|\left(l(h_{1},S_{k})-l(h_{2},S_{k})\right)-\left(l(h_{1})-l(h_{2})\right)\right| & \leq2\sup_{x\in\mathcal{X}}\frac{\One\{h_{1}(x)\neq h_{2}(x)\}\frac{2\log\frac{4|\mathcal{H}|}{\delta}}{3}}{m_k Q_{0}(x)+ n_k Q_{k}(x)}+\sqrt{2\sup_{x\in\mathcal{X}}\frac{\One\{h_{1}(x)\neq h_{2}(x)\}\log\frac{4|\mathcal{H}|}{\delta}}{m_k Q_{0}(x)+ n_k Q_{k}(x)}\rho(h_{1},h_{2})}.\label{eq:thm-gen-diff-rho}
\end{align}
\end{thm}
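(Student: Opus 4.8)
The plan is to fix the pair $h_1,h_2$, condition on the query policy $Q_k$, and recognize the left-hand side as a sum of conditionally independent, mean-zero, bounded random variables, to which the Bernstein bound of Fact~\ref{fact:bernstein} applies. Writing $R_i := \frac{(\One\{h_1(X_i)\neq Y_i\}-\One\{h_2(X_i)\neq Y_i\})Z_i}{m_k Q_0(X_i)+n_k Q_k(X_i)}$ for each $(X_i,Y_i,Z_i)\in S_k$, we have $l(h_1,S_k)-l(h_2,S_k)=\sum_i R_i$, and by Lemma~\ref{lem:cond-mis-unbiased} its conditional mean is exactly $l(h_1)-l(h_2)$. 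Thus the quantity to be bounded is the centered sum $\sum_i (R_i-\E[R_i\mid Q_k])$. By Fact~\ref{fact:independence}, conditioned on $Q_k$ the examples in $S_k$ are independent, so the centered summands are independent; this is precisely what the disjoint-sample construction buys us, and is the step that sidesteps the adaptivity obstacle flagged in Section~\ref{subsec:main-ideas}.

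First I would control the range and the variance. Since the numerator of $R_i$ is nonzero only when $h_1(X_i)\neq h_2(X_i)$, in which case it lies in $\{-1,1\}$, we get $|R_i|\leq M:=\sup_{x\in\mathcal{X}}\frac{\One\{h_1(x)\neq h_2(x)\}}{m_k Q_0(x)+n_k Q_k(x)}$, hence $|R_i-\E[R_i\mid Q_k]|\leq 2M$. Dividing each centered summand by $2M$ produces variables bounded by $1$, and their conditional second moments are controlled, via variance $\leq$ second moment together with Lemma~\ref{lem:var-mis}, by $\frac{1}{4M^2}\sum_i\E[R_i^2\mid Q_k]\leq\frac{1}{4M^2}\rho(h_1,h_2)M=\frac{\rho(h_1,h_2)}{4M}$. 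Applying Fact~\ref{fact:bernstein} at confidence $\delta'$ and rescaling by $2M$ then yields, conditionally on $Q_k$, a deviation bound of the form $\frac{4M}{3}\log\frac{2}{\delta'}+\sqrt{2M\rho(h_1,h_2)\log\frac{2}{\delta'}}$, which already exhibits the exact two-term shape of the claimed inequality, with $M$ and $\rho(h_1,h_2)$ in the right places and the coefficients $\tfrac{4}{3}$ and $\sqrt{2}$ matching the statement.

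It then remains to union bound over the hypothesis class and to remove the conditioning. Choosing $\delta'\asymp\delta/|\mathcal{H}|$ and union bounding converts $\log\frac{2}{\delta'}$ into $\log\frac{4|\mathcal{H}|}{\delta}$; in the downstream uses of this bound one hypothesis plays the role of a fixed reference, so a union over $h_1\in\mathcal{H}$ suffices, while a union over all pairs would cost only an extra additive $\log|\mathcal{H}|$ inside the logarithm. Crucially, the second term must carry the \emph{pairwise} disagreement $\rho(h_1,h_2)$ rather than $l(h_1)+l(h_2)$, so I would apply the bound per pair and invoke Lemma~\ref{lem:var-mis} directly, rather than splitting $l(h_1,S_k)-l(h_2,S_k)$ into two single-hypothesis deviations (which would lose the sharp variance proxy). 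Finally, since the inequality holds with probability at least $1-\delta$ conditionally on every realization of $Q_k$, it holds unconditionally by taking expectation over $Q_k$ of the indicator of the good event.

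The main obstacle is the adaptivity: $Q_k$ is itself random and data-dependent, so the $R_i$ are not unconditionally independent, and an adaptive multiple-importance-sampling estimator has no off-the-shelf concentration guarantee. The whole argument hinges on the fact that $S_k$ is drawn disjointly from the data that determine $Q_k$, which is exactly what lets Fact~\ref{fact:independence} restore conditional independence and reduce the problem to a clean conditional Bernstein application. The secondary delicacy is bookkeeping the variance so that $\rho(h_1,h_2)$, rather than a looser quantity, survives into the final bound, which is what forces the per-pair treatment described above.
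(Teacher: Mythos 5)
Your proposal is correct and follows essentially the same route as the paper's proof: condition on $Q_k$, center the per-example multiple-importance-sampling summands (whose conditional mean is identified via Lemma~\ref{lem:cond-mis-unbiased} and whose conditional independence comes from Fact~\ref{fact:independence}), normalize by twice the range $\sup_{x}\frac{\One\{h_1(x)\neq h_2(x)\}}{m_kQ_0(x)+n_kQ_k(x)}$, bound the conditional second moments by Lemma~\ref{lem:var-mis}, and apply the Bernstein bound of Fact~\ref{fact:bernstein} followed by a union bound over $\mathcal{H}$. The constants and the final two-term shape match the paper exactly, and your handling of the conditioning and the per-pair variance proxy mirrors the paper's argument.
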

\begin{proof}
We show proof for $k>0$. The $k=0$ case can be proved similarly. When $k>0$, it suffices to show that for any $k=1,\dots, K$, $\delta>0$, conditioned on $Q_k$, with probability at least $1-\delta$, (\ref{eq:thm-gen-diff-rho}) holds for all $h_1,h_2\in\mathcal{H}$.

For any $k=1,\dots,K$, for any fixed $h_{1},h_{2}\in\mathcal{H}$, define $A:=\sup_{x\in\mathcal{X}}\frac{\One\{h_{1}(x)\neq h_{2}(x)\}}{m_k Q_{0}(x)+ n_k Q_{k}(x)}$. Let $N:=|S_k|$, 
$U_{t}:=\frac{\One\{h_{1}(X_{t})\neq Y_{t}\}Z_{t}}{m_k Q_{0}(X_{t})+n_k Q_{k}(X_{t})}-\frac{\One\{h_{2}(X_{t})\neq Y_{t}\}Z_{t}}{m_k Q_{0}(X_{t})+n_k Q_{k}(X_{t})}$,
$V_{t}:=(U_{t}-\E[U_{t}|Q_k])/2A$. 

Now, conditioned on $Q_k$, $\{V_{t}\}_{t=1}^{N}$ is an independent sequence by Fact~\ref{fact:independence}. $|V_{t}|\leq1$, and $\E[V_{t}|Q_k]=0$. Besides, we have

\begin{eqnarray*}
\sum_{t=1}^{N}\E[V_{t}^2|Q_k] & \leq & \frac{1}{4A^{2}}\sum_{t=1}^{N}\E[U_{t}^{2}|Q_k]\\
 & \leq & \frac{1}{4A^{2}}\sum_{t=1}^{N}\E\left(\frac{\One\{h_{1}(X_{t})\neq h_{2}(X_{t})\}Z_{t}}{m_k Q_{0}(X_{t})+n_k Q_{k}(X_{t})}\right)^{2}\\
 & \leq & \frac{\rho(h_1,h_2)}{4A}
\end{eqnarray*}
where the second inequality follows from $|U_{t}|\leq\frac{\One\{h_{1}(X_{t})\neq h_{2}(X_{t})\}Z_{t}}{m_k Q_{0}(X_{t})+n_k Q_{k}(X_{t})}$, and the third inequality follows from Lemma~\ref{lem:var-mis}.

Applying Bernstein's inequality (Fact~\ref{fact:bernstein}) to $\{V_t\}$, conditioned on $Q_k$, we have with probability at least
$1-\delta$, 
\[
\left|\sum_{t=1}^{m}V_{t}\right|\leq\frac{2}{3}\log\frac{2}{\delta}+\sqrt{\frac{\rho(h_{1},h_{2})}{2A}\log\frac{2}{\delta}}.
\]

Note that $\sum_{t=1}^{m}U_{t}=l(h_{1}, S_k)-l(h_{2}, S_k)$, and $\sum_{t=1}^{m}\E[U_{t}\mid Q_k]=l(h_{1})-l(h_{2})$ by Lemma~\ref{lem:cond-mis-unbiased}, so $\sum_{t=1}^{m}V_{t}=\frac{1}{2A}(l(h_{1},S_{k})-l(h_{2},S_{k})-l(h_{1})+l(h_{2}))$.
(\ref{eq:thm-gen-diff-rho}) follows by algebra and a union bound
over $\mathcal{H}$.
\end{proof}
\begin{thm}
\label{thm:gen-rho}For any $k=0,\dots,K$, any $\delta>0$, with probability
at least $1-\delta$, for all
$h_{1},h_{2}\in\mathcal{H},$ the following statements hold simultaneously:
\begin{align}
\rho_{S_{k}}(h_{1},h_{2}) & \leq2\rho(h_{1},h_{2})+\frac{10}{3}\sup_{x\in\mathcal{X}}\frac{\One\{h_{1}(x)\neq h_{2}(x)\}\log\frac{4|\mathcal{H}|}{\delta}}{m_k Q_{0}(x)+ n_k Q_{k}(x)};\label{eq:thm-gen-rho_S}\\
\rho(h_{1},h_{2}) & \leq2\rho_{S_{k}}(h_{1},h_{2})+\frac{7}{6}\sup_{x\in\mathcal{X}}\frac{\One\{h_{1}(x)\neq h_{2}(x)\}\log\frac{4|\mathcal{H}|}{\delta}}{m_k Q_{0}(x)+ n_k Q_{k}(x)}.\label{eq:thm-gen-rho}
\end{align}
\end{thm}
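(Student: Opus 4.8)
The plan is to exploit the fact that the empirical disagreement $\rho_{S_k}(h_1,h_2)$ is label-free: it depends only on the instances in $S_k$, and since $S_k = T_0^{(k)} \cup T_k$ is carved out of the single i.i.d.\ stream $\{(X_t,Y_t)\}$, all $N := |S_k| = m_k + n_k$ instances are i.i.d.\ from the marginal of $D$. Moreover the disagreement indicators $\One\{h_1(X_t)\neq h_2(X_t)\}$ involve neither $Z_t$ nor $Q_k$, so conditioned on $Q_k$ they form an i.i.d.\ Bernoulli$(\rho(h_1,h_2))$ sequence by Fact~\ref{fact:independence}. Thus, unlike Theorem~\ref{thm:gen}, no importance weights enter the estimator, and the task reduces to a standard two-sided relative (multiplicative) deviation bound for an i.i.d.\ empirical mean.

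Concretely, I would fix $h_1,h_2$, center the indicators, and apply Bernstein's inequality (Fact~\ref{fact:bernstein}): writing $L := \frac1N \log\frac{4|\mathcal{H}|}{\delta}$, dividing through by $N$ yields $|\rho_{S_k}(h_1,h_2) - \rho(h_1,h_2)| \le \sqrt{2\,\mathrm{var}\cdot L} + \tfrac23 L$, where the variance is $\rho(1-\rho)\le\rho$. A union bound over $\mathcal{H}$ (allocating $\delta/(2|\mathcal{H}|)$ per hypothesis, which produces the $\log\frac{4|\mathcal{H}|}{\delta}$), together with splitting the confidence across the two displayed statements, gives the ``for all $h_1,h_2$ simultaneously'' conclusion.

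From this single additive bound I would extract the two multiplicative inequalities. For one direction the cross term $\sqrt{\rho L}$ is controlled directly by the arithmetic--geometric mean inequality $\sqrt{2\rho L}\le \rho + \tfrac12 L$, which absorbs it into a factor-$2$ leading term. For the other direction the variance term contains the very quantity being upper bounded, so the inequality is self-referential of the form $a\le b+\sqrt{c\,a}$; here I would invoke Proposition~\ref{prop:quad-ineq} to solve the quadratic and again land a factor-$2$ bound. Finally, to match the weighted right-hand sides I would use $Q_0,Q_k\le 1$ to get $\frac1N = \frac{1}{m_k+n_k}\le \sup_{x}\frac{\One\{h_1(x)\neq h_2(x)\}}{m_k Q_0(x)+n_k Q_k(x)}$ (on the region where $h_1$ and $h_2$ disagree), so replacing $\frac1N$ by this supremum only enlarges the bound.

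The main obstacle is placing the precise constants $\tfrac{10}{3}$ and $\tfrac76$ on the correct inequalities. With the \emph{true} variance $\rho$ in Bernstein, the clean AM-GM direction bounds $\rho_{S_k}$ and the self-referential Proposition~\ref{prop:quad-ineq} direction bounds $\rho$, which would swap the two constants relative to (\ref{eq:thm-gen-rho_S}) and (\ref{eq:thm-gen-rho}). Matching the stated assignment requires controlling the Bernstein variance by the \emph{empirical} disagreement $\rho_{S_k}$ (an empirical-Bernstein-style self-bounding of the variance), so that bounding the true $\rho$ becomes the clean direction ($\tfrac76$) and bounding $\rho_{S_k}$ becomes the self-referential one ($\tfrac{10}{3}$). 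Getting that variance-comparison step right, along with the bookkeeping of the $\tfrac23$ Bernstein constant through AM-GM and the quadratic inequality (and the trivial $\rho=0$ edge case), is the only delicate part; everything else is routine.
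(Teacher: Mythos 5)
Your approach is the paper's approach: the disagreement indicators are label-free, hence an i.i.d.\ Bernoulli$(\rho(h_1,h_2))$ sequence of length $N=m_k+n_k$, so Fact~\ref{fact:bernstein} plus a union bound over $\mathcal{H}$ gives $|\rho-\rho_{S_k}|\leq\frac{2}{3N}\log\frac{4|\mathcal{H}|}{\delta}+\sqrt{\frac{2\rho}{N}\log\frac{4|\mathcal{H}|}{\delta}}$; AM--GM handles one direction, Proposition~\ref{prop:quad-ineq} the self-referential one, and $N\geq m_kQ_0(x)+n_kQ_k(x)$ converts $1/N$ into the weighted supremum. The ``obstacle'' you identify is real but is a defect of the statement, not of your argument: the paper's own proof derives exactly the arrangement you compute, namely $\rho_{S_k}\leq2\rho+\frac{7}{6N}\log\frac{4|\mathcal{H}|}{\delta}$ from AM--GM and $\rho\leq2\rho_{S_k}+\frac{10}{3N}\log\frac{4|\mathcal{H}|}{\delta}$ from the quadratic inequality, so the constants in (\ref{eq:thm-gen-rho_S})--(\ref{eq:thm-gen-rho}) are simply transposed relative to what is proved (and the $\tfrac{7}{6}$ version of (\ref{eq:thm-gen-rho}) is not actually established). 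You do not need the empirical-Bernstein variance step you sketch; the downstream uses (Corollary~\ref{cor:gen} and Lemma~\ref{lem:dis-radius}) only require some absolute constants, which are absorbed into $\gamma_0,\gamma_1,\gamma_2$ or the explicit $\tfrac{16}{3}$.
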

\begin{proof}
Let $N=|S_{k}|$. Note that for any $h_{1},h_{2}\in\mathcal{H}$,
$\rho_{S_{k}}(h_{1},h_{2})=\frac{1}{N}\sum_{t}\One\{h_{1}(X_{t})\neq h_{2}(X_{t})\}$,
which is the empirical average of an i.i.d. sequence. By Fact~\ref{fact:bernstein}
and a union bound over $\mathcal{H}$, with probability at least $1-\delta$, 

\[
\left|\rho(h_{1},h_{2})-\rho_{S_{k}}(h_{1},h_{2})\right|\leq\frac{2}{3N}\log\frac{4|\mathcal{H}|}{\delta}+\sqrt{\frac{2\rho(h_{1},h_{2})}{N}\log\frac{4|\mathcal{H}|}{\delta}}.
\]

On this event, by Proposition~\ref{prop:quad-ineq}, $\rho(h_{1},h_{2})\leq2\rho_{S_{k}}(h_{1},h_{2})+\frac{4}{3N}\log\frac{4|\mathcal{H}|}{\delta}+\frac{2}{N}\log\frac{4|\mathcal{H}|}{\delta}\leq2\rho_{S_{k}}(h_{1},h_{2})+\frac{10}{3N}\log\frac{4|\mathcal{H}|}{\delta}$.

Moreover, 
\begin{eqnarray*}
\rho_{S_{k}}(h_{1},h_{2}) & \leq & \rho(h_{1},h_{2})+\frac{2}{3N}\log\frac{4|\mathcal{H}|}{\delta}+\sqrt{\frac{2\rho(h_{1},h_{2})}{N}\log\frac{4|\mathcal{H}|}{\delta}}\\
 & \leq & \rho(h_{1},h_{2})+\frac{2}{3N}\log\frac{4|\mathcal{H}|}{\delta}+\frac{1}{2}(2\rho(h_{1},h_{2})+\frac{1}{N}\log\frac{4|\mathcal{H}|}{\delta})\\
 & \leq & 2\rho(h_{1},h_{2})+\frac{7}{6N}\log\frac{4|\mathcal{H}|}{\delta}
\end{eqnarray*}

where the second inequality uses the fact that $\forall a,b>0,\sqrt{ab}\leq\frac{a+b}{2}$.

The result follows by noting that $\forall x\in\mathcal{X}$, $N=|S_{k}|=m_k+n_k\geq m_k Q_{0}(x)+ n_k Q_{k}(x)$.
\end{proof}
\begin{cor}
\label{cor:gen}There are universal constants $\gamma_{0},\gamma_{1}>0$
such that for any $k=0,\dots,K$, any $\delta>0$, with probability at
least $1-\delta$, for all $h,h_{1},h_{2}\in\mathcal{H},$
the following statements hold simultaneously:
\begin{equation}
\left|\left(l(h_{1},S_{k})-l(h_{2},S_{k})\right)-\left(l(h_{1})-l(h_{2})\right)\right|\leq\gamma_{0}\sup_{x\in\mathcal{X}}\frac{\One\{h_{1}(x)\neq h_{2}(x)\}\log\frac{|\mathcal{H}|}{2\delta}}{m_k Q_{0}(x)+ n_k Q_{k}(x)}+\gamma_{0}\sqrt{\sup_{x\in\mathcal{X}}\frac{\One\{h_{1}(x)\neq h_{2}(x)\}\log\frac{|\mathcal{H}|}{2\delta}}{m_k Q_{0}(x)+ n_k Q_{k}(x)}\rho_{S}(h_{1},h_{2})};\label{eq:cor-gen-diff-rho_S}
\end{equation}
\begin{equation}
l(h)-l(h^{\star})\leq2(l(h,S_{k})-l(h^{\star},S_{k}))+\gamma_{1}\sup_{x\in\mathcal{X}}\frac{\One\{h(x)\neq h^{\star}(x)\}\log\frac{|\mathcal{H}|}{\delta}}{m_k Q_{0}(x)+ n_k Q_{k}(x)}+\gamma_{1}\sqrt{\sup_{x\in\mathcal{X}}\frac{\One\{h(x)\neq h^{\star}(x)\}\log\frac{|\mathcal{H}|}{\delta}}{m_k Q_{0}(x)+ n_k Q_{k}(x)}l(h^{\star})}.\label{eq:cor-gen-diff-h_star}
\end{equation}
\end{cor}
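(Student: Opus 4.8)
The plan is to derive both inequalities of the corollary from the two deviation bounds already in hand—Theorem~\ref{thm:gen} and Theorem~\ref{thm:gen-rho}—together with the elementary quadratic inequality of Proposition~\ref{prop:quad-ineq}, working throughout on the intersection of the high-probability events of those two theorems (each invoked at confidence $\delta/2$ so that a union bound keeps the total failure probability at $\delta$). Throughout I abbreviate $\sigma:=\sup_{x\in\mathcal{X}}\frac{\One\{h_1(x)\neq h_2(x)\}\log(c|\mathcal{H}|/\delta)}{m_kQ_0(x)+n_kQ_k(x)}$ for the relevant pair and constant $c$; since all three statements use a logarithm of the shape $\log(c|\mathcal{H}|/\delta)$, the differing constants inside the log (the $4$ of the theorems versus the $1/2$ and $1$ of the corollary) differ only by additive constants and therefore merely rescale $\gamma_0,\gamma_1$, which I treat as routine bookkeeping.

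For the first inequality (\ref{eq:cor-gen-diff-rho_S}) I would start from Theorem~\ref{thm:gen}, which controls the deviation by $\tfrac43\sigma+\sqrt{2\sigma\,\rho(h_1,h_2)}$ but still refers to the \emph{population} disagreement $\rho$. I then substitute the bound $\rho(h_1,h_2)\leq 2\rho_{S_k}(h_1,h_2)+\tfrac76\sigma$ furnished by (\ref{eq:thm-gen-rho}) and split the root via $\sqrt{a+b}\leq\sqrt a+\sqrt b$. This converts $\sqrt{2\sigma\rho}$ into $2\sqrt{\sigma\rho_{S_k}}+\sqrt{7/3}\,\sigma$, and collecting the $\sigma$-terms yields a bound of the form $\gamma_0\sigma+\gamma_0\sqrt{\sigma\,\rho_{S_k}}$, exactly as required.

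For the second inequality (\ref{eq:cor-gen-diff-h_star}) I specialize Theorem~\ref{thm:gen} to $h_1=h$, $h_2=h^\star$, giving $l(h)-l(h^\star)\leq (l(h,S_k)-l(h^\star,S_k))+\tfrac43\sigma+\sqrt{2\sigma\,\rho(h,h^\star)}$. The key structural move is to eliminate $\rho(h,h^\star)$ in favor of $l(h^\star)$ and the excess error itself: the deterministic triangle inequality $\One\{h\neq h^\star\}\leq\One\{h\neq Y\}+\One\{h^\star\neq Y\}$ gives $\rho(h,h^\star)\leq l(h)+l(h^\star)=(l(h)-l(h^\star))+2\nu$. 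Splitting the root once more produces $\sqrt{2\sigma\,(l(h)-l(h^\star))}+2\sqrt{\sigma\nu}$, so that with $a:=l(h)-l(h^\star)\geq0$, $c:=2\sigma$, and $b:=(l(h,S_k)-l(h^\star,S_k))+\tfrac43\sigma+2\sqrt{\sigma\nu}$ the bound becomes the self-referential inequality $a\leq b+\sqrt{ca}$.

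I then close by invoking Proposition~\ref{prop:quad-ineq}, which turns $a\leq b+\sqrt{ca}$ into $a\leq 2b+c$; substituting back gives precisely $l(h)-l(h^\star)\leq 2(l(h,S_k)-l(h^\star,S_k))+\gamma_1\sigma+\gamma_1\sqrt{\sigma\,l(h^\star)}$ with $\gamma_1$ an absolute constant (one may take $\tfrac{14}{3}$ before absorbing the log-constants). I expect the main obstacle to be exactly this last step: one must recognize that, once $\rho(h,h^\star)$ is bounded by the excess error, the quantity $a=l(h)-l(h^\star)$ reappears under the square root on the right-hand side, and that Proposition~\ref{prop:quad-ineq} is the precise device for decoupling it. A secondary but purely mechanical difficulty is tracking the constants and the $\delta/2$ confidence-splitting so that the final logarithms line up with the stated $\log(|\mathcal{H}|/2\delta)$ and $\log(|\mathcal{H}|/\delta)$.
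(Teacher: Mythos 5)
Your proposal is correct and follows essentially the same route as the paper's own proof: establish both theorems' events at confidence $\delta/2$ each, obtain (\ref{eq:cor-gen-diff-rho_S}) by substituting (\ref{eq:thm-gen-rho}) into (\ref{eq:thm-gen-diff-rho}) and splitting the square root, and obtain (\ref{eq:cor-gen-diff-h_star}) by specializing to $(h,h^\star)$, bounding $\rho(h,h^\star)\leq (l(h)-l(h^\star))+2l(h^\star)$ via the triangle inequality, and decoupling the resulting self-referential bound with Proposition~\ref{prop:quad-ineq}. The constant-tracking you defer is handled at the same level of detail in the paper.
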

\begin{proof}
Let event $E$ be the event that (\ref{eq:thm-gen-diff-rho}) and
(\ref{eq:thm-gen-rho}) holds for all $h_{1},h_{2}\in\mathcal{H}$
with confidence $1-\frac{\delta}{2}$ respectively. Assume $E$ happens
(whose probability is at least $1-\delta$).

(\ref{eq:cor-gen-diff-rho_S}) is immediate from (\ref{eq:thm-gen-diff-rho})
and (\ref{eq:thm-gen-rho}).

For the proof of (\ref{eq:cor-gen-diff-h_star}), apply (\ref{eq:thm-gen-diff-rho})
to $h$ and $h^{\star}$, we get 
\[
l(h)-l(h^{\star})\leq l(h,S_{k})-l(h^{\star},S_{k})+2\sup_{x\in\mathcal{X}}\frac{\One\{h(x)\neq h^{\star}(x)\}\frac{2\log\frac{4|\mathcal{H}|}{\delta}}{3}}{m_k Q_{0}(x)+ n_k Q_{k}(x)}+\sqrt{2\sup_{x\in\mathcal{X}}\frac{\One\{h(x)\neq h^{\star}(x)\}\log\frac{4|\mathcal{H}|}{\delta}}{m_k Q_{0}(x)+ n_k Q_{k}(x)}\rho(h,h^{\star})}.
\]

By triangle inequality, $\rho(h,h^{\star})=\Pr_D(h(X)\neq h^{\star}(X))\leq\Pr_D(h(X)\neq Y)+\Pr_D(h^{\star}(X)\neq Y)=l(h)-l(h^{\star})+2l(h^{\star})$.
Therefore, we get
\begin{eqnarray*}
l(h)-l(h^{\star}) & \leq & l(h,S_{k})-l(h^{\star},S_{k})+2\sup_{x\in\mathcal{X}}\frac{\One\{h(x)\neq h^{\star}(x)\}\frac{2\log\frac{4|\mathcal{H}|}{\delta}}{3}}{m_k Q_{0}(x)+ n_k Q_{k}(x)}\\
& & +\sqrt{2\sup_{x\in\mathcal{X}}\frac{\One\{h(x)\neq h^{\star}(x))\}\log\frac{4|\mathcal{H}|}{\delta}}{m_k Q_{0}(x)+ n_k Q_{k}(x)}(l(h)-l(h^{\star})+2l(h^{\star}))}\\
 & \leq & l(h,S_{k})-l(h^{\star},S_{k})+\sqrt{2\sup_{x\in\mathcal{X}}\frac{\One\{h(x)\neq h^{\star}(x)\}\log\frac{4|\mathcal{H}|}{\delta}}{m_k Q_{0}(x)+ n_k Q_{k}(x)}(l(h)-l(h^{\star}))}\\
 &  & +2\sup_{x\in\mathcal{X}}\frac{\One\{h(x)\neq h^{\star}(x)\}\frac{2\log\frac{4|\mathcal{H}|}{\delta}}{3}}{m_k Q_{0}(x)+ n_k Q_{k}(x)}+\sqrt{4\sup_{x\in\mathcal{X}}\frac{\One\{h(x)\neq h^{\star}(x)\}\log\frac{4|\mathcal{H}|}{\delta}}{m_k Q_{0}(x)+ n_k Q_{k}(x)}l(h^{\star})}
\end{eqnarray*}
where the second inequality uses $\sqrt{a+b}\leq \sqrt{a}+\sqrt{b}$ for $a, b\geq 0$.

(\ref{eq:cor-gen-diff-h_star}) follows by applying Proposition~\ref{prop:quad-ineq}
to $l(h)-l(h^{\star})$.
\end{proof}

\section{Technical Lemmas}
For any $0\leq k\leq K$ and $\delta>0$, define event $\mathcal{E}_{k,\delta}$ to be the event that the conclusions of Theorem~\ref{thm:gen} and Theorem~\ref{thm:gen-rho} hold for $k$ with confidence $1-\delta/2$ respectively. We have $\Pr(\mathcal{E}_{k,\delta})\geq 1-\delta$, and that $\mathcal{E}_{k,\delta}$ implies inequalities (\ref{eq:thm-gen-diff-rho}) to (\ref{eq:cor-gen-diff-h_star}).

We first present a lemma which can be used to guarantee that $h^\star$ stays in candidate sets with high probability by induction..

\begin{lem}
\label{lem:h_star_in}For any $k=0,\dots K$, any $\delta>0$. On event $\mathcal{E}_{k,\delta}$, if $h^{\star}\in V_{k}$ then, 
\[
l(h^{\star},\tilde{S}_{k})\leq l(\hat{h}_{k},\tilde{S}_{k})+\gamma_{0}\sigma(k,\delta)+\gamma_{0}\sqrt{\sigma(k,\delta)\rho_{\tilde{S}_{k}}(\hat{h}_{k},h^\star)}.
\]
\end{lem}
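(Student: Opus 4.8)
The plan is to instantiate the concentration bound (\ref{eq:cor-gen-diff-rho_S}) of Corollary~\ref{cor:gen} at the pair $h^\star,\hat h_k$, discard a sign-definite term using optimality of $h^\star$, and then translate the resulting inequality from the (unobserved) true-label estimator on $S_k$ to the observed inferred-label estimator on $\tilde S_k$ that appears in the statement. The only structural facts I need are that both classifiers lie in $V_k$, which holds since $h^\star\in V_k$ by hypothesis and $\hat h_k=\arg\min_{h\in V_k}l(h,\tilde S_k)\in V_k$ by definition.

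First I would apply (\ref{eq:cor-gen-diff-rho_S}), which is valid on $\mathcal{E}_{k,\delta}$, to $h_1=h^\star$ and $h_2=\hat h_k$, and drop the absolute value to obtain
\[
l(h^\star,S_k)-l(\hat h_k,S_k)\le l(h^\star)-l(\hat h_k)+\gamma_0 M+\gamma_0\sqrt{M\,\rho_{S_k}(h^\star,\hat h_k)},
\]
where $M:=\sup_{x}\frac{\One\{h^\star(x)\neq\hat h_k(x)\}\log\frac{|\mathcal{H}|}{2\delta}}{m_kQ_0(x)+n_kQ_k(x)}$. Because $h^\star$ minimizes the population error, $l(h^\star)-l(\hat h_k)\le0$, so this term may simply be deleted.

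Next I would carry out the two transfers to $\tilde S_k$. Since $h^\star,\hat h_k\in V_k$, Lemma~\ref{lem:l-diff-S-S_tilde} gives $l(h^\star,S_k)-l(\hat h_k,S_k)=l(h^\star,\tilde S_k)-l(\hat h_k,\tilde S_k)$, which is exactly the quantity to be bounded. For the coefficient $M$, Lemma~\ref{lem:Q} (again using $h^\star,\hat h_k\in V_k$) yields $\sup_x\frac{\One\{h^\star(x)\neq\hat h_k(x)\}}{m_kQ_0(x)+n_kQ_k(x)}\le\sup_{x\in D_k}\frac{1}{m_kQ_0(x)+n_k}$, and since $\log\frac{|\mathcal{H}|}{2\delta}\le\log\frac{|\mathcal{H}|}{\delta}$ this shows $M\le\sigma(k,\delta)$. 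For the empirical disagreement I would observe that $\rho_S(\cdot,\cdot)$ depends only on the instances and not on the labels, and that $S_k$ and $\tilde S_k$ carry the same instances, so $\rho_{S_k}(h^\star,\hat h_k)=\rho_{\tilde S_k}(h^\star,\hat h_k)$. Substituting both bounds and using monotonicity of the square root, then rearranging, delivers the claim with the same constant $\gamma_0$ as in Corollary~\ref{cor:gen}.

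I do not expect a genuine obstacle here, since the argument is a mechanical assembly of three earlier facts; the one point requiring care is the bookkeeping between the two estimators. The bound from Corollary~\ref{cor:gen} lives naturally on the true-label set $S_k$, which the algorithm never observes, whereas the lemma is phrased on the observed set $\tilde S_k$. The crux is therefore justifying the transfers $l(\cdot,S_k)\to l(\cdot,\tilde S_k)$ (via Lemma~\ref{lem:l-diff-S-S_tilde}) and $\rho_{S_k}\to\rho_{\tilde S_k}$ (via label-independence of $\rho$), both of which hinge on the membership $h^\star,\hat h_k\in V_k$, and on that membership being available precisely because of the standing assumption $h^\star\in V_k$.
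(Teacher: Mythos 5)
Your proposal is correct and follows essentially the same route as the paper's proof: apply inequality (\ref{eq:cor-gen-diff-rho_S}) of Corollary~\ref{cor:gen} to the pair $(h^\star,\hat h_k)$, drop the nonpositive term $l(h^\star)-l(\hat h_k)$, transfer between $S_k$ and $\tilde S_k$ via Lemma~\ref{lem:l-diff-S-S_tilde} and the label-independence of $\rho$, and bound the supremum by $\sigma(k,\delta)$ via Lemma~\ref{lem:Q}. The only difference is the order in which these steps are assembled, and your explicit handling of the $\log\frac{|\mathcal H|}{2\delta}\le\log\frac{|\mathcal H|}{\delta}$ bookkeeping is slightly more careful than the paper's.
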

\begin{proof}
\begin{align*}
 & l(h^{\star},\tilde{S}_{k})-l(\hat{h}_{k},\tilde{S}_{k})\\
= & l(h^{\star},S_{k})-l(\hat{h}_{k},S_{k})\\
\leq & \gamma_{0}\sup_{x}\frac{\One\{h^{\star}(x)\neq\hat{h}_{k}(x)\}\log\frac{|\mathcal{H}|}{\delta}}{m_k Q_{0}(x)+ n_k Q_{k}(x)}+\gamma_{0}\sqrt{\sup_{x}\frac{\One\{h^{\star}(x)\neq\hat{h}_{k}(x)\}\log\frac{|\mathcal{H}|}{\delta}}{m_k Q_{0}(x)+ n_k Q_{k}(x)}\rho_{S_{k}}(\hat{h}_{k},h^\star)}\\
\leq & \gamma_{0}\sigma(k,\delta)+\sqrt{\gamma_{0}\sigma(k,\delta)\rho_{\tilde{S}_{k}}(\hat{h}_{k},h)}.
\end{align*}

The equality follows from Lemma~\ref{lem:l-diff-S-S_tilde}. The
first inequality follows from (\ref{eq:cor-gen-diff-rho_S}) of Corollary~\ref{cor:gen}
and that $l(h^{\star})\leq l(\hat{h}_{k})$. The last inequality follows
from Lemma~\ref{lem:Q} and that $\rho_{\tilde{S}_{k}}(\hat{h}_{k},h^\star) = \rho_{S_{k}}(\hat{h}_{k},h^\star)$.
\end{proof}

Next, we present two lemmas to bound the probability mass of the disagreement region of candidate sets.
\begin{lem}
\label{lem:dis-radius} For any $k=0,\dots,K$, any $\delta>0$, let $V_{k+1}(\delta):=\{h\in V_{k}\mid l(h,\tilde{S}_{k})\leq l(\hat{h}_{k},\tilde{S}_{k})+\gamma_{0}\sigma(k,\delta)+\gamma_{0}\sqrt{\sigma(k,\delta)\rho_{\tilde{S}_{k}}(\hat{h}_{k},h)}\}$. Then there is an absolute constant $\gamma_{2}>1$ such that for any $0,\dots,K$, any $\delta>0$, on event $\mathcal{E}_{k,\delta}$, if $h^{\star}\in V_{k}$, then for all $h\in V_{k+1}(\delta)$, 
\[
l(h)-l(h^{\star})\leq\gamma_{2}\sigma(k,\delta)+\gamma_{2}\sqrt{\sigma(k,\delta)l(h^{\star})}.
\]
\end{lem}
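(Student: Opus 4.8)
The plan is to bound the excess risk $\Delta := l(h)-l(h^{\star})$ of an arbitrary $h\in V_{k+1}(\delta)$ by first converting the population excess risk into a statement about empirical risks via Corollary~\ref{cor:gen}, then invoking the definition of $V_{k+1}(\delta)$ together with the empirical optimality of $\hat{h}_k$, and finally closing a self-referential inequality with Proposition~\ref{prop:quad-ineq}. Throughout I write $\sigma=\sigma(k,\delta)$ and $\nu=l(h^{\star})$, and I work on the event $\mathcal{E}_{k,\delta}$, which the excerpt states implies inequalities (\ref{eq:thm-gen-diff-rho})--(\ref{eq:cor-gen-diff-h_star}).

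First I would apply (\ref{eq:cor-gen-diff-h_star}) to the pair $h,h^{\star}$. Since $h\in V_{k+1}(\delta)\subseteq V_k$ and $h^{\star}\in V_k$ by hypothesis, Lemma~\ref{lem:Q} bounds $\sup_{x}\frac{\One\{h(x)\neq h^{\star}(x)\}\log(|\mathcal{H}|/\delta)}{m_kQ_0(x)+n_kQ_k(x)}$ by $\sigma$, so (\ref{eq:cor-gen-diff-h_star}) becomes $\Delta\le 2(l(h,S_k)-l(h^{\star},S_k))+\gamma_1\sigma+\gamma_1\sqrt{\sigma\nu}$. Lemma~\ref{lem:l-diff-S-S_tilde} replaces the estimated-risk gap on $S_k$ by the one on $\tilde{S}_k$. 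Using $h\in V_{k+1}(\delta)$ and that $\hat{h}_k$ minimizes $l(\cdot,\tilde{S}_k)$ over $V_k\ni h^{\star}$ (hence $l(\hat{h}_k,\tilde{S}_k)\le l(h^{\star},\tilde{S}_k)$), I obtain $l(h,\tilde{S}_k)-l(h^{\star},\tilde{S}_k)\le\gamma_0\sigma+\gamma_0\sqrt{\sigma\,\rho_{\tilde{S}_k}(\hat{h}_k,h)}$. Combining gives $\Delta\le(2\gamma_0+\gamma_1)\sigma+\gamma_1\sqrt{\sigma\nu}+2\gamma_0\sqrt{\sigma\,\rho_{\tilde{S}_k}(\hat{h}_k,h)}$.

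The main work is to control the empirical disagreement $\rho_{\tilde{S}_k}(\hat{h}_k,h)$, which equals $\rho_{S_k}(\hat{h}_k,h)$ because empirical disagreement depends only on instances, not on the (inferred) labels. I would pass to the population disagreement via (\ref{eq:thm-gen-rho_S}), again using Lemma~\ref{lem:Q} to turn its sup term into $O(\sigma)$, yielding $\rho_{S_k}(\hat{h}_k,h)\le 2\rho(\hat{h}_k,h)+O(\sigma)$. The triangle inequality $\rho(\hat{h}_k,h)\le\rho(\hat{h}_k,h^{\star})+\rho(h^{\star},h)$ together with the elementary bound $\rho(g,h^{\star})\le (l(g)-l(h^{\star}))+2\nu$ reduces everything to excess risks: for $h$ this contributes $\Delta+2\nu$, while for $\hat{h}_k$ its empirical optimality forces $l(\hat{h}_k,S_k)-l(h^{\star},S_k)\le 0$, so (\ref{eq:cor-gen-diff-h_star}) applied to $\hat{h}_k,h^{\star}$ gives $l(\hat{h}_k)-l(h^{\star})\le\gamma_1\sigma+\gamma_1\sqrt{\sigma\nu}$. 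Altogether $\rho_{S_k}(\hat{h}_k,h)\le 2\Delta+c(\nu+\sigma+\sqrt{\sigma\nu})$ for an absolute constant $c$.

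Finally I would substitute this back and use $\sqrt{a+b}\le\sqrt{a}+\sqrt{b}$ to split $\sqrt{\sigma\,\rho_{S_k}(\hat{h}_k,h)}$ into a term $\sqrt{2\sigma\Delta}$ plus terms in $\sqrt{\sigma\nu}$, $\sigma$, and a stray $\sigma^{3/4}\nu^{1/4}$; the last is absorbed via $\sigma^{3/4}\nu^{1/4}\le\sigma+\sqrt{\sigma\nu}$ (checked by the two cases $\sigma\ge\nu$ and $\sigma<\nu$). This leaves $\Delta\le C_1\sigma+C_2\sqrt{\sigma\nu}+C_3\sqrt{\sigma\Delta}$ for absolute constants, and Proposition~\ref{prop:quad-ineq} with $a=\Delta$, $c=C_3^2\sigma$, $b=C_1\sigma+C_2\sqrt{\sigma\nu}$ gives $\Delta\le(2C_1+C_3^2)\sigma+2C_2\sqrt{\sigma\nu}$; taking $\gamma_2=\max(2C_1+C_3^2,\,2C_2,\,2)$ finishes the proof. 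The main obstacle is the self-reference: $\rho(\hat{h}_k,h)$ depends on the very quantity $\Delta$ being bounded, so the chain of triangle inequalities and square-root splittings must be arranged so that the $\Delta$-dependence surfaces only inside a single $\sqrt{\sigma\Delta}$ term that Proposition~\ref{prop:quad-ineq} can absorb. The remaining bookkeeping hazards are verifying that each classifier invoked lies in $V_k$ (to license Lemmas~\ref{lem:Q} and \ref{lem:l-diff-S-S_tilde}) and that the various $\log$-factor discrepancies between the cited bounds fold harmlessly into the absolute constants.
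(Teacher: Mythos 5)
Your proposal follows essentially the same route as the paper's proof: apply (\ref{eq:cor-gen-diff-h_star}) with Lemma~\ref{lem:Q}, pass to $\tilde{S}_k$ via Lemma~\ref{lem:l-diff-S-S_tilde}, invoke the definition of $V_{k+1}(\delta)$ and the empirical optimality of $\hat{h}_k$, control $\rho_{\tilde{S}_k}(\hat{h}_k,h)$ by the triangle inequality plus Theorem~\ref{thm:gen-rho} so that the only $\Delta$-dependence is a single $\sqrt{\sigma\Delta}$ term, and close with Proposition~\ref{prop:quad-ineq}. The only (immaterial) differences are that you apply the triangle inequality at the population rather than the empirical level and absorb the $\sigma^{3/4}\nu^{1/4}$ cross term by a direct case split where the paper uses $2\sqrt{ab}\leq a+b$ earlier in the chain.
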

\begin{proof}
For any $h\in V_{k+1}(\delta)$, we have 
\begin{align}
\MoveEqLeft l(h)-l(h^{\star})\nonumber \\
\leq & 2(l(h,S_{k})-l(h^{\star},S_{k}))+\gamma_{1}\sigma(k,\frac{\delta}{2})+\gamma_{1}\sqrt{\sigma(k,\frac{\delta}{2})l(h^{\star})}\nonumber \\
= & 2(l(h,\tilde{S}_{k})-l(h^{\star},\tilde{S}_{k}))+\gamma_{1}\sigma(k,\frac{\delta}{2})+\gamma_{1}\sqrt{\sigma(k,\frac{\delta}{2})l(h^{\star})}\nonumber \\
= & 2(l(h,\tilde{S}_{k})-l(\hat{h}_{k},\tilde{S}_{k})+l(\hat{h}_{k},\tilde{S}_{k})-l(h^{\star},\tilde{S}_{k}))+\gamma_{1}\sigma(k,\frac{\delta}{2})+\gamma_{1}\sqrt{\sigma(k,\frac{\delta}{2})l(h^{\star})}\nonumber \\
\leq & 2(l(h,\tilde{S}_{k})-l(\hat{h}_{k},\tilde{S}_{k}))+\gamma_{1}\sigma(k,\frac{\delta}{2})+\gamma_{1}\sqrt{\sigma(k,\frac{\delta}{2})l(h^{\star})}\nonumber \\
\leq & (2\gamma_{0}+\gamma_{1})\sigma(k,\frac{\delta}{2})+2\gamma_{0}\sqrt{\sigma(k,\frac{\delta}{2})\rho_{\tilde{S}_{k}}(h,\hat{h}_{k})}+\gamma_{1}\sqrt{\sigma(k,\frac{\delta}{2})l(h^{\star})}\nonumber \\
\leq & (2\gamma_{0}+\gamma_{1})\sigma(k,\frac{\delta}{2})+2\gamma_{0}\sqrt{\sigma(k,\frac{\delta}{2})(\rho_{S_{k}}(h,h^{\star})+\rho_{S_{k}}(\hat{h_{k}},h^{\star}))}+\gamma_{1}\sqrt{\sigma(k,\frac{\delta}{2})l(h^{\star})} \label{eq:lem-dis-radius-1}
\end{align}

where the first inequality follows from (\ref{eq:cor-gen-diff-h_star})
of Corollary~\ref{cor:gen} and Lemma~\ref{lem:Q}, the first equality
follows from Lemma~\ref{lem:l-diff-S-S_tilde}, the third inequality
follows from the definition of $V_{k}(\delta)$, and the last inequality follows from $\rho_{\tilde{S}_{k}}(h,\hat{h}_{k})=\rho_{S_{k}}(h,\hat{h}_{k})\leq\rho_{S_{k}}(h,h^{\star})+\rho_{S_{k}}(\hat{h_{k}},h^{\star})$.

As for $\rho_{S_{k}}(h,h^{\star})$, we have $\rho_{S_{k}}(h,h^{\star})\leq2\rho(h,h^{\star})+\frac{16}{3}\sigma(k,\frac{\delta}{8})\leq2(l(h)-l(h^{\star}))+4l(h^{\star})+\frac{16}{3}\sigma(k,\frac{\delta}{8})$
where the first inequality follows from (\ref{eq:thm-gen-rho_S}) of
Theorem~\ref{thm:gen-rho} and Lemma~\ref{lem:Q}, and the second
inequality follows from the triangle inequality. 

For $\rho_{S_{k}}(\hat{h}_{k},h^{\star})$, we have 
\begin{eqnarray*}
\rho_{S_{k}}(\hat{h}_{k},h^{\star}) & \leq & 2\rho(\hat{h}_{k},h^{\star})+\frac{16}{3}\sigma(k,\frac{\delta}{8})\\
 & \leq & 2(l(\hat{h}_{k})-l(h^{\star})+2l(h^{\star}))+\frac{16}{3}\sigma(k,\frac{\delta}{8})\\
 & \leq & 2(2(l(\hat{h}_{k},S_{k})-l(h^{\star},S_{k}))+\gamma_{1}\sigma(k,\frac{\delta}{2})+\gamma_{1}\sqrt{\sigma(k,\frac{\delta}{2})l(h^{\star})}+2l(h^{\star}))+\frac{16}{3}\sigma(k,\frac{\delta}{8})\\
 & \leq & (2\gamma_{1}+\frac{16}{3})\sigma(k,\frac{\delta}{8})+2\gamma_{1}\sqrt{\sigma(k,\frac{\delta}{2})l(h^{\star})}+4l(h^{\star})\\
 & \leq & (4+\gamma_{1})l(h^{\star})+(3\gamma_{1}+\frac{16}{3})\sigma(k,\frac{\delta}{8})
\end{eqnarray*}
 where the first inequality follows from (\ref{eq:thm-gen-rho_S}) of
Theorem~\ref{thm:gen-rho} and Lemma~\ref{lem:Q}, the second follows from the triangle inequality,
the third follows from (\ref{eq:cor-gen-diff-h_star}) of Theorem~\ref{cor:gen}
and Lemma~\ref{lem:Q}, the fourth follows from the definition of $\hat{h}_{k}$,
the last follows from the fact that $2\sqrt{ab}\leq a+b$ for $a,b\geq0$.

Continuing (\ref{eq:lem-dis-radius-1}) and using the fact that $\sqrt{a+b}\leq\sqrt{a}+\sqrt{b}$
for $a,b\geq0$, we have:
\[
l(h)-l(h^{\star})\leq(2\gamma_{0}+\gamma_{1}+2\gamma_{0}\sqrt{3\gamma_{1}+\frac{32}{3}})\sigma(k,\frac{\delta}{8})+(2\gamma_{0}\sqrt{8+\gamma_{1}}+\gamma_{1})\sqrt{\sigma(k,\frac{\delta}{8})l(h^{\star})}+2\sqrt{2}\gamma_{0}\sqrt{\sigma(k,\frac{\delta}{8})(l(h)-l(h^{\star}))}.
\]

The result follows by applying Proposition~\ref{prop:quad-ineq}
to $l(h)-l(h^{\star})$.
\end{proof}
\begin{lem}
\label{lem:Dk-DISk}On event $\bigcap_{k=0}^{K-1}\mathcal{E}_{k,\delta_k/2}$, for any $k=0,\dots K$, $D_{k}\subseteq\text{DIS}_{k}$.
\end{lem}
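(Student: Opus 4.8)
The plan is to prove the statement by induction on $k$, strengthening the inductive hypothesis to also assert that $h^{\star}\in V_{k}$, since both auxiliary lemmas require $h^{\star}$ to lie in the current candidate set. The base case $k=0$ is immediate: $V_{0}=\mathcal{H}\ni h^{\star}$, and $D_{0}=\mathcal{X}=\text{DIS}_{0}$, so the containment holds trivially.

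For the inductive step, fix $0\leq k\leq K-1$ and assume that on the event $\bigcap_{j=0}^{K-1}\mathcal{E}_{j,\delta_{j}/2}$ we have both $h^{\star}\in V_{k}$ and $D_{k}\subseteq\text{DIS}_{k}$. First I would re-establish $h^{\star}\in V_{k+1}$. Observe that the candidate set $V_{k+1}$ defined in Algorithm~\ref{alg:main} (whose threshold $\Delta_{k}$ is evaluated at confidence $\delta_{k}/2$) is exactly the set $V_{k+1}(\delta_{k}/2)$ appearing in Lemma~\ref{lem:dis-radius}, up to the symmetry of $\rho_{\tilde{S}_{k}}$ in its two arguments. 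Invoking Lemma~\ref{lem:h_star_in} with $\delta=\delta_{k}/2$ on the event $\mathcal{E}_{k,\delta_{k}/2}$ shows that $h^{\star}$ satisfies the defining inequality $l(h^{\star},\tilde{S}_{k})\leq l(\hat{h}_{k},\tilde{S}_{k})+\Delta_{k}(h^{\star},\hat{h}_{k})$, and since $h^{\star}\in V_{k}$ by hypothesis, we conclude $h^{\star}\in V_{k+1}$.

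Next I would control the radius of $V_{k+1}$. Applying Lemma~\ref{lem:dis-radius} with $\delta=\delta_{k}/2$ gives, for every $h\in V_{k+1}$, the excess-error bound $l(h)-l(h^{\star})\leq\gamma_{2}\sigma(k,\delta_{k}/2)+\gamma_{2}\sqrt{\sigma(k,\delta_{k}/2)l(h^{\star})}$. Converting excess error into disagreement mass via the triangle inequality $\rho(h,h^{\star})\leq l(h)+l(h^{\star})=(l(h)-l(h^{\star}))+2\nu$ yields $\rho(h,h^{\star})\leq 2\nu+\gamma_{2}\sigma(k,\delta_{k}/2)+\gamma_{2}\sqrt{\sigma(k,\delta_{k}/2)l(h^{\star})}$. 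It then remains to show that this tail is at most $\epsilon_{k+1}$, which reduces to the single comparison $\sigma(k,\delta_{k}/2)\leq\zeta_{k+1}$. This is where the inductive hypothesis enters: writing $\sigma(k,\delta_{k}/2)=\sup_{x\in D_{k}}\frac{\log(2|\mathcal{H}|/\delta_{k})}{m_{k}Q_{0}(x)+n_{k}}$ and $\zeta_{k+1}=\sup_{x\in\text{DIS}_{k}}\frac{\log(2|\mathcal{H}|/\delta_{k+1})}{m_{k}Q_{0}(x)+n_{k}}$, the containment $D_{k}\subseteq\text{DIS}_{k}$ enlarges the domain of the supremum while $\delta_{k+1}\leq\delta_{k}$ enlarges the numerator, so both discrepancies point in the favorable direction and $\sigma(k,\delta_{k}/2)\leq\zeta_{k+1}$. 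Since $s\mapsto\gamma_{2}s+\gamma_{2}\sqrt{s\,l(h^{\star})}$ is nondecreasing on $s\geq0$, the tail is at most $\gamma_{2}\zeta_{k+1}+\gamma_{2}\sqrt{\zeta_{k+1}l(h^{\star})}=\epsilon_{k+1}$, giving $\rho(h,h^{\star})\leq 2\nu+\epsilon_{k+1}$ for all $h\in V_{k+1}$. Hence $V_{k+1}\subseteq B(h^{\star},2\nu+\epsilon_{k+1})$, and taking disagreement regions (which is monotone under set inclusion) gives $D_{k+1}=\text{DIS}(V_{k+1})\subseteq\text{DIS}(B(h^{\star},2\nu+\epsilon_{k+1}))=\text{DIS}_{k+1}$, completing the induction.

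I expect the only genuine obstacle to be the bookkeeping in the step $\sigma(k,\delta_{k}/2)\leq\zeta_{k+1}$: one must keep straight that the two quantities are indexed by different confidence parameters ($\delta_{k}$ versus $\delta_{k+1}$) and different regions ($D_{k}$ versus $\text{DIS}_{k}$), and verify that both mismatches inflate $\zeta_{k+1}$ rather than $\sigma(k,\delta_{k}/2)$. Everything else is a direct chaining of Lemma~\ref{lem:h_star_in}, Lemma~\ref{lem:dis-radius}, and the triangle inequality, with the monotonicity of $\text{DIS}(\cdot)$ and of $B(h^{\star},\cdot)$ used only at the final step.
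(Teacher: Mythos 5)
Your proposal is correct and follows essentially the same route as the paper's proof: induction maintaining $h^{\star}\in V_{k}$ via Lemma~\ref{lem:h_star_in}, the radius bound from Lemma~\ref{lem:dis-radius}, the triangle inequality to pass from excess error to disagreement mass, and the comparison $\sigma(k,\delta_{k}/2)\leq\zeta_{k+1}$ using $D_{k}\subseteq\text{DIS}_{k}$. Your explicit check that both the domain enlargement ($D_{k}$ to $\text{DIS}_{k}$) and the confidence change ($\delta_{k}$ to $\delta_{k+1}$) work in the favorable direction is a detail the paper's proof passes over silently, and it is handled correctly.
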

\begin{proof}
Recall that $\delta_{k}=\frac{\delta}{(k+1)(k+2)}$. On event $\bigcap_{k=0}^{K-1}\mathcal{E}_{k,\delta_k/2}$, $h^\star \in V_k$ for all $0\leq k\leq K$ by Lemma~\ref{lem:h_star_in} and induction.

The $k=0$ case is obvious since $D_0=\text{DIS}_0=\mathcal{X}$. Now, suppose $0\leq k<K$, and $D_k \subseteq \text{DIS}_k$. We have
\begin{eqnarray*}
D_{k+1} & \subseteq & \text{DIS}\left(\left\{h:l(h)\leq\nu+\gamma_{2}\left(\sigma(k,\delta_{k}/2)+\sqrt{\sigma(k,\delta_{k}/2)\nu}\right)\right\}\right)\\
 & \subseteq & \text{DIS}\left(B\left(h^{\star},2\nu+\gamma_{2}\left(\sigma(k,\delta_{k}/2)+\sqrt{\sigma(k,\delta_{k}/2)\nu}\right)\right)\right)
\end{eqnarray*}
where the first line follows from Lemma~\ref{lem:dis-radius} and the
definition of $D_{k}$, and the second line follows from triangle inequality
that $\Pr(h(X)\neq h^{\star}(X))\leq l(h)+l(h^{\star})$ (recall $\nu=l(h^{\star})$).

To prove $D_{k+1}\subseteq \text{DIS}_{k+1}$ it suffices to show $\gamma_{2}\left(\sigma(k,\delta_{k}/2)+\sqrt{\sigma(k,\delta_{k}/2)\nu}\right)\leq\epsilon_{k+1}$.

Note that $\sigma(k,\delta_{k}/2)=\sup_{x\in D_{k}}\frac{\log(2|\mathcal{H}|/\delta_{k})}{m_k Q_{0}(x)+n_k}\leq\sup_{x\in\text{DIS}_{k}}\frac{\log(2|\mathcal{H}|/\delta_{k})}{m_k Q_{0}(x)+n_k}$ since $D_{k}\subseteq \text{DIS}_{k}$. Consequently, $\gamma_2\left(\sigma(k,\delta_{k}/2)+\sqrt{\sigma(k,\delta_{k}/2)\nu}\right)\leq\epsilon_{k+1}$.
\end{proof}

\section{Proof of Consistency}
\begin{proof}
(of Theorem~\ref{thm:Convergence}) Define event
$\mathcal{E}^{(0)}:=\bigcap_{k=0}^{K}\mathcal{E}_{k,\delta_k/2}$. By a union bound, $\Pr(\mathcal{E}^{(0)})\geq 1-\delta$. On event $\mathcal{E}^{(0)}$, by induction and Lemma~\ref{lem:h_star_in}, for all $k=0,\dots,K$, $h^{\star}\in V_{k}$. Observe that $\hat{h}=\hat{h}_K\in V_{K+1}(\delta_{K}/2)$. Applying Lemma~\ref{lem:dis-radius} to $\hat{h}$, we have
\[
l(\hat{h})\leq l(h^{\star})+\gamma_2\left(\sup_{x\in D_K}\frac{\log(2|\mathcal{H}|/\delta_K)}{m_K Q_{0}(x)+ n_K}+\sqrt{\sup_{x\in D_K}\frac{\log(2|\mathcal{H}|/\delta_K)}{m_K Q_{0}(x)+ n_K}l(h^{\star})}\right).
\]
The result follows by noting
that $\sup_{x\in\mathcal{X}}\frac{\One\{x\in D_{K}\}}{m_K Q_{0}(x)+n_K}\leq\sup_{x\in\mathcal{X}}\frac{\One\{x\in\text{DIS}_{K}\}}{m_K Q_{0}(x)+n_K}$ by Lemma~\ref{lem:Dk-DISk}.
\end{proof}

\section{Proof of Label Complexity}
\begin{proof}
(of Theorem~\ref{thm:Label-Complexity}) Recall that $\xi_{k}=\inf_{x\in D_{k}}Q_{0}(x)$ and $\zeta = \sup_{x\in\text{DIS}_1}\frac1{\alpha Q_0(x)+1}$. 

Define event $\mathcal{E}^{(0)}:=\bigcap_{k=0}^{K}\mathcal{E}_{k,\delta_k/2}$. On this event, by induction and Lemma~\ref{lem:h_star_in}, for all $k=0,\dots,K$, $h^{\star}\in V_{k}$, and consequently by Lemma~\ref{lem:Dk-DISk}, $D_k\subseteq\text{DIS}_k$.

For any $k=0,\dots K-1$, let the number of label queries at iteration
$k$ to be $U_k:=\sum_{t=n_{0}+\cdots+n_{k}+1}^{n_{0}+\cdots+n_{k+1}}Z_{t}\One\{X_{t}\in D_{k+1}\}$. 
\begin{eqnarray*}
Z_{t}\One\{X_{t}\in D_{k+1}\} & = & \One\{X_{t}\in D_{k+1}\land Q_0(X_t)\leq \inf_{x\in D_{k+1}}Q_0(x)+\frac{1}{\alpha}\} \\
 & \leq & \One\{X_t \in S(D_{k+1},\alpha)\} \\
 & \leq & \One\{X_t \in S(\text{DIS}_{k+1},\alpha)\}.
\end{eqnarray*}

Thus, $U_k\leq \sum_{t=n_{0}+\cdots+n_{k}+1}^{n_{0}+\cdots+n_{k+1}}\One\{X_t \in S(\text{DIS}_{k+1},\alpha)\}$, where the RHS is a sum of i.i.d. Bernoulli($\Pr(S(\text{DIS}_{k+1},\alpha))$) random variables, so a Bernstein
inequality implies that on an event $\mathcal{E}^{(1,k)}$ of probability
at least $1-\delta_{k}/2$, $\sum_{t=n_{0}+\cdots+n_{k}+1}^{n_{0}+\cdots+n_{k+1}}\One\{X_t \in S(\text{DIS}_{k+1},\alpha)\}\leq2n_{k+1}\Pr(S(\text{DIS}_{k+1},\alpha))+2\log\frac{4}{\delta_{k}}$.

Therefore, it suffices to show that on event $\mathcal{E}^{(2)} := \cap_{k=0}^{K}(\mathcal{E}^{(1,k)}\cap \mathcal{E}_{k,\delta_k/2})$, for some absolute constant $c_1$,
\[
\sum_{k=0}^{K-1}n_{k+1}\Pr(S(\text{DIS}_{k+1},\alpha)) \leq c_1\tilde{\theta}(2\nu+\epsilon_K,\alpha)(n\nu+\zeta\log n\log\frac{|\mathcal{H}|\log n}{\delta}+\log n\sqrt{n\nu\zeta\log\frac{|\mathcal{H}|\log n}{\delta}}).
\]

Now, on event $\mathcal{E}^{(2)}$, for any $k<K$, $\Pr(S(\text{DIS}_{k+1},\alpha))= \Pr(S(\text{DIS}(B(h^\star, 2\nu+\epsilon_{k+1})), \alpha))\leq (2\nu+\epsilon_{k+1})\tilde{\theta}(2\nu+\epsilon_{k+1},\alpha)$ where the last inequality follows from Lemma~\ref{lem:dis-coefficient}.

Therefore, 
\begin{align*}
\MoveEqLeft \sum_{k=0}^{K-1}n_{k+1}\Pr(S(\text{DIS}_{k+1},\alpha))\\
\leq & n_1+\sum_{k=1}^{K-1}n_{k+1}(2\nu+\epsilon_{k+1})\tilde{\theta}(2\nu+\epsilon_{k+1},\alpha)\\
\leq & 1+\tilde{\theta}(2\nu+\epsilon_K,\alpha)(2n\nu+\sum_{k=1}^{K-1}n_{k+1}\epsilon_{k+1})\\
\leq & 1+\tilde{\theta}(2\nu+\epsilon_K,\alpha)\left(2n\nu+2\gamma_{2}\sum_{k=1}^{K-1}(\sup_{x\in\text{DIS}_{1}}\frac{\log\frac{|\mathcal{H}|}{\delta_k/2}}{(\alpha Q_{0}(x)+1)}+\sqrt{n_k\nu\sup_{x\in\text{DIS}_{1}}\frac{\log\frac{|\mathcal{H}|}{\delta_k/2}}{(\alpha Q_{0}(x)+1)}})\right)\\
\leq & 1+\tilde{\theta}(2\nu+\epsilon_K,\alpha)(2n\nu+2\gamma_2\zeta\log n\log\frac{|\mathcal{H}|(\log n)^2}{\delta}+2\gamma_2\log n\sqrt{n\nu\zeta\log\frac{|\mathcal{H}|(\log n)^2}{\delta}}).\\\end{align*}
\end{proof}

\section{Experiment Details}
\subsection{Implementation}\label{subsec:appendix-implement}
All algorithms considered require empirical risk minimization. Instead of optimizing 0-1 loss which is known to be computationally hard, we approximate it by optimizing a squared loss. We use the online gradient descent method in \cite{KL11} for optimizing importance weighted loss functions.

For \idbal, recall that in Algorithm~\ref{alg:main}, we need to find the empirical risk minimizer $\hat{h}_k \gets \arg\min_{h\in V_k} l(h, \tilde{S}_k)$, update the candidate set $V_{k+1} \gets \{ h\in V_k \mid l(h,\tilde{S}_k) \leq l(\hat{h}_k,\tilde{S}_k)+\Delta_k(h,\hat{h}_k)\}$, and check whether $x\in \text{DIS}(V_{k+1})$.

In our experiment, we approximately implement this following Vowpal Wabbit \cite{vw}. More specifically,
\begin{enumerate}
\item Instead of optimizing 0-1 loss which is known to be computationally hard, we use a surrogate loss $l(y,y')=(y-y')^2$.
\item We do not explicitly maintain the candidate set $V_{k+1}$.

\item \label{item:eta} To solve the optimization problem $\min_{h\in V_k} l(h, \tilde{S}_k)=\sum_{(X,\tilde{Y},Z)\in\tilde{S}_k}\frac{\One\{h(X)\neq \tilde{Y}\}Z}{m_k Q_0(X)+n_k Q_k(X)}$, we ignore the constraint $h\in V_k$, and use online gradient descent with stepsize $\sqrt{\frac{\eta}{t+\eta}}$ where $\eta$ is a parameter. The start point for gradient descent is set as $\hat{h}_{k-1}$ the ERM in the last iteration, and the step index $t$ is shared across all iterations (i.e. we do not reset $t$ to 1 in each iteration).

\item \label{item:C} To approximately check whether $x\in \text{DIS}(V_{k+1})$, when the hypothesis space $\mathcal{H}$ is linear classifiers, let $w_k$ be the normal vector for current ERM $\hat{h}_k$, and $a$ be current stepsize. We claim $x\in \text{DIS}(V_{k+1})$ if $\frac{|2w_k^\top x|}{a x^\top x} \leq \sqrt{\frac{C\cdot l(\hat{h}_k,\tilde{S}_k)}{m_k\xi_k+n_k}} + \frac{C\log(m_k+n_k)}{m_k\xi_k+n_k}$ (recall $|\tilde{S}_k|=m_k+n_k$ and $\xi_k = \inf_{x\in\text{DIS}(V_k)}Q_0(x)$) where $C$ is a parameter that captures the model capacity. See \citep{KL11} for the rationale of this approximate disagreement test.

\item $\xi_k = \inf_{x\in\text{DIS}(V_k)}Q_0(x)$ can be approximately estimated with a set of unlabeled samples. This estimate is always an upper bound of the true value of $\xi_k$.
\end{enumerate} 

\dbalw and \dbalwm can be implemented similarly.

\section{Additional Experiment Results}
In this section, we present a table of dataset information and plots of test error curves for each algorithm under each policy and dataset.

We remark that the high error bars in test error curves are largely due to the inherent randomness of training sets since in practice active learning is sensitive to the order of training examples. Similar phenomenon can be observed in previous work \cite{HAHLS15}.

\begin{table}
\centering
\caption{Dataset information.}\label{tab:dataset-info}
\begin{tabular}{lll}
\toprule
Dataset & \# of examples & \# of features \\
\midrule
synthetic & 6000 & 30 \\
letter (U vs P) & 1616 & 16 \\
skin & 245057 & 3 \\
magic & 19020 & 10 \\
covtype & 581012 & 54 \\
mushrooms & 8124 & 112 \\
phishing & 11055 & 68 \\
splice & 3175 & 60 \\
svmguide1 & 4000 & 4 \\
a5a & 6414 & 123 \\
cod-rna & 59535  & 8 \\
german & 1000 & 24 \\
\bottomrule
\end{tabular}
\end{table}

\begin{figure*}
\centering
\begin{subfigure}[b]{0.22\textwidth}
\includegraphics[width=\textwidth]{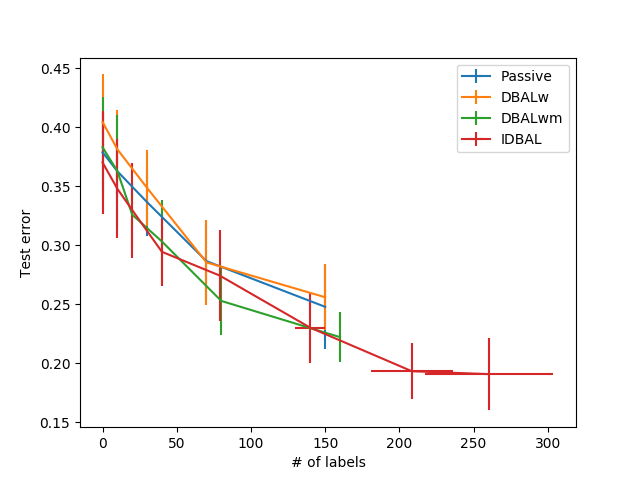}
\caption*{synthetic}
\end{subfigure}
\begin{subfigure}[b]{0.22\textwidth}
\includegraphics[width=\textwidth]{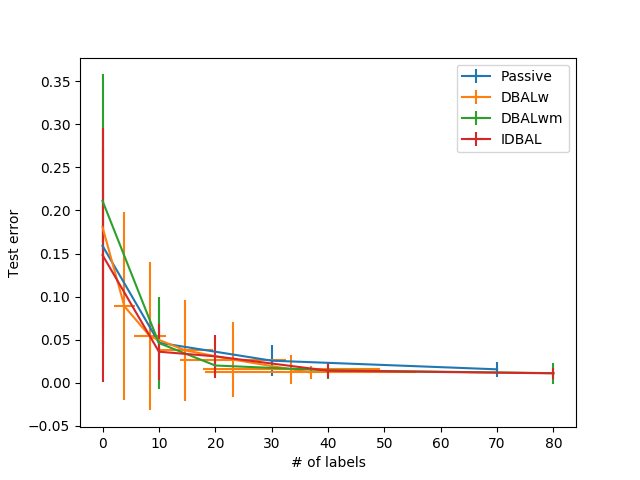}
\caption*{letter}
\end{subfigure}
\begin{subfigure}[b]{0.22\textwidth}
\includegraphics[width=\textwidth]{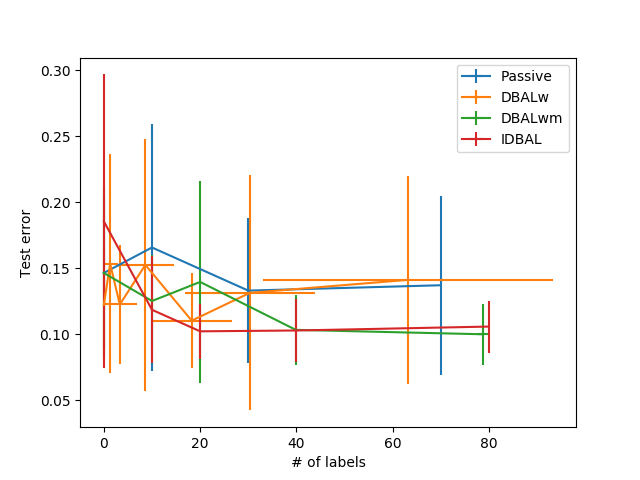}
\caption*{skin}
\end{subfigure}
\begin{subfigure}[b]{0.22\textwidth}
\includegraphics[width=\textwidth]{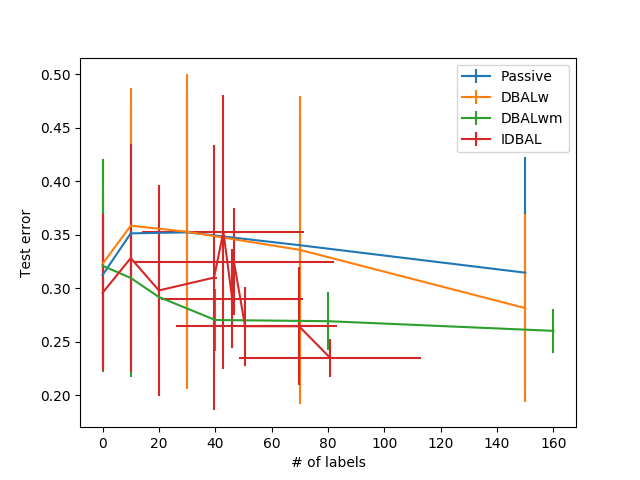}
\caption*{magic}
\end{subfigure}
\begin{subfigure}[b]{0.22\textwidth}
\includegraphics[width=\textwidth]{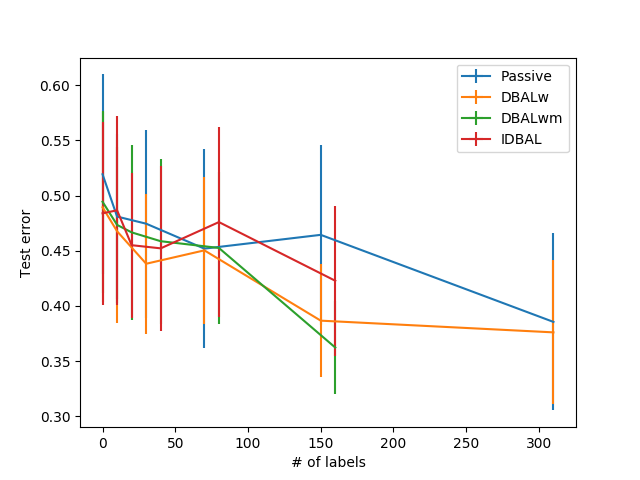}
\caption*{covtype}
\end{subfigure}
\begin{subfigure}[b]{0.22\textwidth}
\includegraphics[width=\textwidth]{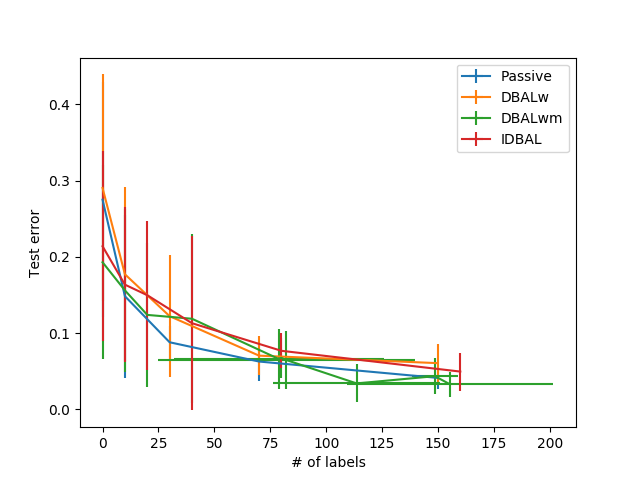}
\caption*{mushrooms}
\end{subfigure}
\begin{subfigure}[b]{0.22\textwidth}
\includegraphics[width=\textwidth]{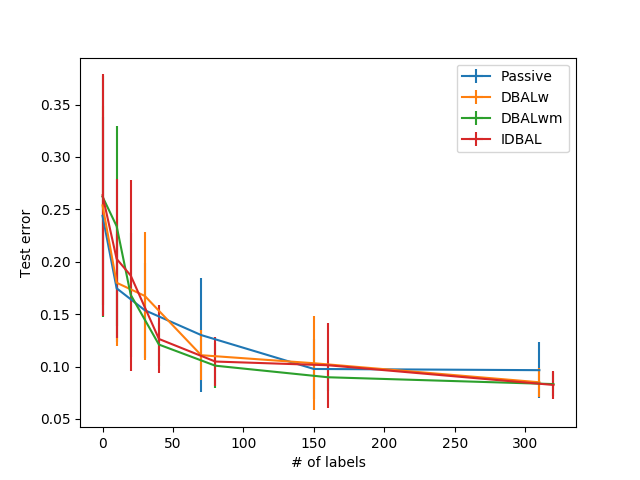}
\caption*{phishing}
\end{subfigure}
\begin{subfigure}[b]{0.22\textwidth}
\includegraphics[width=\textwidth]{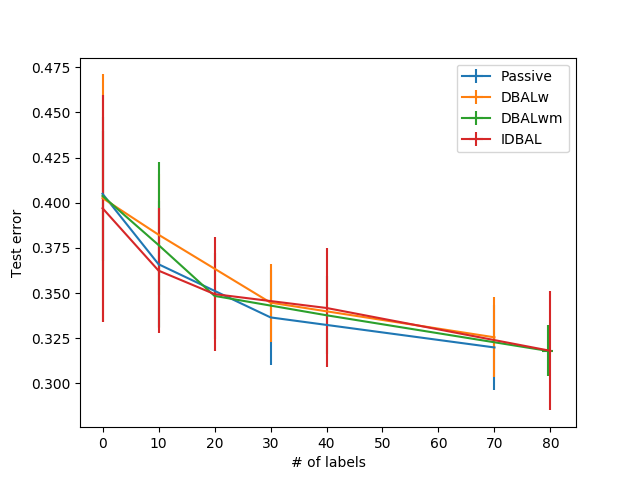}
\caption*{splice}
\end{subfigure}
\begin{subfigure}[b]{0.22\textwidth}
\includegraphics[width=\textwidth]{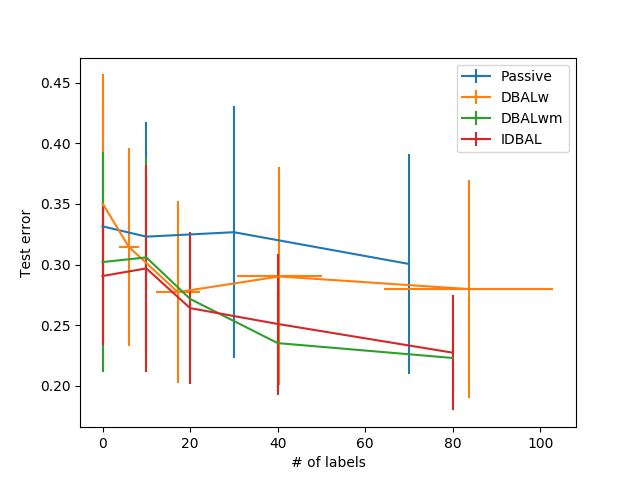}
\caption*{svmguide1}
\end{subfigure}
\begin{subfigure}[b]{0.22\textwidth}
\includegraphics[width=\textwidth]{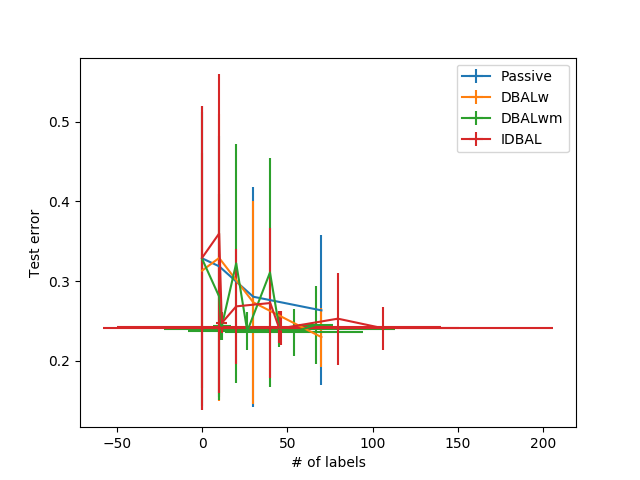}
\caption*{a5a}
\end{subfigure}
\begin{subfigure}[b]{0.22\textwidth}
\includegraphics[width=\textwidth]{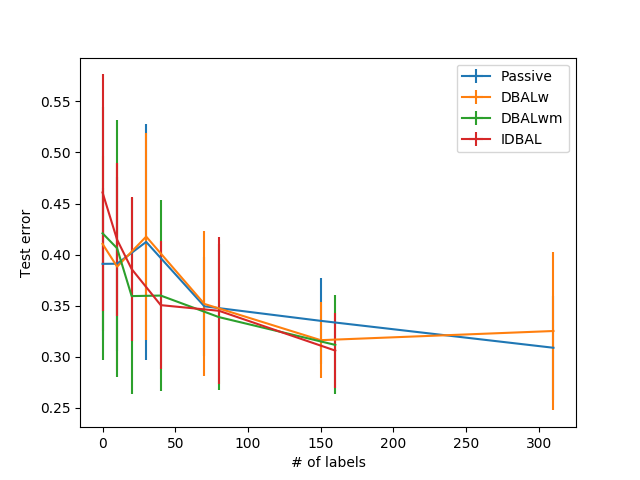}
\caption*{cod-rna}
\end{subfigure}
\begin{subfigure}[b]{0.22\textwidth}
\includegraphics[width=\textwidth]{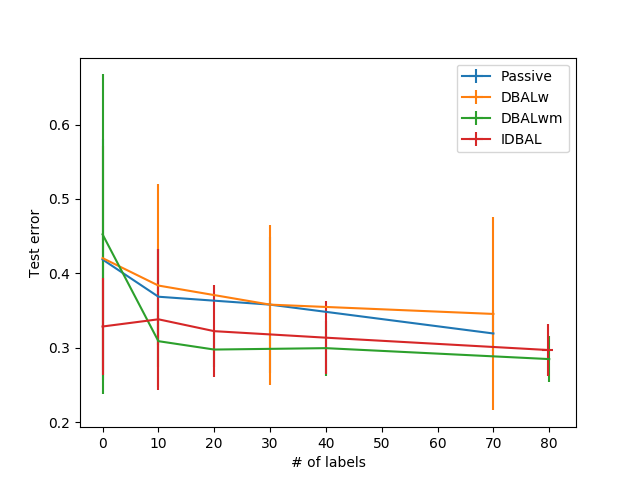}
\caption*{german}
\end{subfigure}
\caption{Test error vs. number of labels under the Identical policy}
\end{figure*}
\begin{figure*}
\centering
\begin{subfigure}[b]{0.22\textwidth}
\includegraphics[width=\textwidth]{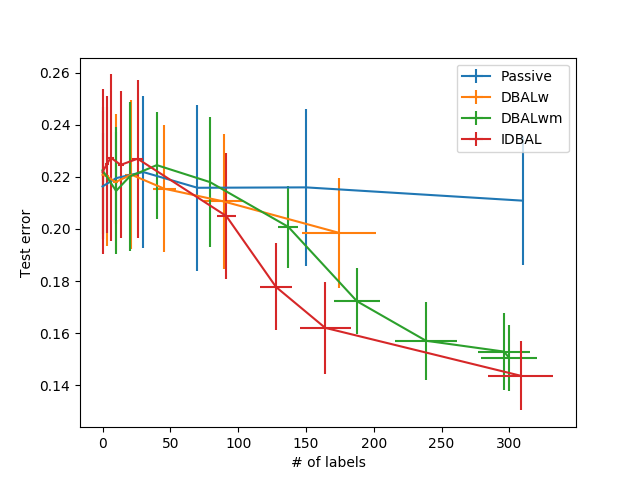}
\caption*{synthetic}
\end{subfigure}
\begin{subfigure}[b]{0.22\textwidth}
\includegraphics[width=\textwidth]{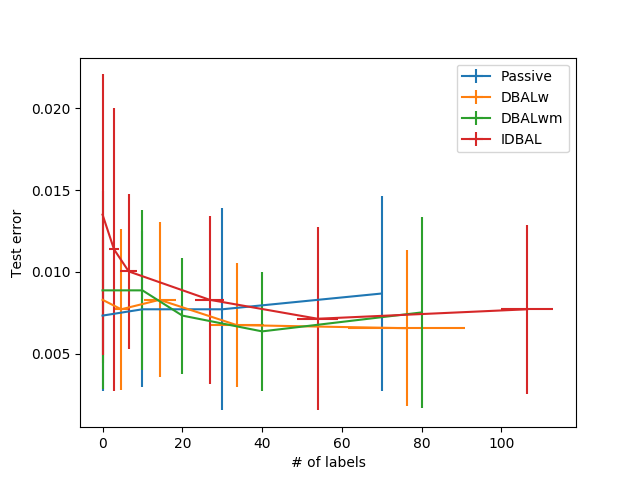}
\caption*{letter}
\end{subfigure}
\begin{subfigure}[b]{0.22\textwidth}
\includegraphics[width=\textwidth]{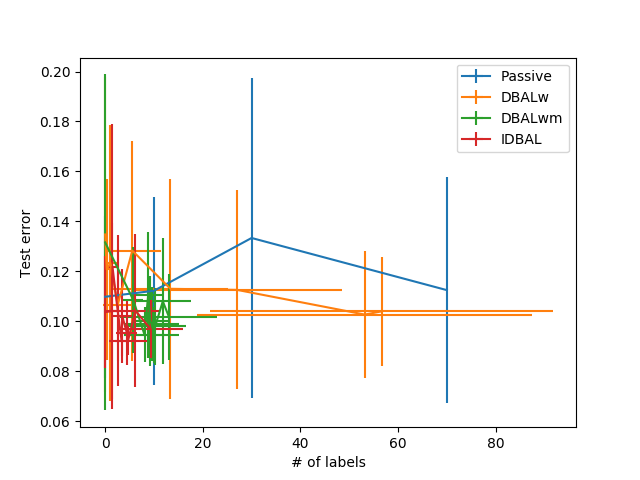}
\caption*{skin}
\end{subfigure}
\begin{subfigure}[b]{0.22\textwidth}
\includegraphics[width=\textwidth]{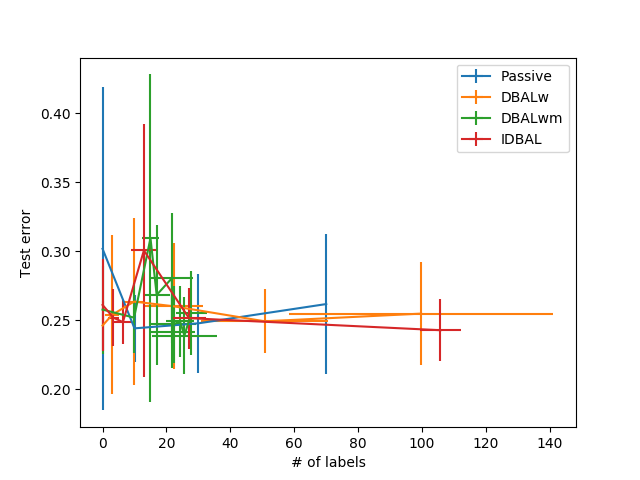}
\caption*{magic}
\end{subfigure}
\begin{subfigure}[b]{0.22\textwidth}
\includegraphics[width=\textwidth]{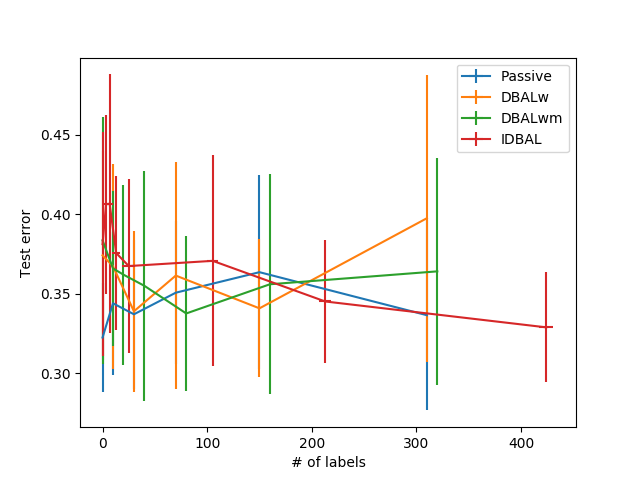}
\caption*{covtype}
\end{subfigure}
\begin{subfigure}[b]{0.22\textwidth}
\includegraphics[width=\textwidth]{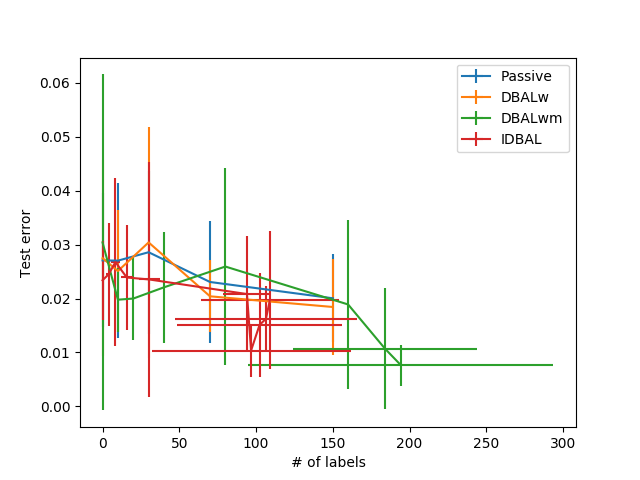}
\caption*{mushrooms}
\end{subfigure}
\begin{subfigure}[b]{0.22\textwidth}
\includegraphics[width=\textwidth]{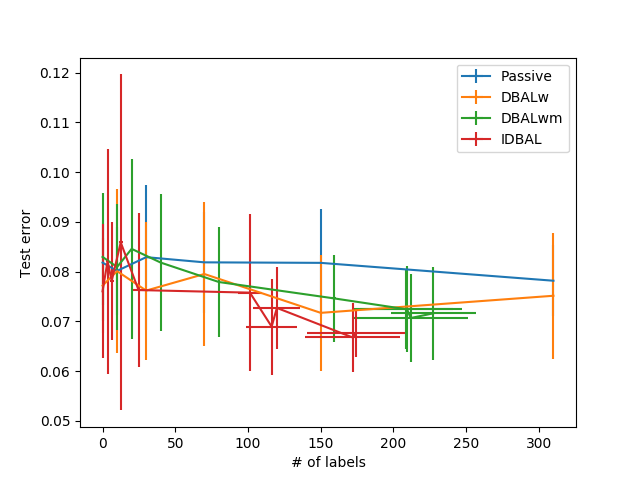}
\caption*{phishing}
\end{subfigure}
\begin{subfigure}[b]{0.22\textwidth}
\includegraphics[width=\textwidth]{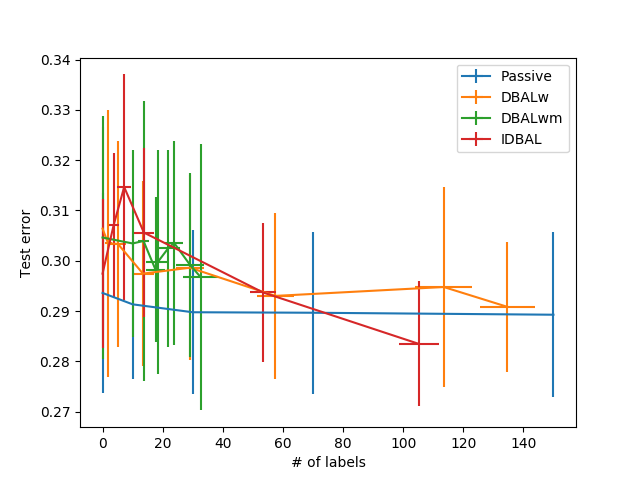}
\caption*{splice}
\end{subfigure}
\begin{subfigure}[b]{0.22\textwidth}
\includegraphics[width=\textwidth]{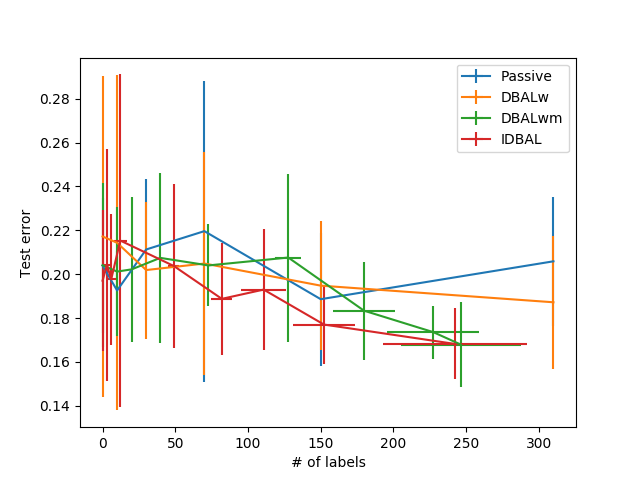}
\caption*{svmguide1}
\end{subfigure}
\begin{subfigure}[b]{0.22\textwidth}
\includegraphics[width=\textwidth]{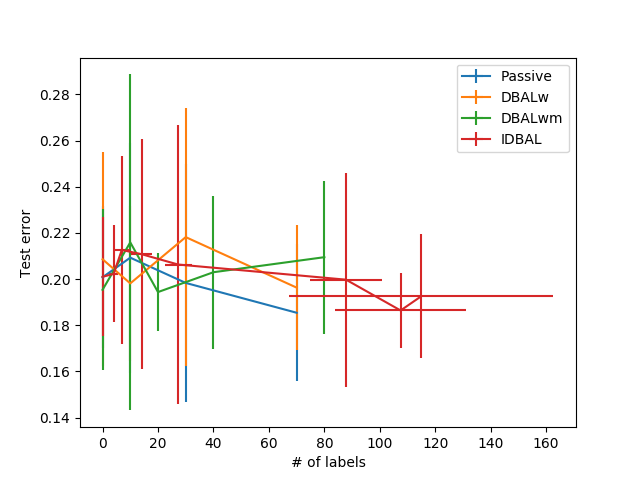}
\caption*{a5a}
\end{subfigure}
\begin{subfigure}[b]{0.22\textwidth}
\includegraphics[width=\textwidth]{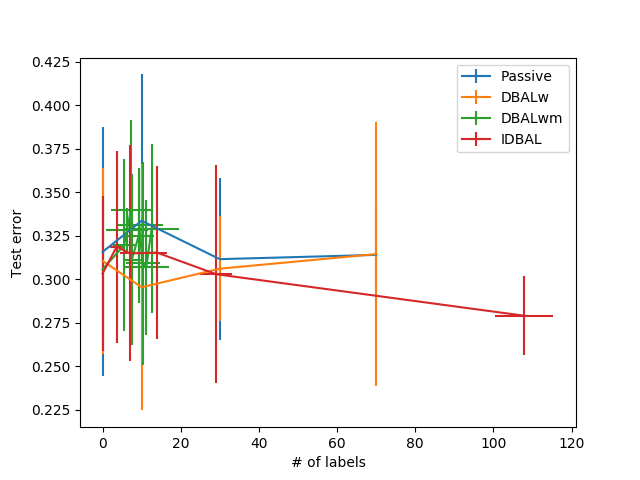}
\caption*{cod-rna}
\end{subfigure}
\begin{subfigure}[b]{0.22\textwidth}
\includegraphics[width=\textwidth]{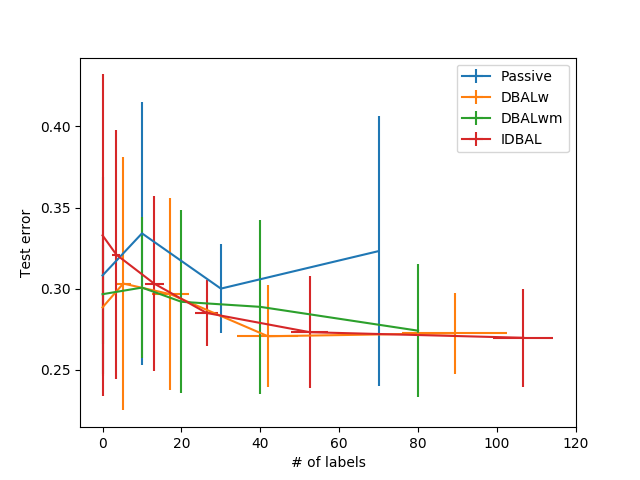}
\caption*{german}
\end{subfigure}
\caption{Test error vs. number of labels under the Uniform policy}
\end{figure*}
\begin{figure*}
\centering
\begin{subfigure}[b]{0.22\textwidth}
\includegraphics[width=\textwidth]{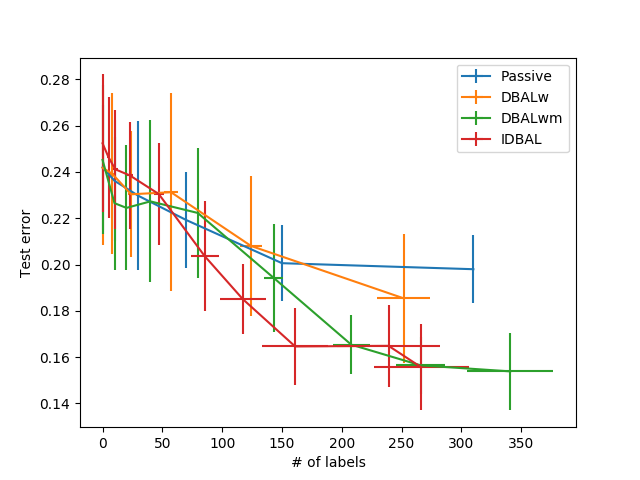}
\caption*{synthetic}
\end{subfigure}
\begin{subfigure}[b]{0.22\textwidth}
\includegraphics[width=\textwidth]{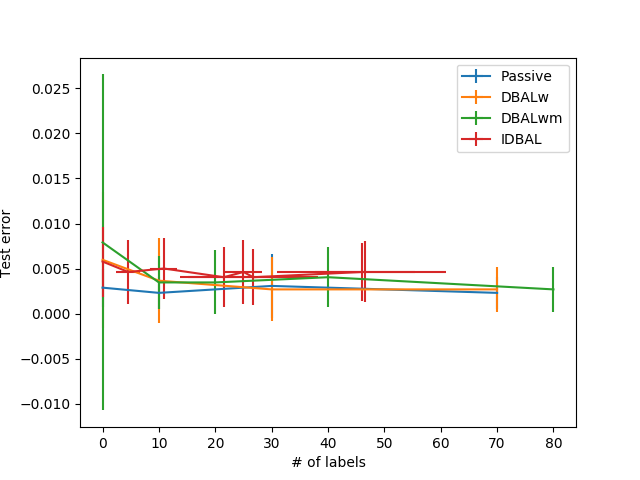}
\caption*{letter}
\end{subfigure}
\begin{subfigure}[b]{0.22\textwidth}
\includegraphics[width=\textwidth]{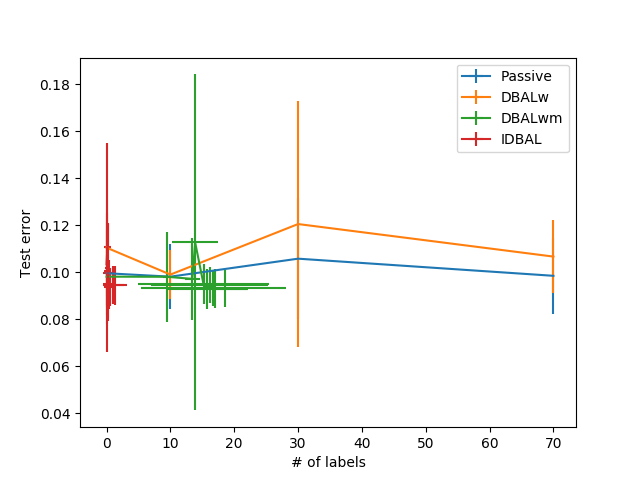}
\caption*{skin}
\end{subfigure}
\begin{subfigure}[b]{0.22\textwidth}
\includegraphics[width=\textwidth]{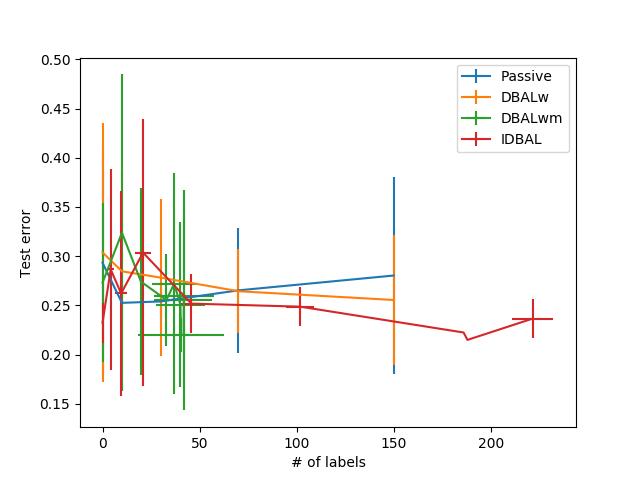}
\caption*{magic}
\end{subfigure}
\begin{subfigure}[b]{0.22\textwidth}
\includegraphics[width=\textwidth]{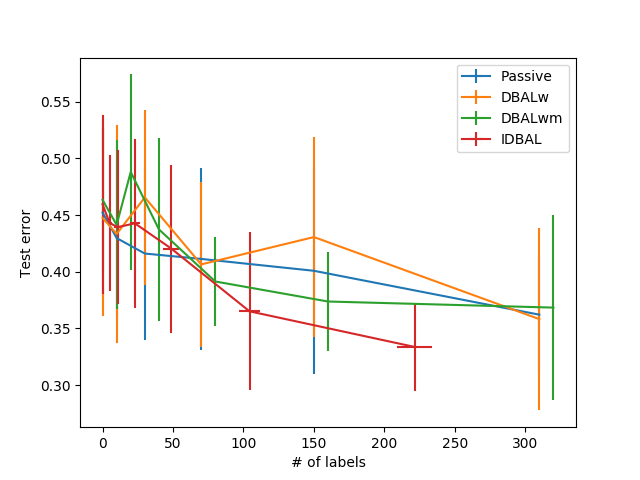}
\caption*{covtype}
\end{subfigure}
\begin{subfigure}[b]{0.22\textwidth}
\includegraphics[width=\textwidth]{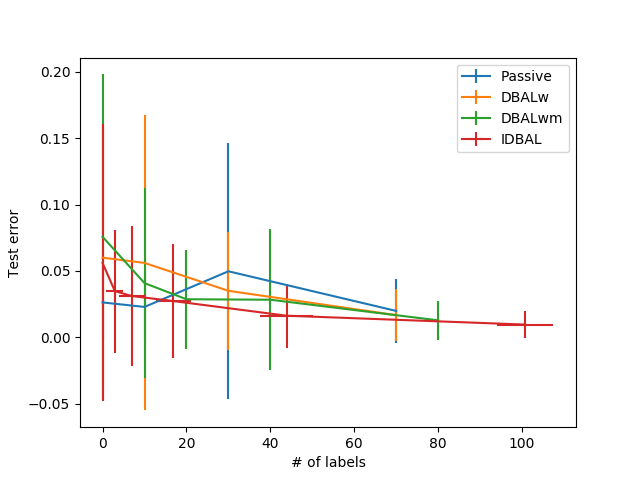}
\caption*{mushrooms}
\end{subfigure}
\begin{subfigure}[b]{0.22\textwidth}
\includegraphics[width=\textwidth]{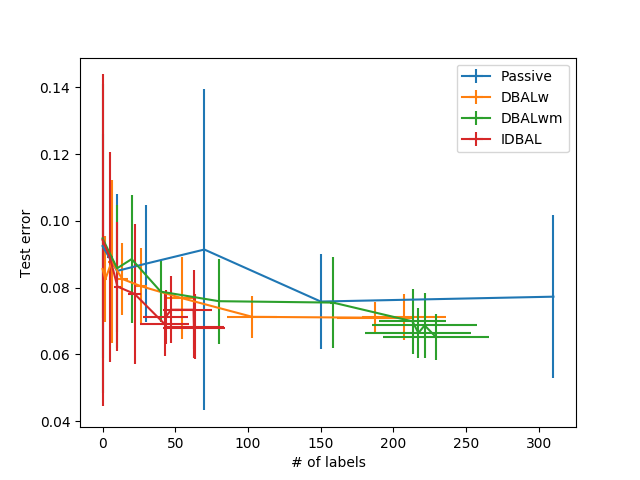}
\caption*{phishing}
\end{subfigure}
\begin{subfigure}[b]{0.22\textwidth}
\includegraphics[width=\textwidth]{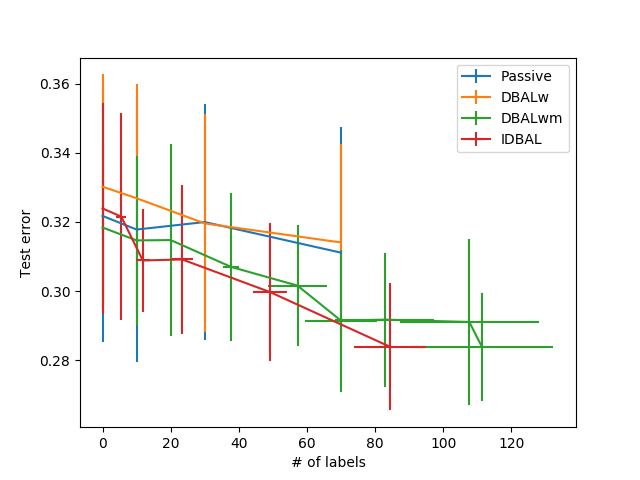}
\caption*{splice}
\end{subfigure}
\begin{subfigure}[b]{0.22\textwidth}
\includegraphics[width=\textwidth]{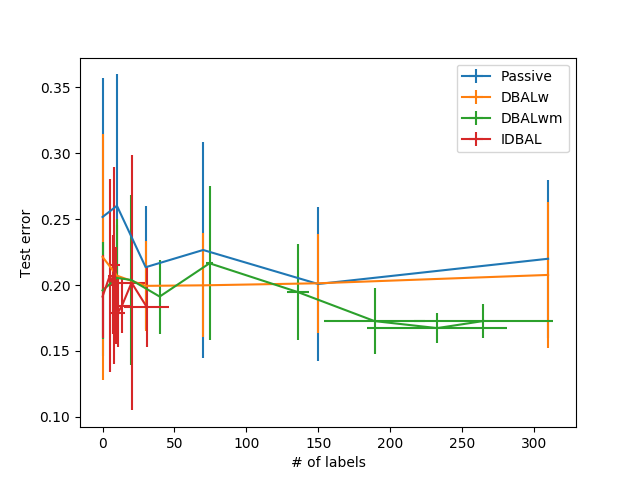}
\caption*{svmguide1}
\end{subfigure}
\begin{subfigure}[b]{0.22\textwidth}
\includegraphics[width=\textwidth]{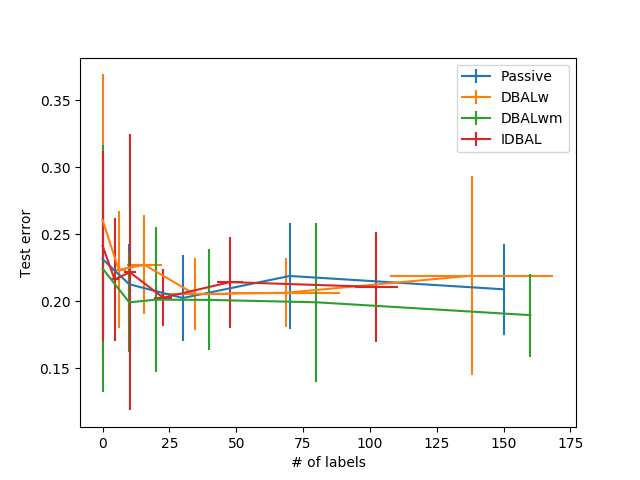}
\caption*{a5a}
\end{subfigure}
\begin{subfigure}[b]{0.22\textwidth}
\includegraphics[width=\textwidth]{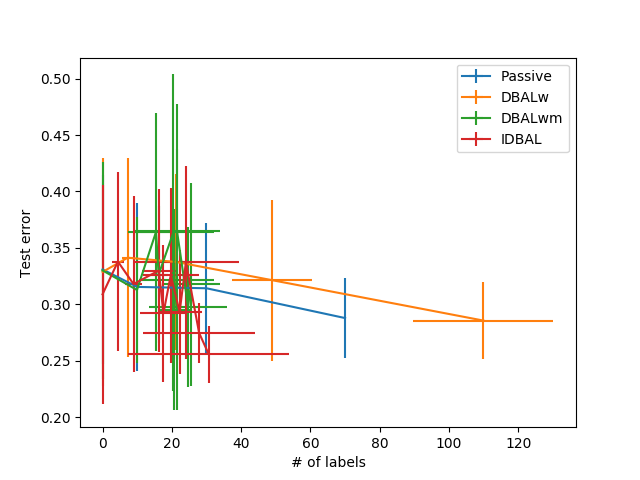}
\caption*{cod-rna}
\end{subfigure}
\begin{subfigure}[b]{0.22\textwidth}
\includegraphics[width=\textwidth]{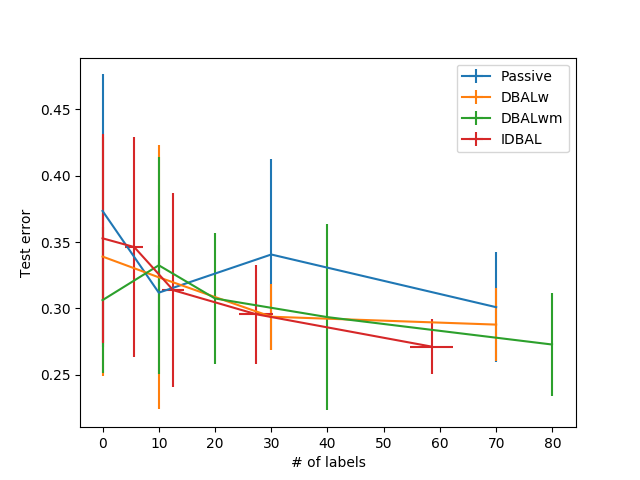}
\caption*{german}
\end{subfigure}
\caption{Test error vs. number of labels under the Uncertainty policy}
\end{figure*}
\begin{figure*}
\centering
\begin{subfigure}[b]{0.22\textwidth}
\includegraphics[width=\textwidth]{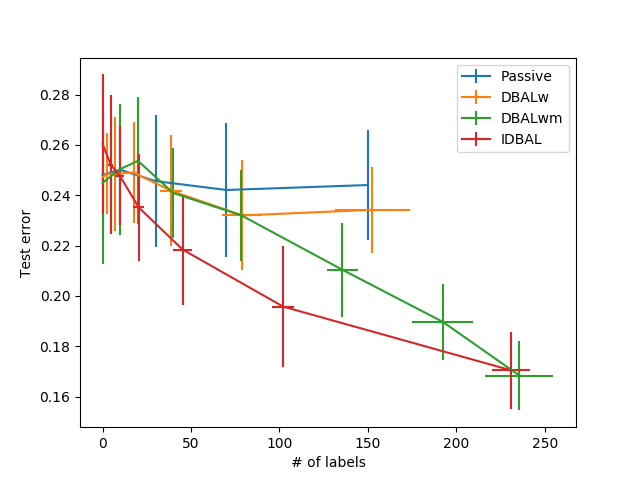}
\caption*{synthetic}
\end{subfigure}
\begin{subfigure}[b]{0.22\textwidth}
\includegraphics[width=\textwidth]{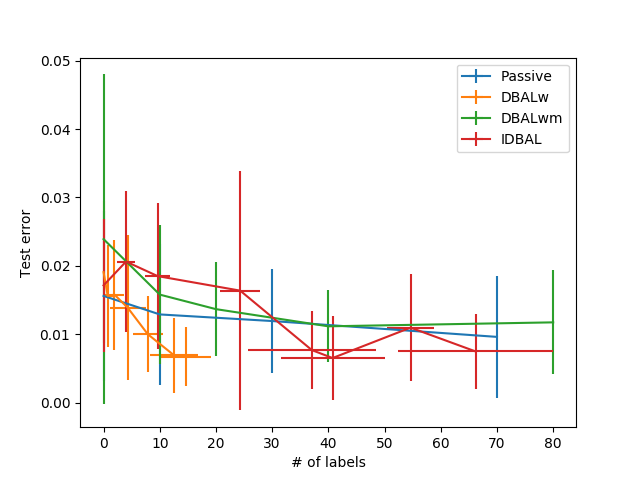}
\caption*{letter}
\end{subfigure}
\begin{subfigure}[b]{0.22\textwidth}
\includegraphics[width=\textwidth]{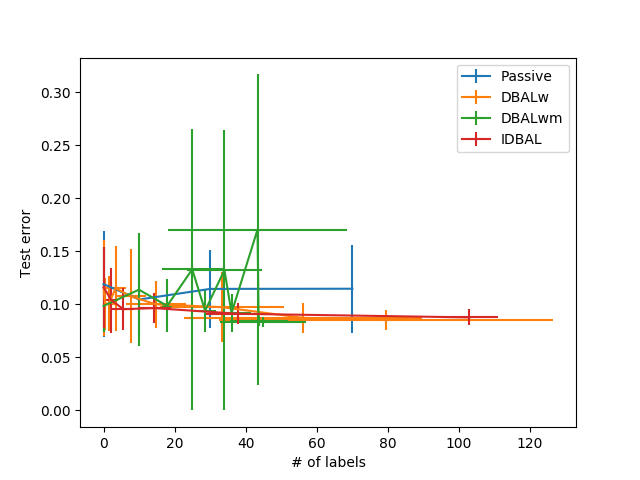}
\caption*{skin}
\end{subfigure}
\begin{subfigure}[b]{0.22\textwidth}
\includegraphics[width=\textwidth]{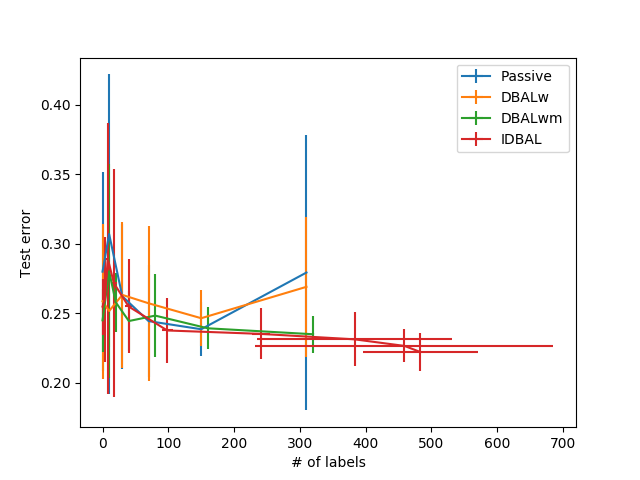}
\caption*{magic}
\end{subfigure}
\begin{subfigure}[b]{0.22\textwidth}
\includegraphics[width=\textwidth]{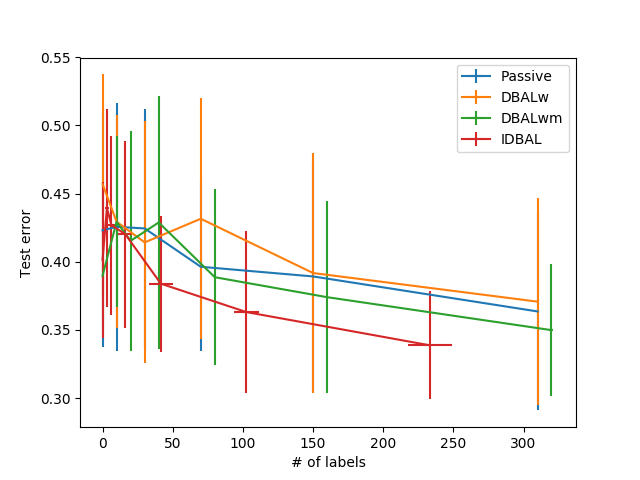}
\caption*{covtype}
\end{subfigure}
\begin{subfigure}[b]{0.22\textwidth}
\includegraphics[width=\textwidth]{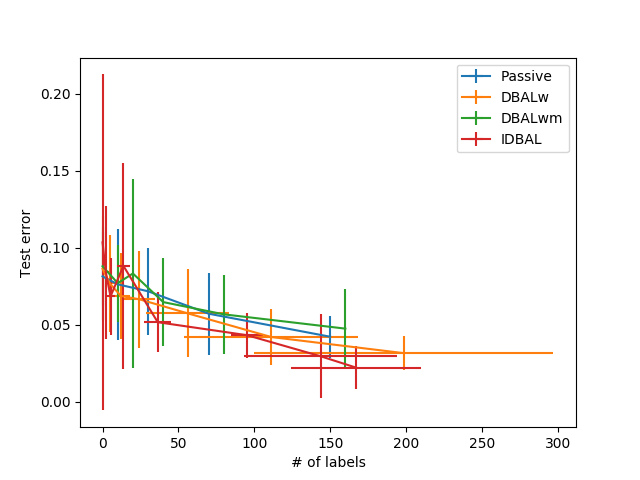}
\caption*{mushrooms}
\end{subfigure}
\begin{subfigure}[b]{0.22\textwidth}
\includegraphics[width=\textwidth]{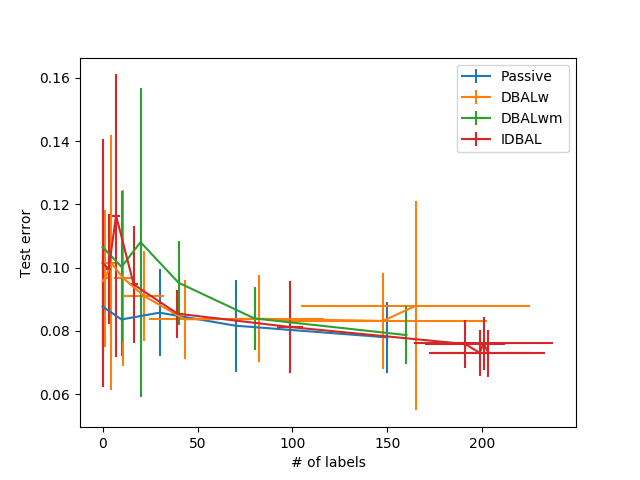}
\caption*{phishing}
\end{subfigure}
\begin{subfigure}[b]{0.22\textwidth}
\includegraphics[width=\textwidth]{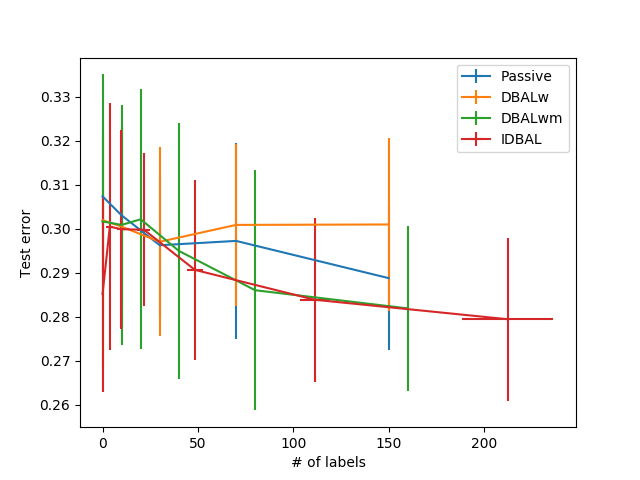}
\caption*{splice}
\end{subfigure}
\begin{subfigure}[b]{0.22\textwidth}
\includegraphics[width=\textwidth]{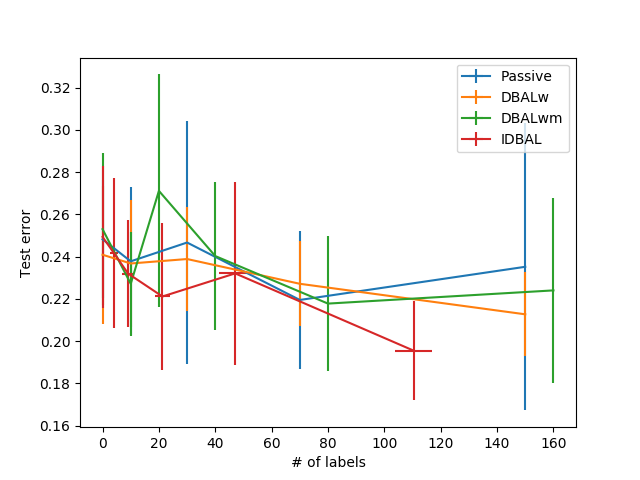}
\caption*{svmguide1}
\end{subfigure}
\begin{subfigure}[b]{0.22\textwidth}
\includegraphics[width=\textwidth]{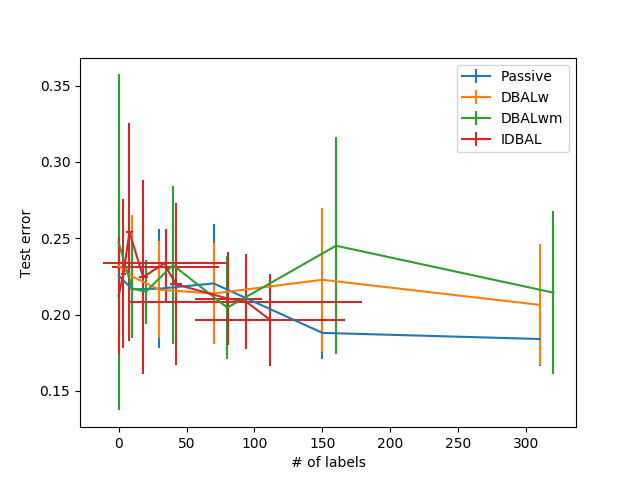}
\caption*{a5a}
\end{subfigure}
\begin{subfigure}[b]{0.22\textwidth}
\includegraphics[width=\textwidth]{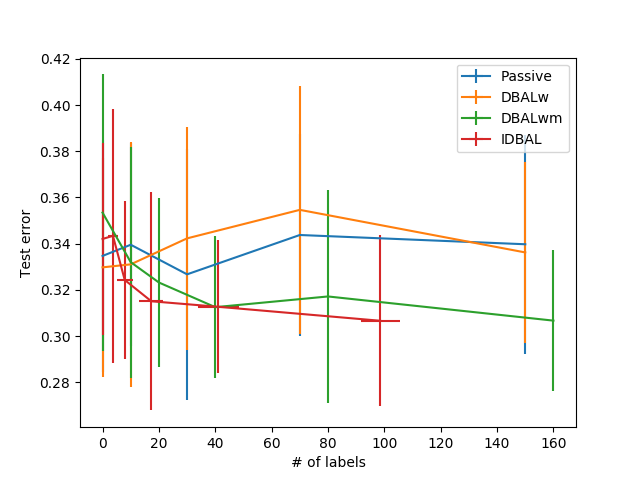}
\caption*{cod-rna}
\end{subfigure}
\begin{subfigure}[b]{0.22\textwidth}
\includegraphics[width=\textwidth]{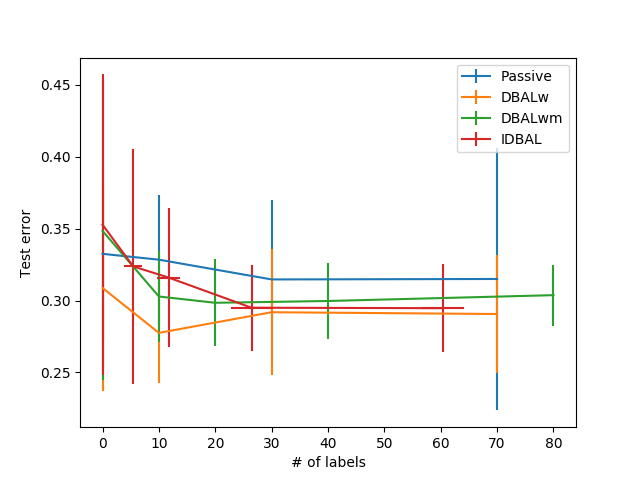}
\caption*{german}
\end{subfigure}
\caption{Test error vs. number of labels under the Certainty policy}
\end{figure*}

\end{document}